\newcommand{\xxnote}[3]{}
 \renewcommand{\xxnote}[3]{\color{#2}{#1: #3}}
\newcommand{\eref}[1]{Equation~\ref{#1}} 
\newcommand{\sref}[1]{Section~\ref{#1}} 
\newcommand{\figref}[1]{Figure~\ref{#1}} 
\newcommand{\calG}{\mathcal{G}}
\newcommand{\calR}{\mathcal{R}}
\newcommand{\calM}{\mathcal{M}}
\newcommand{\calX}{\mathcal{X}}
\newcommand{\calQ}{\mathcal{Q}}
\newcommand{\calS}{\mathcal{S}}
\newcommand{\R}{\mathbb{R}}
\newcommand{\N}{\mathbb{N}}
\newcommand{\Cfree}{\calX_{\rm free}}
\newcommand{\Cobs}{\calX_{\rm obs}}
\newcommand{\Cs}{C-space }
\newcommand{\cbest}[1]{c_{\mathrm{best}}^{#1}}
\newcommand{\cmin}[0]{c_{\mathrm{min}}}
\newcommand{\eps}[1]{\varepsilon_{#1}}
\newcommand{\imax}[0]{i_{\mathrm{max}}}
\newcommand{\vol}[1]{\mu\left(#1\right)}
\newcommand{\const}[0]{\Gamma}
\newcommand{\spacebest}[0]{\mathcal{X}_{\cbest{i}}}
\newtheorem{theorem}{Theorem}
\newtheorem{proposition}[theorem]{Proposition}
\renewcommand{\algorithmicrequire}{\textbf{Input:}}
\definecolor{myblue}{RGB}{158,202,225}
\definecolor{myred}{RGB}{252,146,114}
\definecolor{mygreen}{RGB}{161,217,155}
\definecolor{mypurple}{RGB}{190,174,212}
\title{Anytime Motion Planning on Large Dense Roadmaps with Expensive Edge Evaluations}
\author{\IEEEauthorblockN{Shushman Choudhury}
\IEEEauthorblockA{Department of Computer Science\\
Stanford University\\
shushman@stanford.edu}
\and
\IEEEauthorblockN{Oren Salzman \\ Sanjiban Choudhury\\ Christopher M. Dellin}
\IEEEauthorblockA{Robotics Institute\\
Carnegie Mellon University}
\and
\IEEEauthorblockN{Siddhartha S. Srinivasa}
\IEEEauthorblockA{Paul G. Allen School of \\ Computer Science and Engineering\\
University of Washington}}
\begin{document}

\maketitle
\thispagestyle{empty}
\pagestyle{empty}

\begin{abstract}
We propose an algorithmic framework for efficient anytime motion
planning on large dense geometric roadmaps, in domains where collision
checks and therefore edge evaluations are computationally expensive.
A large dense roadmap (graph) can typically ensure the existence of 
high quality solutions for most motion-planning problems, but the size of the 
roadmap, particularly in high-dimensional spaces, makes existing
search-based planning algorithms computationally expensive.
We deal with the challenges of expensive search and collision checking
in two ways.
First, we frame the problem of anytime motion planning on
roadmaps as searching for the shortest path over a sequence of 
subgraphs of the entire roadmap graph, generated by some densification
strategy. This lets us achieve bounded sub-optimality 
with bounded worst-case planning effort. Second, for searching each subgraph, 
we develop an anytime planning algorithm which uses a belief model to compute
the collision probability of unknown configurations and searches for paths that are 
Pareto-optimal in path length and collision probability. 
This algorithm is efficient with respect to collision checks as it searches
for successively shorter paths. We theoretically
analyze both our ideas and evaluate them individually on high-dimensional motion-planning 
problems. Finally, we apply both of these ideas together in our algorithmic 
framework for anytime motion planning, and show that it outperforms $\text{BIT}^{*}$ on high-dimensional hypercube problems.
\end{abstract}

\section{Introduction}
\label{sec:intro}

We consider the problem of finding successively shorter feasible paths (solutions) on a motion-planning roadmap
$\calG$, with $N$ vertices geometrically embedded in some configuration space (\Cs).
Specifically, we are interested in problem settings with two properties - 
the roadmap is large ($N >> 1$) and dense ($|E| = \Theta(N^2)$), and evaluating
if an edge of the graph is collision-free or not is computationally expensive.

\subsection{Motivation}
\label{sec:intro-motiv}

Our problem is motivated by previous work on sampling-based motion-planning algorithms
that pre-construct a \emph{fixed} roadmap~\cite{kavraki1996probabilistic,bohlin2000path}
to efficiently approximate the structure of the \Cs.
We want a geometric motion planning algorithm to obtain \emph{high-quality paths 
over a wide range of motion planning problems}. To find any feasible path 
over a diverse set of problems, we would want \emph{large} roadmaps, with
enough vertices (samples) to cover the configuration space sufficiently. 
To obtain the best quality paths possible with those vertices, we would want 
\emph{dense} roadmaps, with an edge between almost every pair of vertices. Note that our roadmap formulation departs from the
$\text{PRM}^{*}$ algorithm which chooses a radius of connectivity of $O(\text{log } n)$
to achieve asymptotic optimality~\citep{karaman2010incremental}. We consider
longer edges that capture as much \Cs connectivity as possible. Our experiments
at the end of \sref{sec:experiments} will support this formulation.

Collision checks are a significant computational bottleneck for sampling-based 
motion planning~\citep{sanchez2002delaying}. This is particularly true for manipulation
planning with articulated robots where collision checks involve geometric intersection tests 
on large, complex meshes. In roadmaps, this bottleneck is manifested in the evaluation
of edges, which have multiple embedded configurations to test for collision.

For large dense roadmaps with expensive edge evaluations, any shortest-path search algorithm 
must perform $|E| = \Theta(N^2)$ edge operations
to obtain its first (and only) solution. This is impractical for real-time applications.
Therefore, we consider the \emph{anytime motion planning} problem, where we find some initial
feasible solution and refine it as time permits, using the quality of the current path
(based on our objective function) as a bound for future paths. This 
accommodates a time budget for planning where the planning module returns the best solution
found when the available time elapses.

\begin{figure*}
	\centering
	\captionsetup[subfigure]{justification=centering}
	\begin{subfigure}[b]{0.99\columnwidth}
		\centering
		\includegraphics[width=0.45\textwidth]{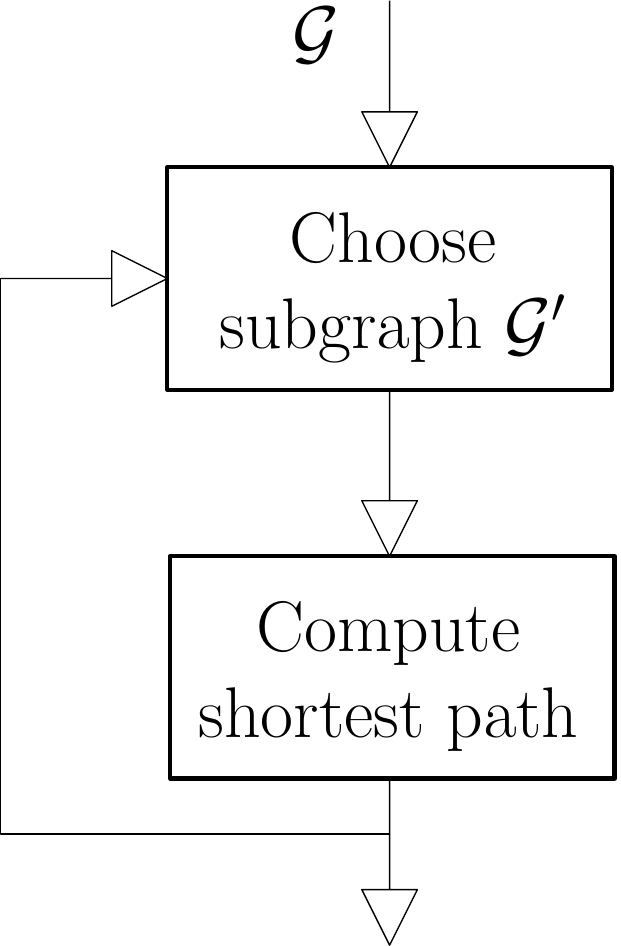}
		\caption{}
		\label{fig:intro-flow}
	\end{subfigure}
	\begin{subfigure}[b]{0.99\columnwidth}
		\centering
		\includegraphics[width=\textwidth]{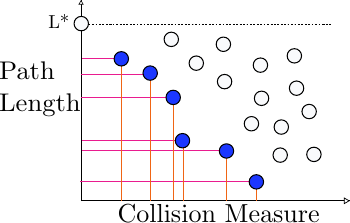}
		\caption{}
		\label{fig:intro-belief}
	\end{subfigure}
	\caption{
		The two key ideas to our approach : (\subref{fig:intro-flow}) Densification generates a sequence of increasingly dense subgraphs of
		the entire roadmap graph, which are searched for their shortest paths
		(\subref{fig:intro-belief}) We reason about the tradeoff between path length and collision measure (related to collision probability)
		for candidate paths. We search for paths that are Pareto-optimal in
		the two quantities and select them for lazy evaluation.
	}
	\label{fig:intro-ideas}
	
\end{figure*}

\subsection{Key Ideas}
\label{sec:intro-ideas}

Our approach is based on two key ideas, depicted in \figref{fig:intro-ideas}. Each idea addresses one of the characteristics
of the problem setting we are interested in. 

First, we approach the problem of anytime motion planning on roadmaps by providing
graph-search-based planning algorithms with a sequence of subgraphs 
of the roadmap, generated using some \emph{densification strategy}~\citep{choudhury2017densification}.
At each iteration, we run a shortest-path algorithm on the current subgraph to obtain an increasingly 
tighter approximation of the true shortest path.

Existing approaches to anytime planning on graphs~\citep{likhachev2004ara,likhachev2005anytime,BSHG11} 
modify the objective function, thereby sacrificing 
optimality for quicker solutions. Some of these approaches guarantee bounded sub-optimality.
Searching with a modified objective function, however, has worst-case complexity
$\Theta(N^2)$, which is unacceptable for large dense roadmaps in real-time applications.
Also, there is no formal guarantee that these approaches will decrease search time
and they may still search all edges of a given graph~\citep{WR12}. Therefore, we explicitly 
solve sub-problems of the entire problem, and gradually increase the size of each sub-problem 
until we solve the original problem. On a roadmap, this naturally translates to searching over
a sequence of increasing subgraphs of the entire roadmap.

Second, we try to
minimize the number of collision checks while searching for the shortest feasible path in any
roadmap. Performing a collision check provides exact information but is computationally expensive.
Therefore, we use a model of the \Cs to estimate the probability of unevaluated configurations to 
be free or in collision.  We ensure that updating and querying the model is inexpensive relative to
a collision check.

Previous works have approached this by using the probability of collision as a heuristic~\citep{nielsen2000two},
optimistically searching for the shortest path with lazy collision checking~\citep{bohlin2000path},
and using collision probabilities learned from previous instances to filter out unlikely configurations
~\citep{pan2013faster}. They lack, however, a way to connect the two problems of finding a feasible 
path quickly and finding the shortest feasible path in the roadmap. We do this by considering both
path length and collision probability in our search objective functions, gradually
prioritizing path length to search for successively shorter paths.

\subsection{Contributions}
\label{sec:intro-contrib}

The following are our major contributions in this paper:

\begin{itemize}
	\item We motivate the use of a large, dense roadmap for motion planning. In particular 
	we propose using the same roadmap constructed for a particular robot's configuration 
	space over a wide range of problems that the robot will be required to solve. This lets 
	us take advantage of the structure embedded in the roadmap, as well as various 
	preprocessing benefits of using the same roadmap repeatedly (\sref{sec:problem}).

	\item We present several densification strategies to generate the sequence of subgraphs
	of the entire roadmap. For the specific case where the samples are generated from 
	a low-dispersion deterministic Halton sequence (\sref{sec:background-dispersion}), we analyse the tradeoff 
	between effort and bounded sub-optimality (\sref{sec:densification}).

	\item We present an anytime planning algorithm (for a reasonably sized roadmap) called Pareto
	Optimal Motion Planner (POMP). It uses a model to maintain a belief over configuration space 
	collision probabilities and efficiently searches for paths 
	that are Pareto-optimal in collision probability and path length, eventually computing the shortest
	feasible path on the roadmap. (\sref{sec:cspacebelief}).

\end{itemize}

\noindent
Our key ideas are useful and well-motivated in their own right, and we extensively evaluate them individually, 
thereby demonstrating favourable
performance against contemporary motion planning algorithms (\sref{sec:experiments}).
Densification, irrespective of the  underlying shortest-path search algorithm, is an efficient 
way to organize the search over a large dense roadmap. POMP is an efficient algorithm in domains
with expensive collision-checks, irrespective of whether it is used stand-alone or in a larger
anytime planning context.

This paper is an extended version of previous works that discussed densification~\citep{choudhury2017densification}
and POMP~\citep{choudhury2016pareto}.
We propose using POMP as
the underlying search algorithm in our roadmap densification framework.
POMP is particularly well-suited 
for this: it is efficient with respect to collision checks, it computes
the shortest feasible path on the roadmap it searches, and it builds up an increasingly
accurate model of configuration space with each batch, which aids searches
in future batches. Densification imposes anytime behaviour and 
POMP is an anytime algorithm, therefore our overall framework
has two-level anytime behaviour. We implement this framework efficiently and outperform the $\text{BIT}^{*}$
algorithm~\citep{gammell2014batch} on a range of high-dimensional planning problems. Finally, we conclude in
\sref{sec:conclusion} by discussing the limitations of our framework and 
questions for future research.


\section{Background}
\label{sec:background}

Our algorithms and analyses incorporate ideas from several topics. We are planning with 
sampling-based roadmaps and we our analysis considers the case when samples are generated from a low-dispersion deterministic
sequence. We are interested in efficiently searching these roadmaps and in their finite-time
properties. We also use a configuration space belief model to estimate collision probabilities.
In this section, we briefly outline related work in each of these areas.

\subsection{Sampling-based motion planning}
\label{sec:background-sampling}

Sampling-based planning approaches build a roadmap (graph) in the configuration space, where vertices are configurations and edges are local paths connecting configurations. 
A path is found by searching this roadmap while checking if the vertices and edges are collision free.
Initial algorithms such as  PRM~\citep{kavraki1996probabilistic} and RRT~\citep{LK99} were concerned with finding \emph{a feasible} solution.
Recently, there has been growing interest in finding high-quality solutions.
Variants of the PRM and RRT algorithms, called $\text{PRM}^{*}$ and $\text{RRT}^{*}$~\citep{karaman2010incremental}
were proved to converge to the optimal solution asymptotically.

However, the running times of these algorithms are often significantly higher than their non-optimal counterparts.
Thus, subsequent algorithms have been suggested to increase the rate of convergence to high-quality solutions.
They use different approaches such as 
lazy computation~\citep{bohlin2000path,janson2015fast,dellin2016unifying},
informed sampling~\citep{GSB14},
pruning vertices~\citep{gammell2014batch},
relaxing optimality~\citep{SH16}, exploiting local information~\citep{choudhury2016regionally}
and lifelong planning with heuristics~\citep{koenig2004lifelong}.

\subsection{Dispersion}
\label{sec:background-dispersion}

The \emph{dispersion} of $\calS$, a sequence of $n$ points, is defined as 
\begin{equation}
\label{eq:dispersion-def}
D_n(\calS) = \sup_{x \in \calX} \min_{s \in \calS} \rho(x, s).
\end{equation}
It can be thought of as the radius of the largest empty ball (by some distance metric $\rho$)
that can be drawn around any point in the space $\calX$ without intersecting any point
in $\calS$. A lower dispersion implies a better \emph{coverage} of the space by the points in $\calS$.
When $\calX$ is the $d$-dimensional 
Euclidean space and $\rho$ is the Euclidean distance, deterministic sequences  with dispersion of order $O(n^{-1/d})$ exist.
A simple example is a set of points lying on grid or a lattice.

Other low-dispersion deterministic sequences exist which also  have low \emph{discrepancy}, i.e. they appear to be random.
Specifically, the discrepancy of a set of points is the deviation of the set from the uniform random distribution.
The corresponding mathematical definition of discrepancy is
\begin{equation}
\label{eq:discrepancy-def}
\mathbb{D}_n(\calS,\calR) = \sup_{R \in \calR} \{ \left| \frac{\left|\calS \cap R \right|}{n} - \frac{\mu(R)}{\mu(\calX)} \right| \},
\end{equation}
where $\calR$ is a collection of subsets of $\calX$ called the \emph{range space}. It is typically taken to be the 
set of all axis-aligned rectangular subsets. The $\mu$ operator refers to the \emph{Lebesgue Measure} or
the generalized volume of the operand.

One such example is the \emph{Halton sequence}~\citep{H60}. We will use them extensively for our analysis because
they have been studied in the context of deterministic motion planning~\citep{JIP15,BLOY01}. 
Halton sequences are constructed by taking $d$ prime numbers, called generators, one for each dimension.
Each generator $g$ induces a sequence, called a Van der Corput sequence.
The $k$'th element of the Halton sequence is then constructed by juxtaposing the $k$'th element of each of the $d$ Van der Corput sequences.
For Halton sequences, tight bounds on dispersion exist. Specifically, $D_n(\calS) \leq p_d \cdot n^{-1/d}$ where $p_d \approx d \ \text{log } d$ is the $d^{th}$ prime number.
Subsequently in this paper, we will use $D_n$ (and not~$D_n(\calS)$) to denote the dispersion of the first $n$ points of~$\calS$.

Previous work bounds the length of the shortest path computed over an $r$-disk roadmap constructed using a low-dispersion deterministic sequence~\citep[Thm2]{JIP15}.
Specifically, given start (source) and goal (target) vertices, consider $\Gamma$, the set of all paths connecting them which 
have $\delta$-clearance for some $\delta$. A path has clearance $\delta$ if every point on the path is at a distance of at least
$\delta$ away from every obstacle. Set $\delta_{\text{max}}$ to be the maximal clearance over all such $\delta$.
If $\delta_{\text{max}} > 0$, then
for all $0<\delta \leq \delta_{\text{max}}$ 
set $c^*(\delta)$ to be the cost of the shortest path in $\Gamma$ with $\delta$-clearance.
Let $c(\ell,r)$ be the length of the path returned by a shortest-path algorithm on $\calG(\ell,r)$ with $\calS(\ell)$ having dispersion $D_\ell$.
For $2D_\ell < r < \delta$, we have that
\begin{equation}
\label{eq:dispersion_suboptimality}
c(\ell,r)
\leq 
\left( 
	1  + \frac{2D_\ell}{r - 2D_\ell}
\right) \cdot c^*(\delta).
\end{equation}

Notably, for $n$ random i.i.d. points, 
the dispersion is $O\left( (\log n / n )^{1/d}\right)$~\citep{deheuvels1983strong,N92}, which is strictly larger than
that of deterministic low-dispersion sequences.

For domains other than the unit hypercube, the insights from the analysis will generally hold. However, the dispersion bounds may become far more complicated depending on the domain, and the distance metric would need to be scaled accordingly. The quantitative bounds may then be difficult to deduce analytically.

\begin{figure*}[th]
	\centering
	\captionsetup[subfigure]{justification=centering}
	\begin{subfigure}[b]{0.66\columnwidth}
		\centering
		\includegraphics[width=0.9\textwidth]{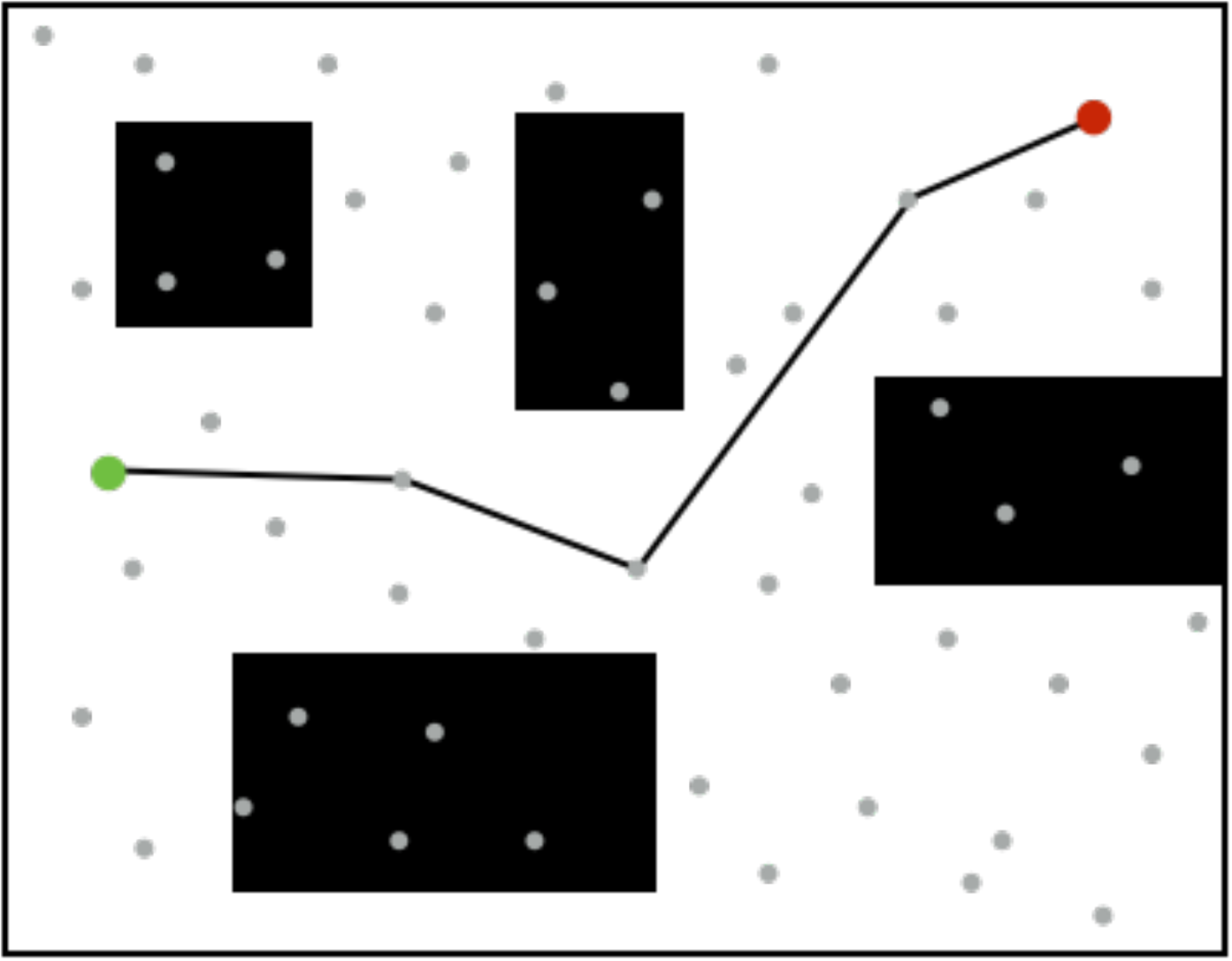}
		\caption{}
		\label{fig:approach-motiv1}
	\end{subfigure}
	\begin{subfigure}[b]{0.66\columnwidth}
		\centering
		\includegraphics[width=0.9\textwidth]{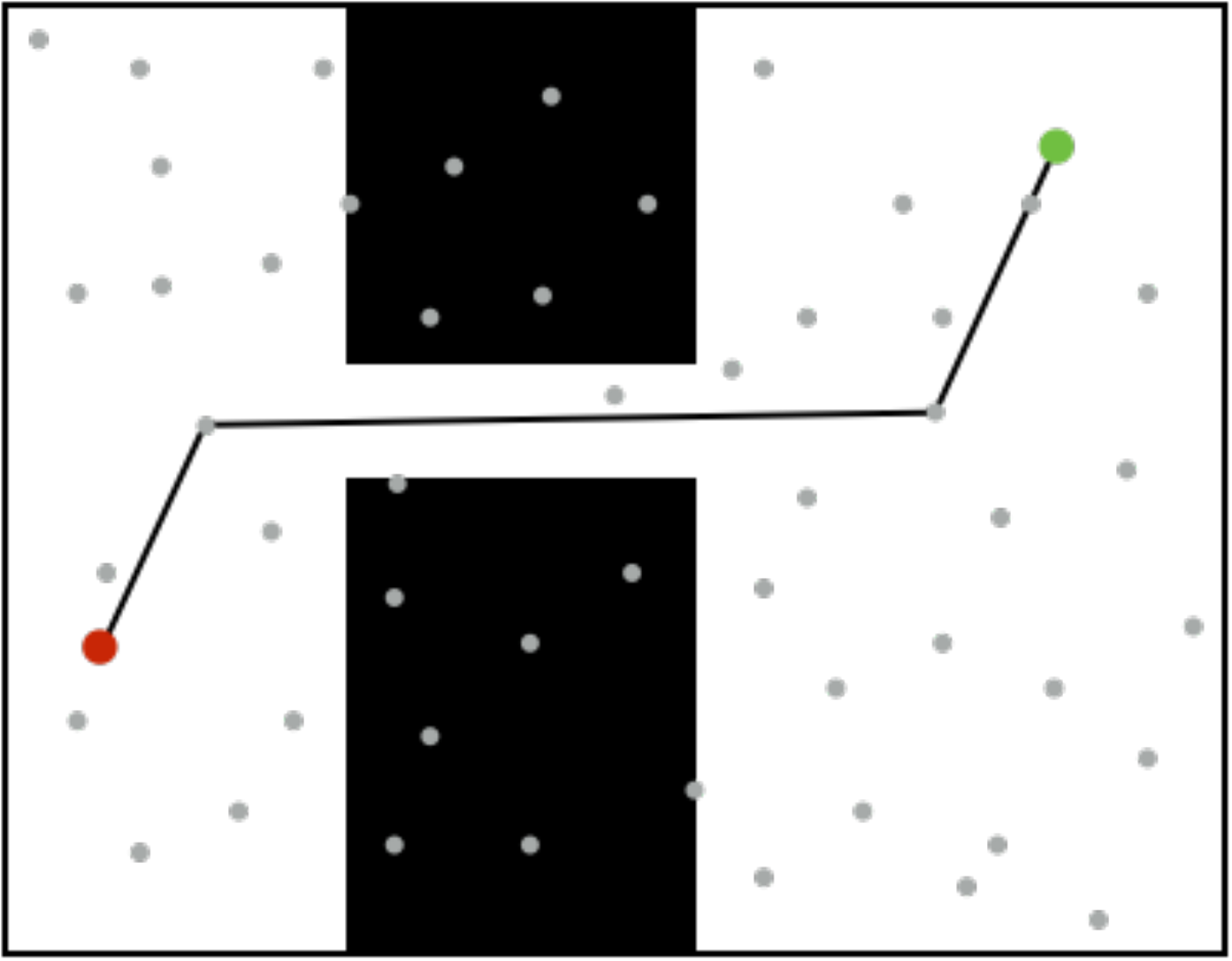}
		\caption{}
		\label{fig:approach-motiv2}
	\end{subfigure}
	\begin{subfigure}[b]{0.66\columnwidth}
		\centering
		\includegraphics[width=0.9\textwidth]{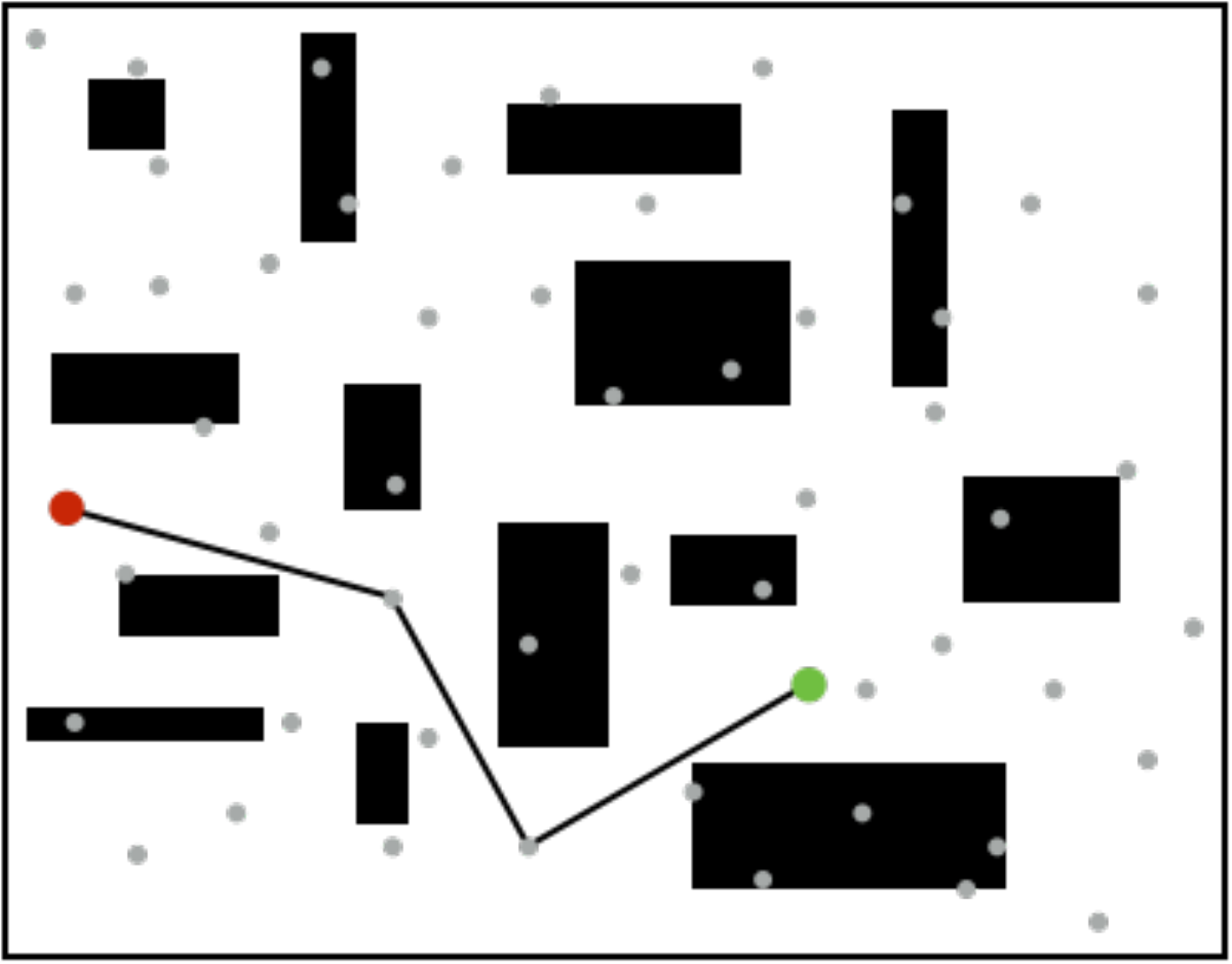}
		\caption{}
		\label{fig:approach-motiv3}
	\end{subfigure}
	\caption
	{
		Consider the range of toy problems shown here : (\protect\subref{fig:approach-motiv1}) A few obstacles with large gaps between them (\protect\subref{fig:approach-motiv2}) A single narrow passage 
		(\protect\subref{fig:approach-motiv3}) Several obstacles with narrow gaps between them.
		A large dense roadmap is able to find 
		feasible paths of good quality over such a diverse set of problems.
	}
	\label{fig:approach-motiv}
\end{figure*}

\subsection{Finite-time properties of sampling-based algorithms}
\label{sec:background-finite}

Extensive analysis has been done on \emph{asymptotic} properties of sampling-based algorithms,
i.e. properties such as connectivity and optimality, when the number of samples tends to infinity~\citep{KKL98, KF11}.
We are interested in bounding the quality of a solution obtained using a \emph{fixed} roadmap for a finite number of samples.
When the samples are generated from a \emph{deterministic low-dispersion} sequence, a closed-form solution 
bounds the quality of the best solution in an $r$-disk roadmap~\citep[Thm2]{JIP15}. The bound is a function of $r$, the number of vertices $n$ and the dispersion of the set of points used.
Similar bounds have been provided~\citep{DMB15} when randomly sampled i.i.d points are used.
Specifically, these bounds are for a PRM whose roadmap is an $r$-disk graph for a \emph{specific} radius $r = c \cdot \left(\log n / n\right)^{1/d}$ where $n$ is the number of points, $d$ is the dimension and $c$ is some constant.
There is also a bound on the probability that the quality of the solution will be larger than a given threshold.

\subsection{Efficient roadmap search algorithms}
\label{sec:background-efficient}

We are interested in roadmap search algorithms that attempt to reduce the amount of 
computationally expensive edge expansions performed.
This is typically done using heuristics such 
as in A*~\citep{hart1968formal},
Iterative Deepening A*~\citep{K85} and
Lazy Weighted A*~\citep{CPL14}.
Some of these algorithms, such as Lifelong Planning A*~\citep{koenig2004lifelong} allow recomputing the shortest path in an efficient manner when the graph undergoes changes.
\emph{Anytime} variants of A* such as
Anytime Repairing A*~\citep{likhachev2004ara}
and
Anytime Nonparametric A*~\citep{BSHG11}
efficiently run a succession of A* searches, each with an inflated heuristic.
A variant of Anytime D*~\citep{van2006anytime} is useful for efficiently recomputing 
paths in dynamic environments.
This potentially obtains a fast approximation and refines its quality as time permits
but in the worst-case these algorithms may have to search the entire graph~\citep{WR12}.
For a unifying formalism of such algorithms relevant to explicit roadmaps, in settings where edge evaluations are expensive, and for additional references, see~\citep{dellin2016unifying}.
Intelligently pre-constructing the roadmap helps achieve efficiency in high-dimensional C-spaces~\citep{SSH16b,SSH16}.
Another line of work has been to use roadmap spanners to produce sparse subgraphs that guarantee asymptotic
near-optimality and probabilistic completeness~\citep{marble2013asymptotically,dobson2014sparse}.

\subsection{Planning with Configuration Space Belief Models}
\label{sec:background-belief}

The probability of collision of a path is derived from an approximate model of the configuration space of the robot. Since we explicitly seek to minimize collision checks, we build up an incremental model using data from previous collision tests, instead of sampling several, potentially irrelevant configurations apriori. This idea has been studied in similar contexts~\citep{burns2003information,burns2005sampling}. Furthermore, the evolving probabilistic model can be used to guide future searches towards likely free regions. Previous work has analyzed and utilized this exploration-exploitation paradigm for faster motion planning \citep{rickert2008balancing,knepper2012real,pan2013faster,arslan2015dynamic}.


\section{Problem Definition}
\label{sec:problem}

Let~$\calX$ denote a $d$-dimensional C-space, $\Cfree$ the collision-free portion of $\calX$, 	$\Cobs = \calX \setminus \Cfree$ 
its complement
and
let $\zeta: \calX \times \calX \rightarrow \R$ be some distance metric.
For simplicity, we assume that $\calX = [0,1]^d$ and that $\zeta$ is the Euclidean norm.
Let  $\calS = \{ s_1, \ldots, s_N\}$ be a sequence of points 
where $s_i \in \calX$
for some $N \in \N$ 
and denote by $\calS(\ell)$ the first $\ell$ elements of $\calS$. 
We define the $r$-disk graph $\calG(\ell, r) = (V_{\ell},E_{\ell,r})$ 
where
$V_{\ell} = \calS(\ell)$, 
$E_{\ell,r} = \{(u,v) \ | \ u,v \in V_\ell \text{ and } \zeta(u,v) \leq r \}$
and each edge $(u,v)$ is of length $\zeta(u,v)$.
See~\citep{KF11, SSH16c} for various properties of such graphs in the context of motion planning.
Our definition assumes that $\calG$ is embedded in $\calX$. 
Set~$\calG = \calG(N, \sqrt{d})$, namely, the complete\footnote{Using a radius of $\sqrt{d}$ ensures that every two points will be connected due to the assumption that $\calX = [0,1]^d$ and that $\zeta$ is Euclidean.} graph defined over $\calS$. 
In a complete graph there is an edge between every pair of vertices.

To save computation, we do not evaluate the roadmap apriori, 
so we do not know if any of its vertices or edges are in $\Cfree$ or $\Cobs$. 
This setup is similar to that of LazyPRM \citep{bohlin2000path}.
We propose using the same roadmap (with online evaluations for each specific environment)
over a range of problem instances. Even though we cannot do apriori evaluation
of the roadmap for each new environment, using a fixed roadmap structure allows us to 
do some environment-agnostic preprocessing
that other classes of approaches like tree-growing~\citep{kuffner2000rrt} or trajectory
optimization~\citep{ratliff2009chomp} cannot. 
For instance, we can pre-compute all the nearest neighbors for each of the vertices
in the roadmap, and filter out any configurations in self-collision.
Previous work has shown that both the computation of nearest neighbours~\citep{kleinbort2016collision}
and the detection of self-collision~\citep{srinivasa2016system} are expensive
components of motion planning algorithms.

As we have motivated earlier (\sref{sec:intro}), large dense roadmaps
can achieve high-quality solutions over a wide range of problems. See \figref{fig:approach-motiv}
for an illustration of this with some $\mathbb{R}^{2}$ problems.
For ease of analysis we assume that the roadmap is complete, but our densification strategies and analysis can be extended to \emph{dense} roadmaps that are not complete.

A query $\calQ$ is a scenario with start and goal configurations. 
Let the start configuration be $s_1$ and the goal be $s_2$. 
The obstacles induce a mapping $\calM: \calX \rightarrow \{\Cfree,\Cobs\}$ called a \emph{collision detector} which checks if a configuration or edge is collision-free or not.
Typically, edges are checked by densely sampling along the edge (at some minimum resolution), and performing expensive collision checks for each sampled configuration, hence the term \emph{expensive edge evaluation}. 
A feasible path is denoted by $\gamma: [0,1] \rightarrow \Cfree$ where $\gamma[0] = s_1$ and $\gamma[1] = s_2$.
For the graph $\calG$, a path is \emph{feasible} if every included edge is in $\Cfree$.
Slightly abusing this notation, 
set $\gamma(\calG(\ell, r))$ to be the shortest collision-free path from $s_1$ to $s_2$ that can be computed in $\calG(\ell, r)$,
its clearance as $\delta(\calG(\ell, r))$ and denote by
$\gamma^* = \gamma(\calG)$ and
$\delta^* = \delta(\calG)$
the shortest path and its clearance that can be computed in $\calG$, respectively.

The anytime motion planning problem calls for finding a sequence of successively shorter
feasible paths  $\gamma_0$, $\gamma_1 \ldots$ in $\calG$,  converging
to $\gamma^{*}$.
We assume that $N = |\calS|$ is sufficiently large, and the roadmap covers the space well enough, so that for any reasonable set of obstacles, there are multiple feasible paths to be obtained between start and goal.


\section{Densification}
\label{sec:densification}

We now discuss in detail our first key idea: densification strategies for generating
subgraphs of the complete roadmap $\calG$~\citep{choudhury2017densification}. We focus on $r$-disk subgraphs of $\calG$, 
i.e. graphs defined by a specific set of vertices where any two vertices $u$ and $v$ are connected if their 
mutual distance $\zeta(u,v)$ is at most $r$. This induces a space of subgraphs (\figref{fig:ve_batching}) defined 
by the number of vertices and the connection radius (which, in turn, defines the number of edges).

We start by discussing the space of subgraphs, its boundaries and regions. 
Subsequently, we introduce two densification strategies---edge batching, which adds batches
of edges via an increasing radius of connectivity, and vertex batching, which adds batches
of vertices and searches over each complete subgraph.
These two have complementary behaviours with respect to problem difficulty, which motivates
a third strategy that is more robust to problem difficulty, which we call hybrid batching.

To evaluate densification (\sref{sec:experiments-densification}) we use an incremental path-planning algorithm that allows us to efficiently recompute shortest paths.
However, any alternative shortest-path algorithm may be used. 
We emphasize again that we focus on the meta-algorithm of choosing which subgraphs to search.
The details of the actual implementation for experiments are provided subsequently.

\begin{figure}
    \centering
    \includegraphics[width=0.43\textwidth]{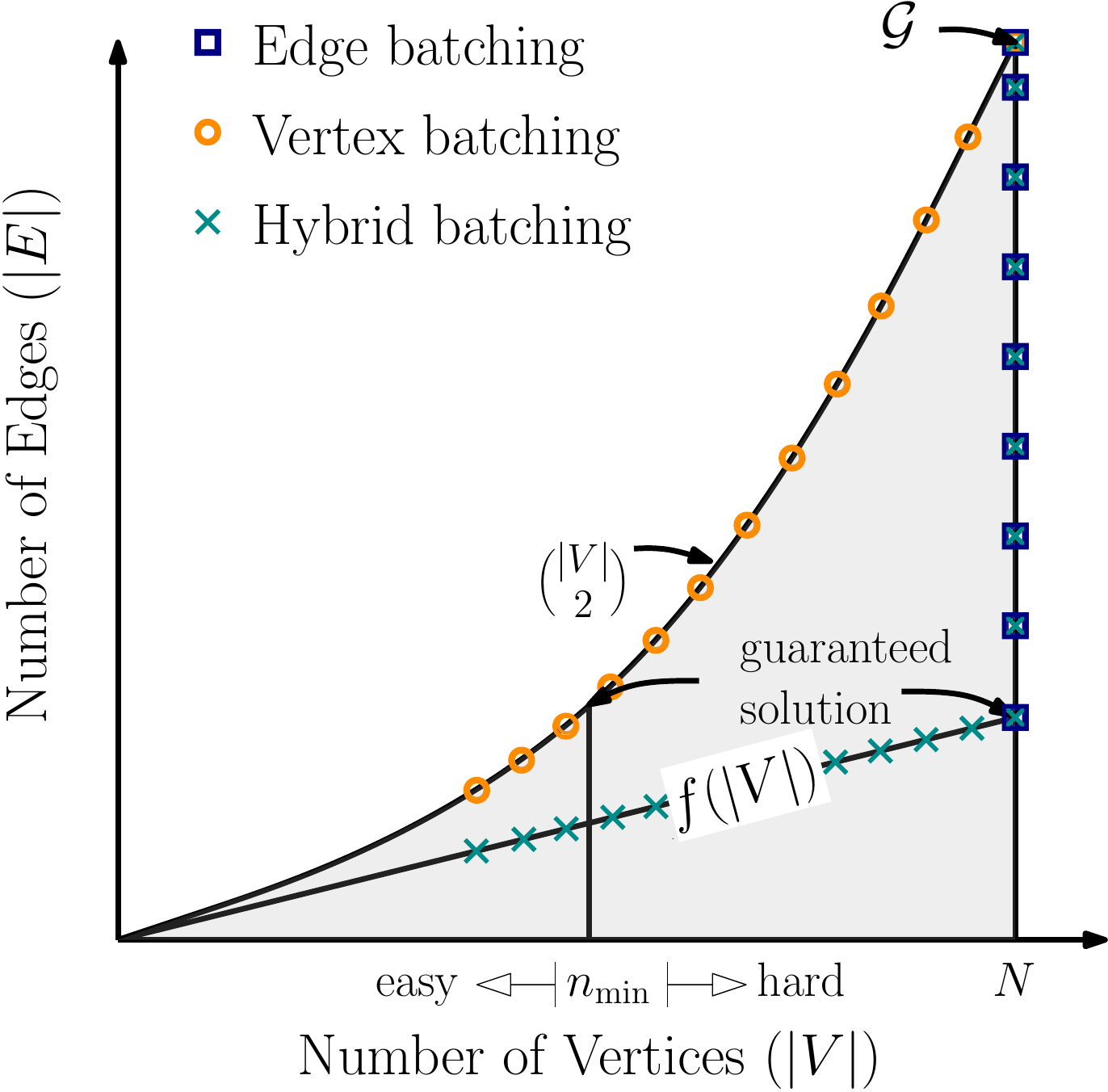}
    \caption{Our meta algorithm provides shortest-path planning algorithms with a sequence of subgraphs.
    To do so we consider densification strategies for traversing the space of $r$-disk subgraphs
    of the roadmap $\calG$.
    The $x$-axis represents the number of vertices of the subgraph and the $y$-axis the number of edges (induced by the radius $r$). A particular subgraph is defined by a point in this space.
    \emph{Edge batching} searches with all vertices and adds edges according to an increasing radius of connectivity. 
    \emph{Vertex batching} searches over complete subgraphs induced by progressively larger subsets of vertices.
    \emph{Hybrid batching} uses the minimal connection radius $f(|V|)$ to ensure connectivity until it reaches~$|V| = N$ and then behaves like edge batching.}
    \label{fig:ve_batching}
\end{figure}

\subsection{The space of subgraphs}
\label{sec:densification-space}


We depict the set of possible graphs~$\calG(\ell, r)$ for all choices of  $0 < \ell \leq N$ and $0 < r \leq \sqrt{d}$ in
\figref{fig:ve_batching}.
Since $\calS$ is a sequence of points or samples, the vertices of $\calG(\ell, r)$ represent a unique subsequence, i.e.
the first $\ell$ points of $\calS$.
Therefore every point in this space is a unique subgraph.
\figref{fig:ve_batching} shows $|E_{\ell, r}|$ as a function of $|V_\ell|$.
We discuss it in detail to motivate our approach for solving the problem of anytime planning on large dense roadmaps and the specific sequence of subgraphs we use.
First, consider the curves that define the boundaries of all possible graphs:
The vertical line $|V| = N$ corresponds to subgraphs defined over the entire set of vertices, where batches of edges are added as $r$ increases.
The parabolic arc $|E| = |V|\cdot(|V|-1)/2$, corresponds to complete subgraphs defined over increasingly larger sets of vertices.

We wish to approximate the shortest path $\gamma^*$ which has some minimal clearance $\delta^*$.
To ensure that a path that approximates $\gamma^*$ is found, 
the graph should meet two conditions:
(i)~A minimum number of vertices $n_{\text{min}}$ to ensure sufficient coverage of the \Cs. The exact value of~$n_{\text{min}}$ will be a function of the dispersion $D_{n_{\text{min}}}$ of the first $n_{\text{min}}$ points in the sequence $\calS$ and the clearance $\delta^*$.
(ii)~A minimal connection radius $r_0$ to ensure that the graph is connected.
Its value will depend on the sequence $\calS$ (and not on~$\delta^*$).

In \figref{fig:ve_starvation}, requirement (i) induces a vertical line at $|V| = n_{\text{min}}$.
Any point to the left of this line corresponds to a graph with too few vertices to guarantee that a solution will be found. We call this the \emph{vertex-starvation} region.
Requirement~(ii) induces a curve $f(|V|)$ such that any point below this curve corresponds to a graph which may be disconnected
due to having too few edges. 
We call this the \emph{edge-starvation} region.
The exact form of the curve depends on the sequence~$\calS$ that is used. 
Any point outside the starvation regions represents a graph~$\calG(\ell,r)$ such that the length of $\gamma(\calG(\ell,r))$ may be bounded.
Specific bounds are discussed in our subsequent analysis.

\begin{figure}
\centering
    \includegraphics[width=0.43\textwidth]{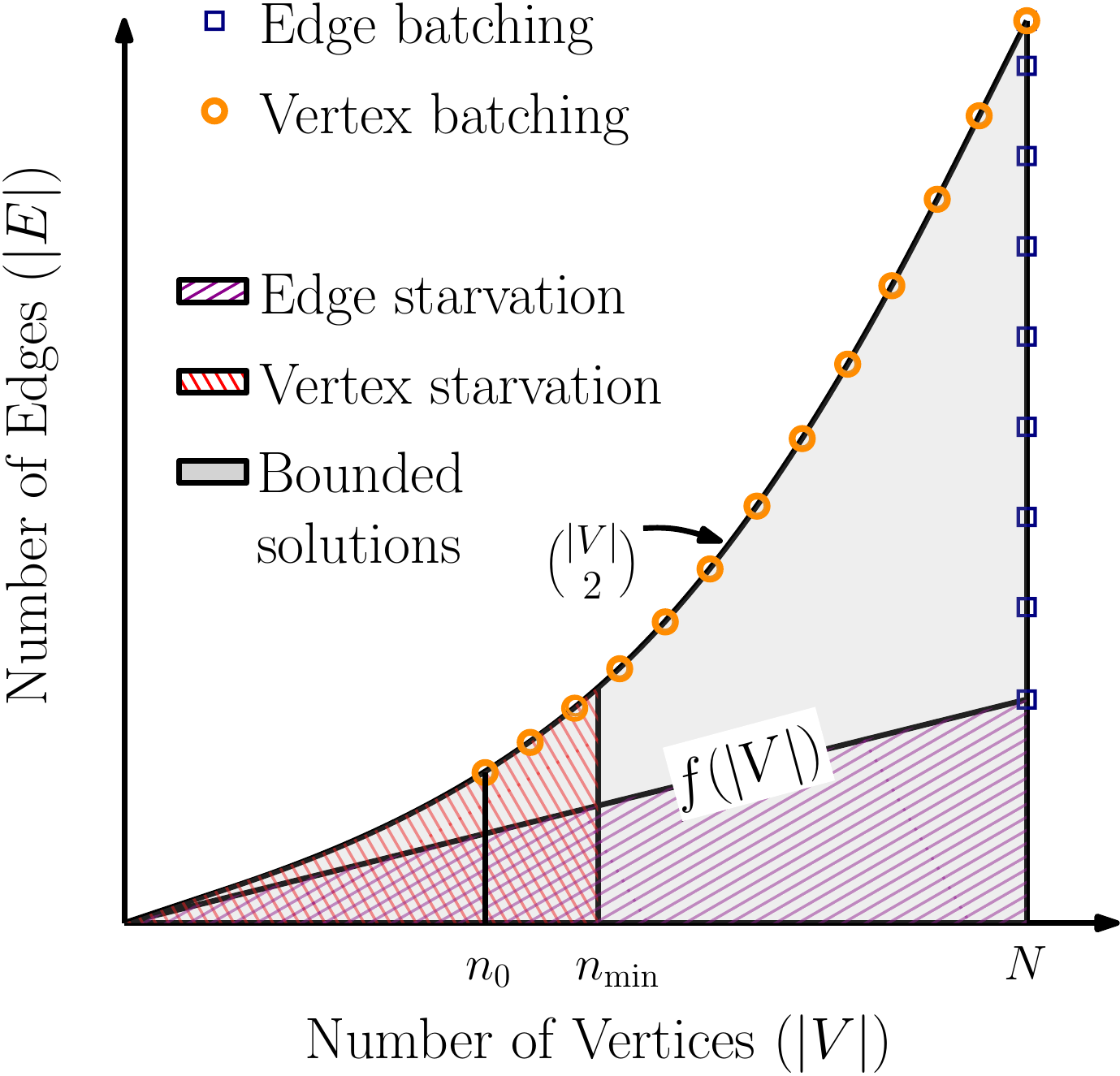}
    \caption{\emph{Vertex Starvation} happens in the region
    with too few vertices to ensure a solution, even for a fully connected subgraph. \emph{Edge Starvation}
    happens in the region where the radius $r$ is too low to guarantee connectivity.}
    \label{fig:ve_starvation}
\end{figure}

\subsection{Densification Strategies}
\label{sec:densification-strategies}

Our goal is to search for increasingly dense subgraphs of~$\calG$.
This corresponds to a sequence of points $\calG(n_i,r_i)$ on the space of subgraphs (\figref{fig:ve_starvation}) 
that ends at the complete roadmap $\calG$ at the upper right corner of the space.
If no feasible path exists in the subgraph, we move on to the next subgraph in the sequence, which is more likely to have a feasible path.
We discuss three general densification strategies for traversing the space of subgraphs.

\subsubsection{Edge Batching}
\hfill\\
In edge batching, all subgraphs have the complete set of vertices $\calS$ and 
the edges are incrementally added
via an increasing connection radius.
Specifically, $\forall i~n_i = N$ and $r_{i} = \eta_{e} r_{i-1}$ where
$\eta_{e} > 1$ and $r_{0}$ is some small initial radius. 
Here, we choose
$r_0 = O(f(N))$, where $f$ is the edge-starvation boundary
curve defined previously. It defines the minimal radius to ensure connectivity
(in the asymptotic case) using $r$-disk graphs.
Specifically, 
$f(N) = O\left(N^{-1/d}\right)$ for low-dispersion deterministic sequences
and 
$f(N) = O\left( \left(\log N / N\right)^{-1/d}\right)$ for random i.i.d sequences.
As shown in \figref{fig:ve_starvation}, this induces a sequence of points along the vertical line at $|V| = N$ starting from $|E| = O(N^2 r_0^d)$ and ending at $|E| = \Theta(N^2)$.

\begin{figure}
        \centering
    \includegraphics[width=0.43\textwidth]{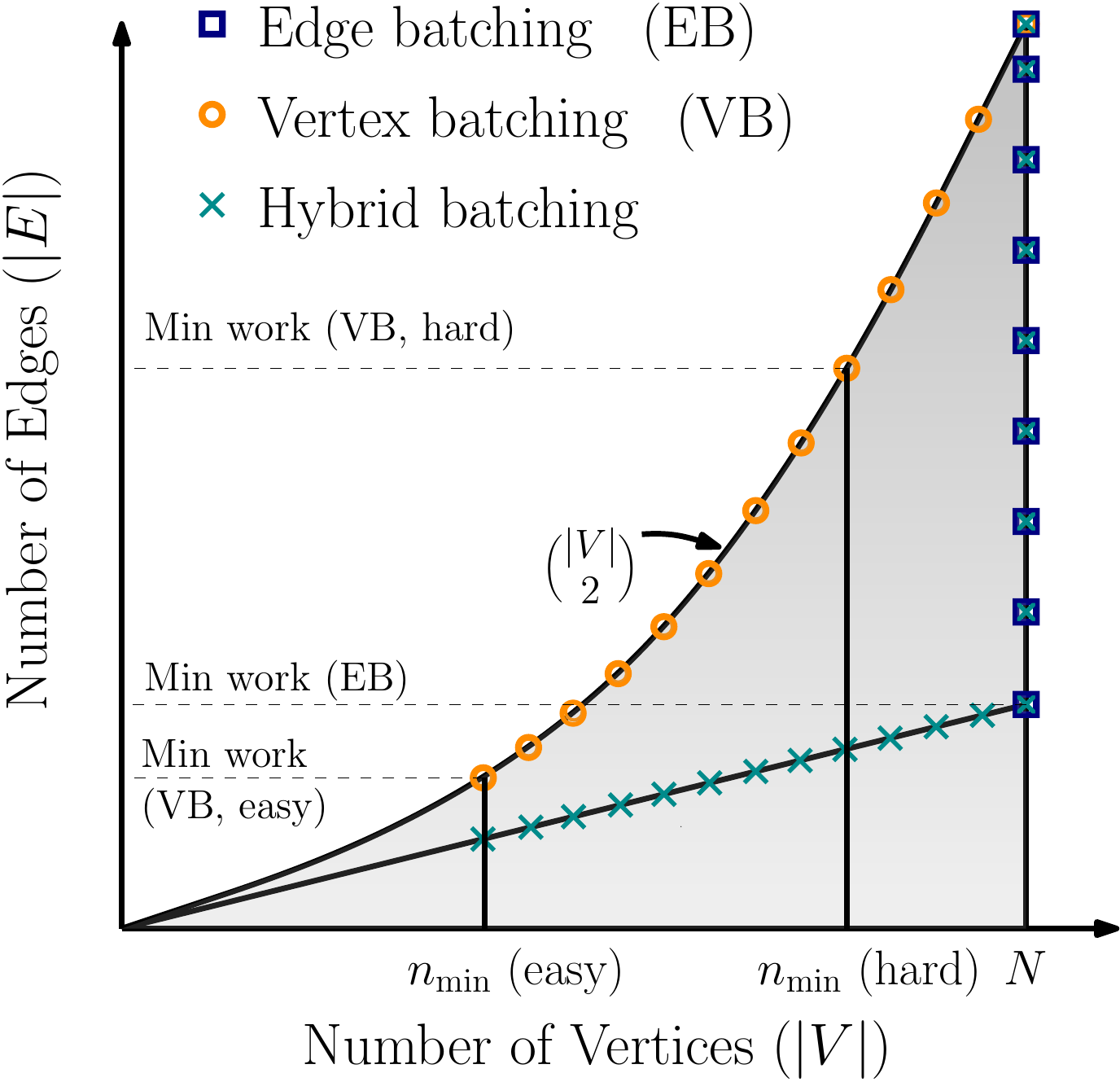}
    \caption[The effect of hardness on densification strategies]{\ A visualization of the work required by our densification strategies as a function of the problem's hardness. Here work is measured as the number of edges evaluated.
    This is visualized using the gradient shading where light gray (dark gray) depicts a small (large) amount of work.
    Assuming $N > n_{\min}$, the amount of work required by edge batching remains the same regardless of problem difficulty.
    For vertex batching the amount of work required depends on the hardness of the problem.
    Hybrid batching does less work than edge batching for easy problems and less work than vertex batching
    for hard problems.
    Here we visualize an easy and a hard problem using~$n_{\min}$ (easy) and $n_{\min}$ (hard), respectively.}
    \label{fig:ve_comparitive}
\end{figure}

\begin{figure*}[t]
\captionsetup[subfigure]{justification=centering}
    \begin{subfigure}[b]{0.324\columnwidth}
    \centering
        \frame{\includegraphics[width=0.9\textwidth]{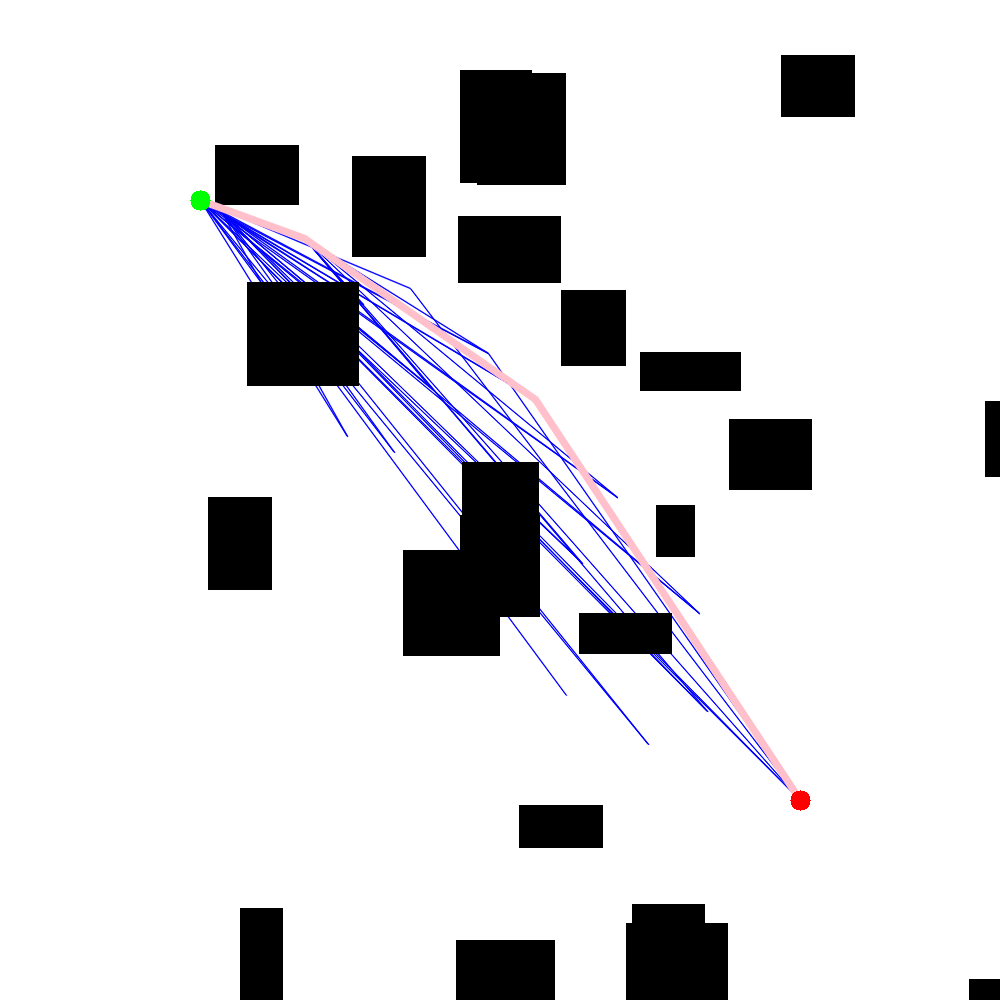}}
        \caption{40 checks}
        \label{fig:viz2d_vertex_easy1}
    \end{subfigure}
    \begin{subfigure}[b]{0.324\columnwidth}
    \centering
        \frame{\includegraphics[width=0.9\textwidth]{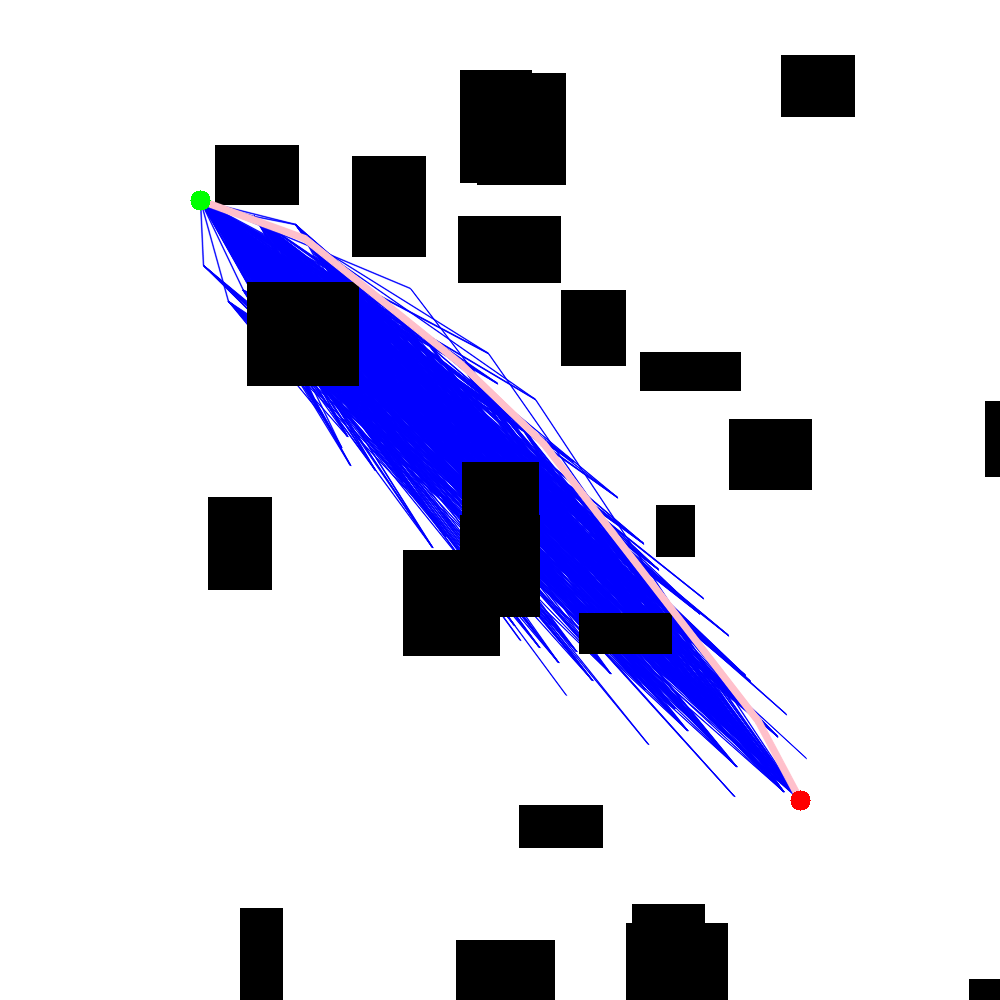}}
        \caption{953 checks}
        \label{fig:viz2d_vertex_easy2}
    \end{subfigure}
    \begin{subfigure}[b]{0.324\columnwidth}
    \centering
        \frame{\includegraphics[width=0.9\textwidth]{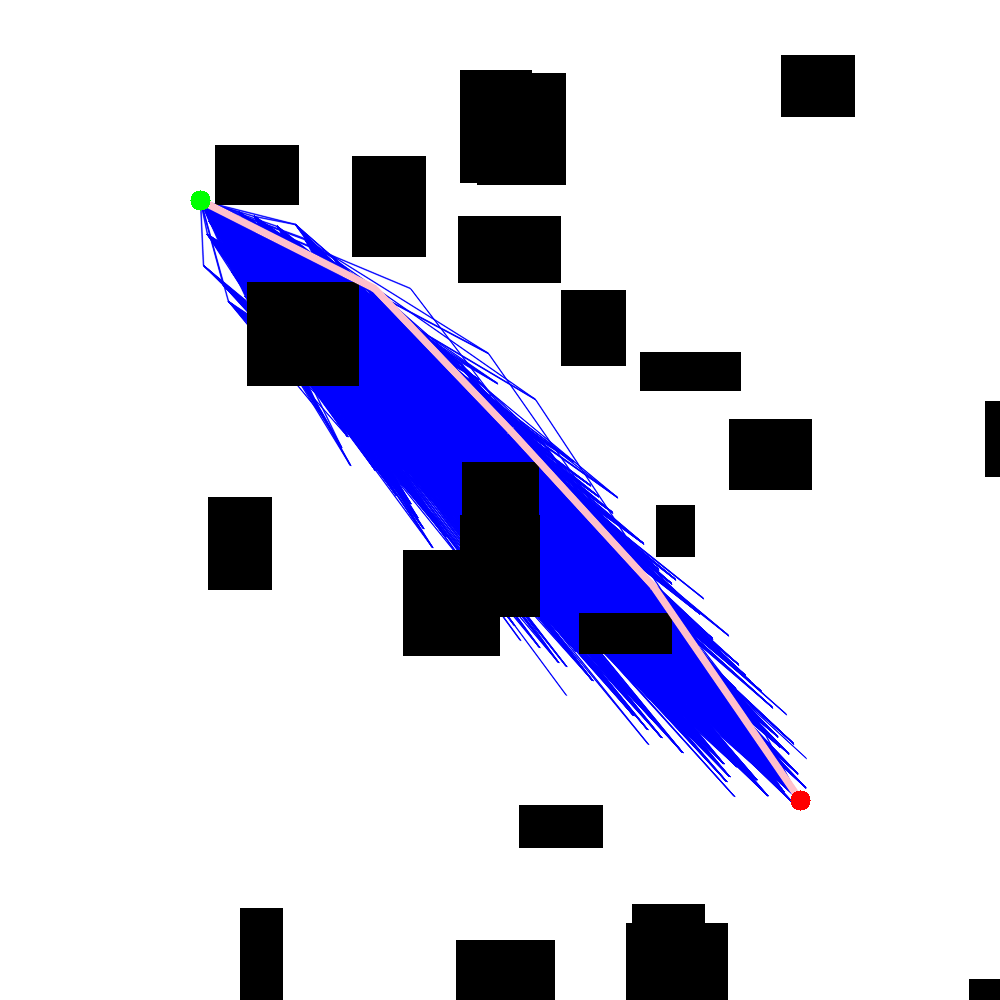}}
        \caption{6,310 checks}
        \label{fig:viz2d_vertex_easy3}
    \end{subfigure}
    \begin{subfigure}[b]{0.324\columnwidth}
    \centering
        \frame{\includegraphics[width=0.9\textwidth]{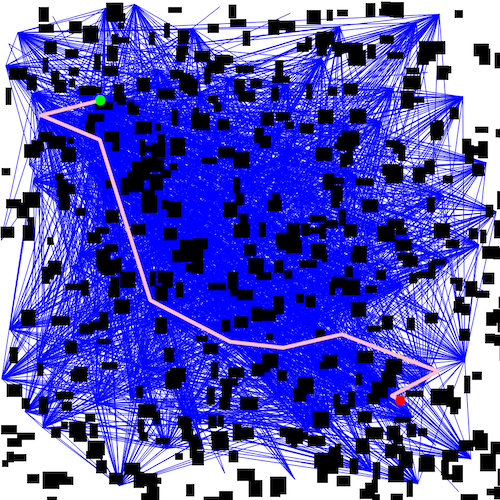}}
        \caption{2,573 checks}
        \label{fig:viz2d_vertex_hard1}
    \end{subfigure}
    \begin{subfigure}[b]{0.324\columnwidth}
    \centering
        \frame{\includegraphics[width=0.9\textwidth]{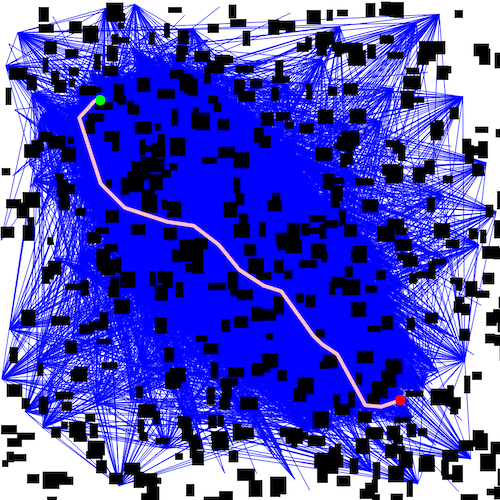}}
        \caption{61,506 checks}
        \label{fig:viz2d_vertex_hard2}
    \end{subfigure}
    \begin{subfigure}[b]{0.324\columnwidth}
    \centering
        \frame{\includegraphics[width=0.9\textwidth]{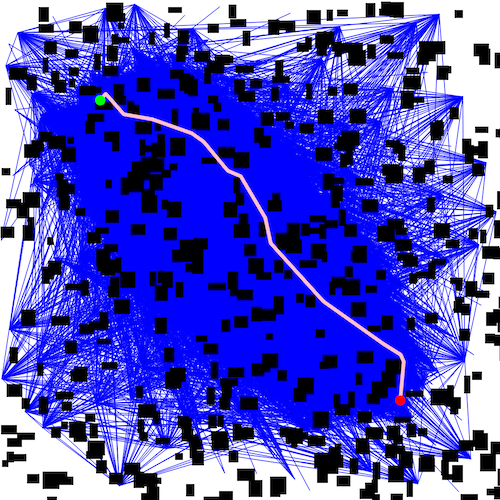}}
        \caption{164,504 checks}
        \label{fig:viz2d_vertex_hard3}
    \end{subfigure}

    \begin{subfigure}[b]{0.324\columnwidth}
    \centering
        \frame{\includegraphics[width=0.9\textwidth]{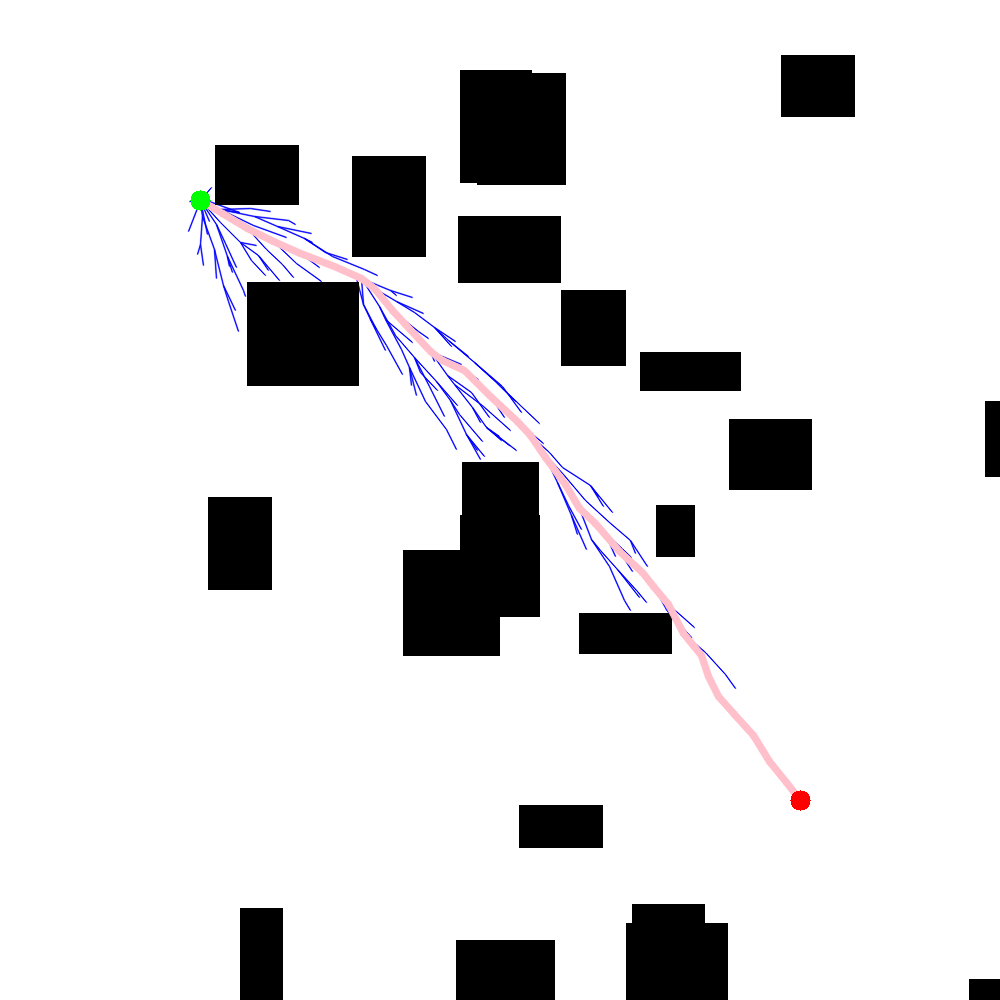}}
        \caption{206 checks}
        \label{fig:viz2d_edge_easy1}
    \end{subfigure}
    \begin{subfigure}[b]{0.324\columnwidth}
    \centering
        \frame{\includegraphics[width=0.9\textwidth]{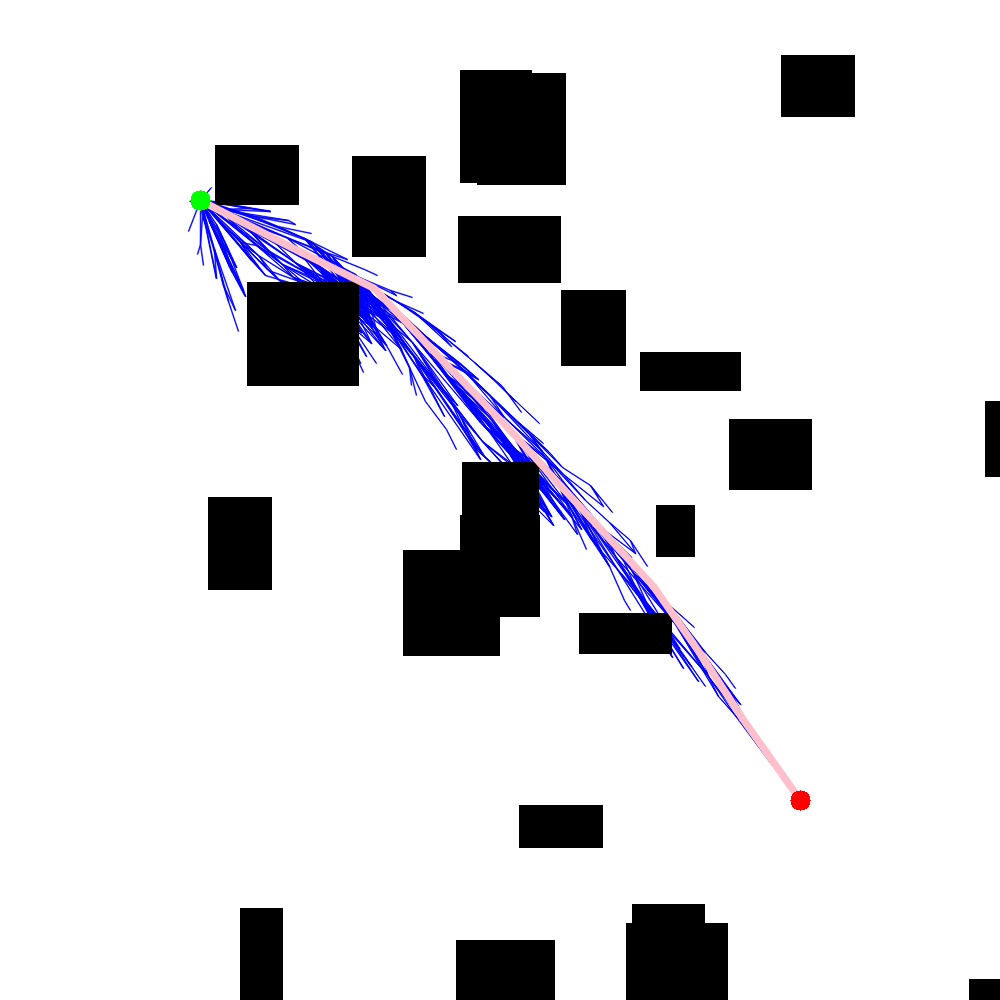}}
        \caption{676 checks}
        \label{fig:viz2d_edge_easy2}
    \end{subfigure}
    \begin{subfigure}[b]{0.324\columnwidth}
    \centering
        \frame{\includegraphics[width=0.9\textwidth]{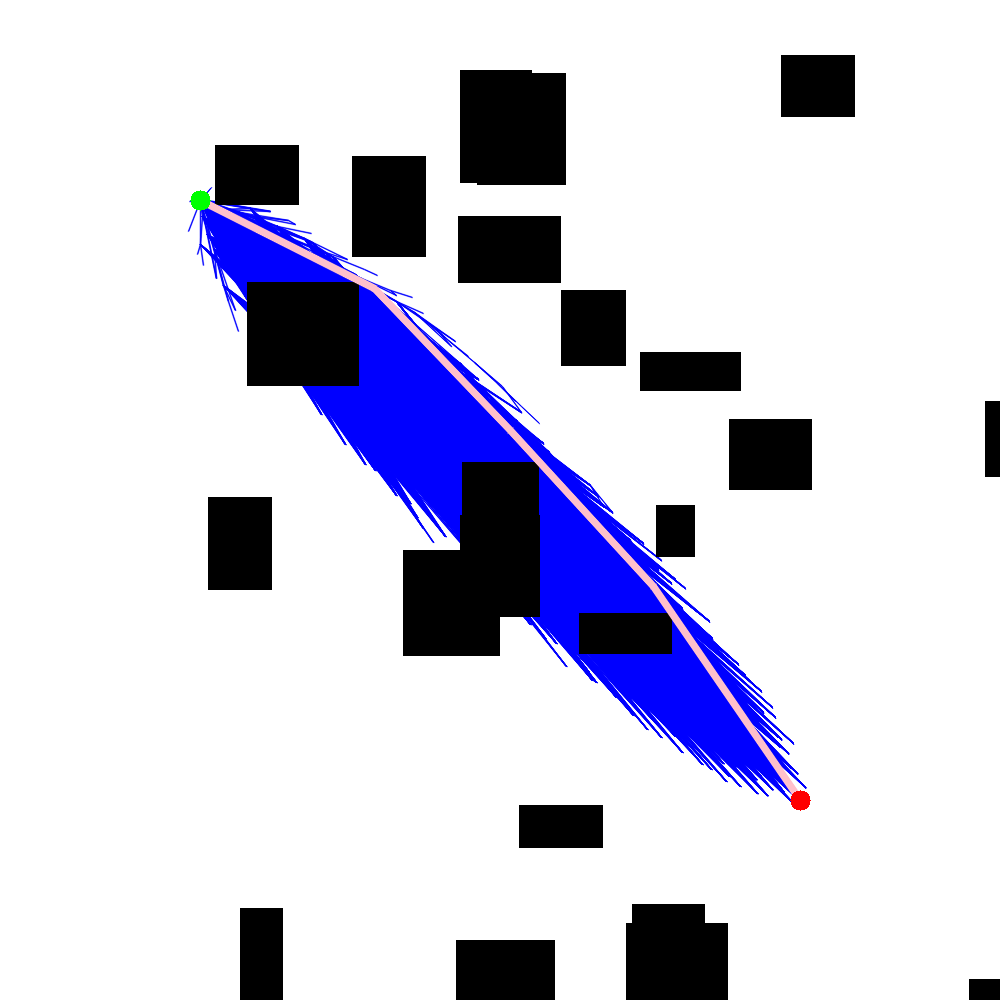}}
        \caption{15,099 checks}
        \label{fig:viz2d_edge_easy3}
    \end{subfigure}
    \begin{subfigure}[b]{0.324\columnwidth}
    \centering
        \frame{\includegraphics[width=0.9\textwidth]{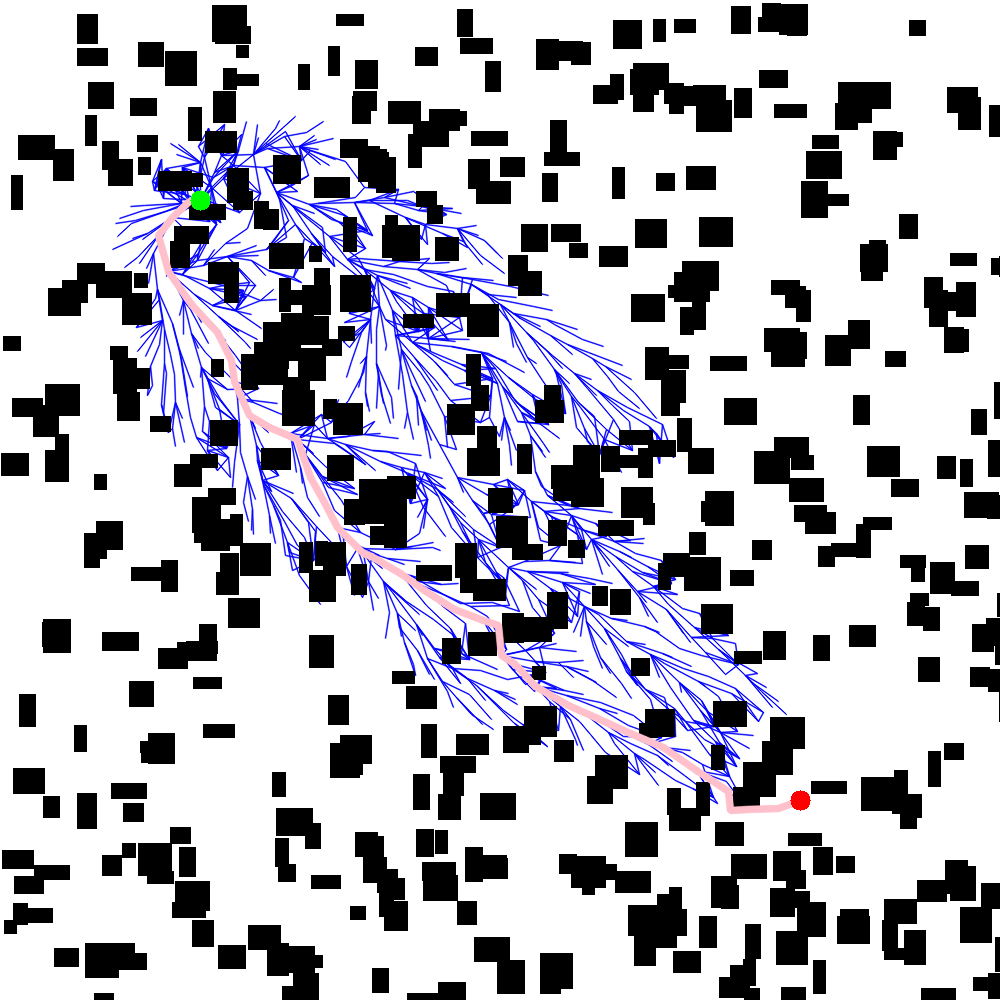}}
        \caption{1390 checks}
        \label{fig:viz2d_edge_hard1}
    \end{subfigure}
    \begin{subfigure}[b]{0.324\columnwidth}
    \centering
        \frame{\includegraphics[width=0.9\textwidth]{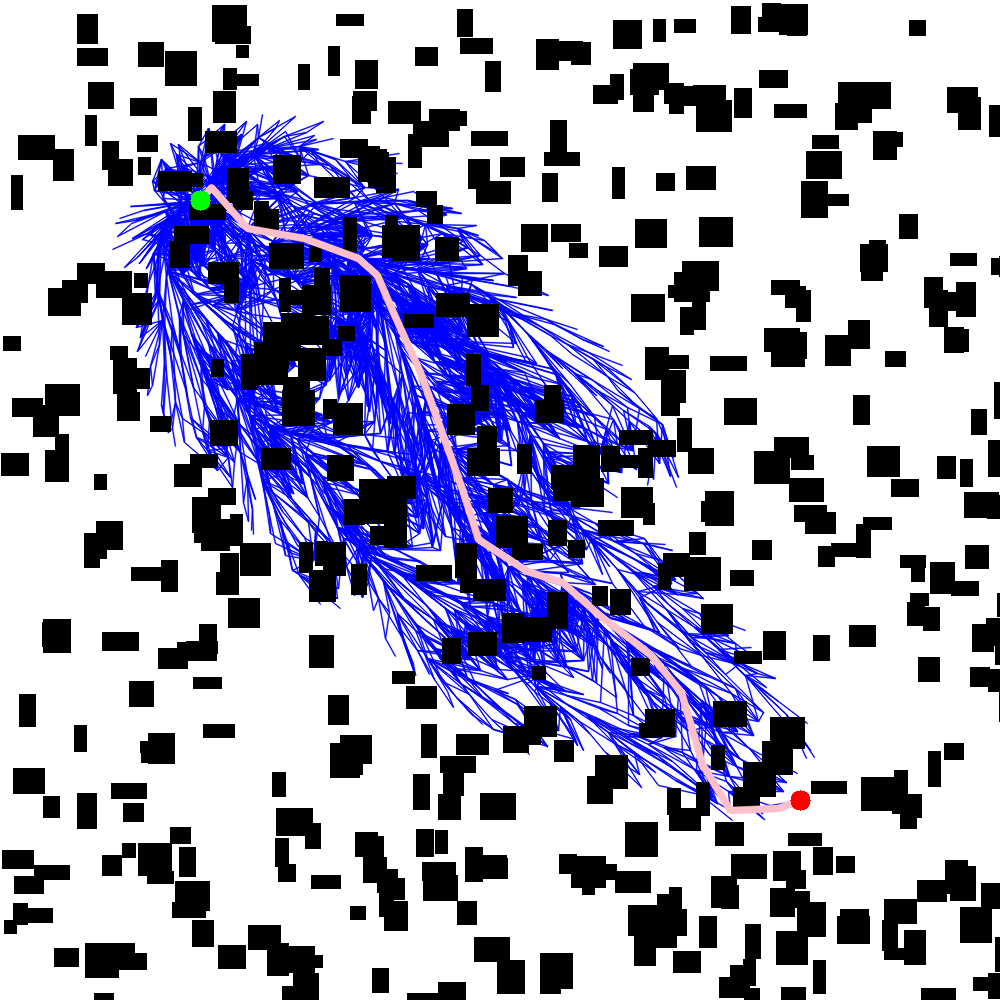}}
        \caption{4,687 checks}
        \label{fig:viz2d_edge_hard2}
    \end{subfigure}
    \begin{subfigure}[b]{0.324\columnwidth}
    \centering
        \frame{\includegraphics[width=0.9\textwidth]{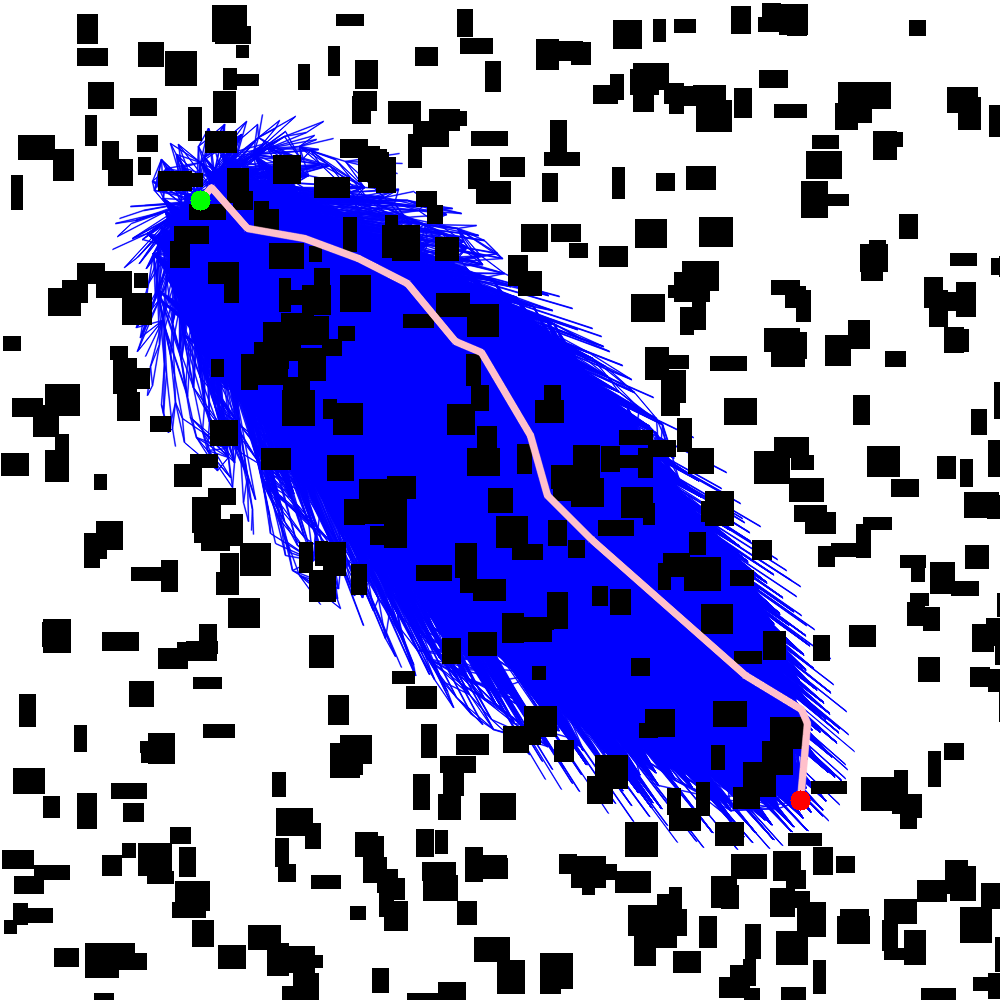}}
        \caption{78,546 checks}
        \label{fig:viz2d_edge_hard3}
    \end{subfigure}
    \caption{
    Visualizations of vertex batching (upper row) and edge batching (lower row) on easy (left pane)
    and hard (right pane)
    $\R^{2}$ problems respectively. The same set of samples $\calS$ is used in each case. For easy problems, vertex batching finds the 
    first solution quickly with a sparse set of initial samples. Additional heuristics hereafter help it converge to the optimum with fewer edge evaluations than edge batching. The harder problem has $10\times$ more obstacles and lower average obstacle gaps. Therefore, both vertex and edge batching require
    more edge evaluations for finding feasible solutions and the shortest path on it. In particular, vertex batching requires multiple iterations to find its first solution,
    while edge batching still does so on its first search, albeit with more collision checks than for the easy problem. Thus edge batching finds solutions faster than vertex batching on hard problems. Note that the coverage of collision checks appears similar at the end due to resolution
    limits for visualization.
    }
    \label{fig:viz2d}
\end{figure*}

\subsubsection{Vertex Batching}
\hfill\\
In this variant, all subgraphs are complete graphs defined over increasing subsets of the complete set of vertices $\calS$.
Specifically $\forall i~r_i = \sqrt{d}$,
$n_i = \eta_{v} n_{i-1}$ where $\eta_{v} > 1$ and the base term $n_0$ is some small number of vertices. Because we have no priors about the obstacle density or distribution, the chosen $n_0$ is a constant and does not vary due to $N$ or due to the volume of $\Cobs$.
As shown in~\figref{fig:ve_starvation}, this induces a sequence of points along the parabolic arc $|E| = |V|\cdot (|V|-1)/2$
starting from $|V| = n_0$ and ending at $|V| = N$. The vertices are chosen in the same
order with which they are generated by~$\calS$. So, $\calG_0$ has the first $n_0$ samples of
$\calS$, and so on.

\vspace{\baselineskip}
\noindent
The relative performance of these two
strategies depends on the hardness of the problem. 
We use the clearance of the shortest path, $\delta^{*}$, as a proxy for problem hardness.
This, in turn, defines $n_{\min}$ which bounds the vertex-starvation region.
Specifically, we say that a problem is easy (hard) when
$\delta^* \approx \sqrt{d}$ 
($\delta^* \approx \Omega(D_n(\calS))$).
For easy problems, with larger gaps between obstacles,
where  $\delta^{*} \approx O(\sqrt{d})$, 
vertex batching can find a solution quickly with fewer samples and long edges, thereby restricting the work required
for future searches.
In contrast, assuming $N > n_{\min}$, edge batching will find a solution on the first iteration but 
the time to do so may be far greater than for vertex batching because the number of samples
is so large.
For hard problems
where  $\delta^{*} \approx O(D_n)$,
vertex batching may require multiple iterations until the number of samples it uses is large enough and it is out of the vertex-starvation region.
Each of these searches would exhaust the complete subgraph before terminating. 
This cumulative effort would exceed that required by edge batching for the same problem, as the latter can find a feasible
(albeit sub-optimal) path on the first search.
A visual depiction of this intuition is given in \figref{fig:ve_comparitive}.
Since we are focused on problems
with expensive edge evaluations, we treat the work due to edge evaluations
as a reasonable approximation of the total work done by the search.
An empirical example of this is shown in \figref{fig:viz2d}.

\subsubsection{Hybrid Batching}
\hfill\\
Vertex and edge batching exhibit generally complementary properties for 
problems with varying difficulty.
Yet, when a query~$\calQ$ is given, the hardness of the problem is not known a-priori.
Therefore, we propose a hybrid approach that exhibits favourable properties regardless of the hardness of the problem.

This can be visualized on the space of subgraphs as sampling along the curve $f(|V|)$ from $|V| = n_0$ until $f(|V|)$ intersects $|V| = N$ and then sampling along the vertical line~$|V| = N$. 
See~\figref{fig:ve_batching} and~\figref{fig:ve_comparitive}.
As we shall see in our experiments, hybrid batching typically performs comparably (in terms of anytime planning performance) to vertex batching on easy problems and to edge batching on hard problems.

If the problem is easy, then hybrid batching
finds a feasible solution early on, typically when the number of vertices
is similar to that needed by vertex batching for a feasible solution. Thus, the work
would be far less than that for edge batching.
On the other hand, if the problem is hard, then hybrid batching 
would have to get much closer to the~$|V| = N$ line before the dispersion becomes low enough to find a solution. However, it would
not involve as much work as for vertex batching, because the radius decreases as the number of vertices increases, unlike vertex batching which uses $r_i = \sqrt{d}$ for every iteration~$i$. So on hard problems, it does far less work
than vertex batching.

\subsection{Analysis for Halton Sequence}
\label{sec:densification-analysis}

In this section we consider the space of subgraphs and the densification strategies that we introduced
for the specific case that $\calS$ is a Halton sequence (\sref{sec:background-dispersion}).
We start by describing the boundaries of the starvation regions.
We then simulate the bound on the quality of the solution obtained as a function of the work done for each of our strategies.

Since we are considering $[0,1]^{d}$, the unit hypercube, $\delta_{\text{max}} \leq \sqrt{d}$.
We use \eref{eq:dispersion_suboptimality} 
to first obtain bounds on the vertex  and edge starvation regions, and subsequently
analyze the tradeoff between worst-case work and solution quality for vertex and edge batching.

\subsubsection{Starvation Region Bounds}
\hfill\\
To bound the vertex starvation region we compute the~$n_{\text{min}}$ after which bounded sub-optimality can be guaranteed
to find the first solution. 
Note that $\delta^*$ is the clearance of the shortest path $\gamma*$ in $\calG$ connecting $s_1$ and $s_2$,
that~$p_d$ denotes the $d^{th}$ prime, and that $D_N \leq p_d / N^{1/d}$ for Halton sequences.
For~\eref{eq:dispersion_suboptimality} to hold we require that $2D_{n_{\text{min}}} < \delta^{*}$.
Thus,

\begin{equation}
\label{eq:vertex_starvation}
2D_{n_{\text{min}}} < \delta^{*} 
\Rightarrow
2\frac{p_d}{n_{\text{min}}^{1/d}} < \delta^{*}
\Rightarrow
n_{\text{min}} > \left( \frac{2p_d}{\delta^*}\right)^d.
\end{equation}
As the problem becomes harder (as $\delta^*$ decreases),
$n_{\text{min}}$ and 
the vertex-starvation region grows.

We now show that for Halton sequences, the edge-starvation region has a linear boundary, i.e. $f(|V|) = O(|V|)$.
Using \eref{eq:dispersion_suboptimality} the minimal radius $r_{\min}(|V|)$ required for a graph with $|V|$ vertices is 
\begin{equation}
\label{eq:edge_starvation}
r_{\min}(|V|) > 2D_{|V|} 
\Rightarrow 
r_{\min}(|V|) > \frac{2 p_d}{(|V|)^{1/d}}.
\end{equation}
For any $r$-disk graph~$\calG(\ell, r)$, the number of edges is $|E_{\ell, r}| = O\left(\ell^2 \cdot r^{d}\right)$.
In our case,
\begin{equation}
\label{eq:f_v_line}
f(|V|) 
= O\left(|V|^2 \cdot r_{\min}^{d}(|V|)\right)
= O(|V|).
\end{equation}

\begin{figure}
	\centering
    \captionsetup[subfigure]{justification=centering}
	\begin{subfigure}[b]{0.49\columnwidth}
	\centering
		\includegraphics[width=\textwidth]{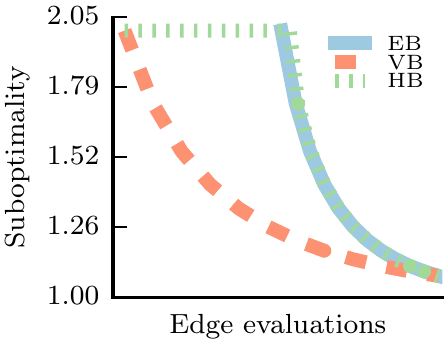}
		\caption{  Easy problem}
		\label{fig:ratio_easy}
	\end{subfigure}
	\begin{subfigure}[b]{0.49\columnwidth}
	\centering
		\includegraphics[width=\textwidth]{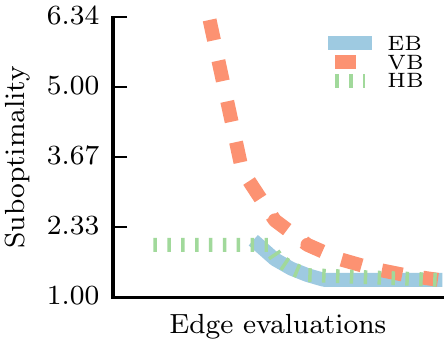}
		\caption{  Hard problem}
		\label{fig:ratio_hard}
	\end{subfigure}

	\caption{ A simulation of the work-suboptimality tradeoff for vertex, edge  and hybrid batching. 
	The number of samples and dimensionality are $n = 10^6$ and $d=4$ respectively. 
	The easy and hard problems have
	$\delta^{*} = \sqrt{d} /2$ and $\delta^{*} = 5D_{n}$, respectively.	
	The plot is produced by sampling points along the curves $|V| = n$ and $|E| = |V| \cdot (|V|-1|)/2$ and using the respective values in \eref{eq:dispersion_suboptimality}.
	Note that the $x$-axis is in log-scale.
}
	\label{fig:ratio}
\end{figure}

\subsubsection{Effort-to-Quality Ratio}
\hfill\\
We now compare our densification strategies in terms of their worst-case anytime performance. Specifically, we plot the cumulative amount of worst-case work as subgraphs are searched, measured by the maximum number of edges that may be evaluated, as a function of the bound on the quality of the solution that may be obtained using \eref{eq:dispersion_suboptimality}.
We fix a specific setting (namely $d$ and $n$) and simulate the work done and the suboptimality using the necessary formulae.
This is done for an easy and a hard problem. See~\figref{fig:ratio}. 

Indeed, this simulation coincides with our discussion on the properties of all batching strategies with respect to the problem difficulty.
Vertex batching outperforms edge batching on easy problems and vice versa. Hybrid batching lies somewhere in between the two approaches
with the specifics depending on problem difficulty.

\subsection{Implementation}

Our analysis so far has been independent of any parameters for the 
densification or any other implementation decisions. Here we outline the specifics 
of how we implement the densification strategies for evaluating them.

\subsubsection{Densification Parameters}
\hfill\\
We choose the parameters for each densification strategy such 
that the number of batches is $O(\text{log}_2N)$.
\\
\\
EDGE BATCHING
\\
\indent
We set $\eta_e = 2^{1/d}$ . 
Recall that for $r$-disk graphs, the average degree of vertices is~$n \cdot r_{i}^{d}$, therefore this average degree (and hence the number of edges) is doubled after each iteration. 
We set $r_0 = 3\cdot N^{-1/d}$.
\\
\\
VERTEX BATCHING
\\
\indent
We set the initial number of vertices~$n_0$ to be $100$,
irrespective of the roadmap size and problem setting,
and set $\eta_v = 2$.
After each batch
we double the number of vertices.
\\
\\
HYBRID BATCHING
\\
\indent
The parameters are derived from those of vertex and edge batching.
We begin with $n_0 = 100$, and in each batch we increase the vertices
by a factor of~$\eta_v = 2$. For these searches, i.e. in the region where $n_i < N$,
we use $r_i = 3 \cdot n_i^{-1/d}$. This ensures the same radius
at $n_i = N$ as for edge batching. Subsequently, we increase the radius as
$r_i = \eta_e r_{i-1}$, where $\eta_e = 2^{1/d}$. 
\\

These parameters let us bound the worst-case complexity of the total work (measured in edge evaluations)
by the batching strategies to be of the same order
as naively searching the complete roadmap, for which 
this work is ${N\choose2} = O(N^2)$.
\\
For vertex batching, the number of
vertices at a given iteration is~$n_i = O(2^i)$ and the number of edges is~$|E_i| = n_i^2 / 2 = O(2^{2i})$.
The worst-case complexity for vertex batching is
\begin{equation}
\label{eq:vb_worstcase_work}
\sum\limits_{i=0}^{\text{log}_2N} |E_i| 
= 
\sum\limits_{i=0}^{\text{log}_2N} 
O(2^{2i})
=
O(2 ^ {2\text{ log}_2N}) = O(N^2).
\end{equation}

For edge batching, $|E_{i}| = O(N \cdot N \cdot r_i^{d}) = O(N \cdot 2^{i})$. The worst-case
complexity for edge batching is
\begin{equation}
\label{eq:eb_worstcase_work}
\sum\limits_{i=0}^{\text{log}_2N} |E_i| 
=
\sum\limits_{i=0}^{\text{log}_2N} O(N \cdot 2^{i})
 = N \sum\limits_{i=0}^{\text{log}_2N} O(2^i)
= O(N^2).
\end{equation}

For hybrid batching, the complexity is analysed in two phases - the first phase where both vertices and edges are added,
and the second phase where only edges are added.
In the first phase, we have~$n_i = O(2^i)$ and
$|E_{i}| = O(n_i \cdot n_i \cdot r_i^{d}) = O(n_i)$. 
In the second phase, we have $n_i = N$ and 
$|E_{i}| = O(N \cdot N \cdot r_i^{d}) = O(N \cdot 2^{i})$. Thus the worst-case work 
complexity for hybrid batching is
\begin{multline}
\label{eq:hb_worstcase_work}
\sum\limits_{i=0}^{2 \ \text{log}_2N} |E_i| 
=
\sum\limits_{i=0}^{\text{log}_2N}  O(2^{i}) + 
\sum\limits_{i=0}^{\text{log}_2N} O(N \cdot 2^{i})
\\ = O(N) + O(N^2) = O(N^2).
\end{multline}
\hfill\\

Alternatively, if we set the number of batches to be $O(\frac{N}{k})$, so that each batch is of size $k$,
then the worst-case work for the batching strategies would
have higher complexity. For instance, for vertex batching in this setting,
$n_i = i \cdot k$ and $|E_i| = O(i^2 k^2)$. Therefore the total 
worst-case work is
\begin{multline}
\label{eq:vb_worstcase_work_poly}
\sum\limits_{i=0}^{\frac{N}{k}} |E_i| 
=
\sum\limits_{i=0}^{\frac{N}{k}} O(i^2 k^2)
=
k^2 \sum\limits_{i=0}^{\frac{N}{k}} O(i^2)
\\ =
k^2 O( (\frac{N}{k})^3)
= O(N^3).
\end{multline}
Similar results can be shown for edge and hybrid batching in this setting.

\subsubsection{Optimizations}
\hfill\\
In all the cases shown above, the worst-case work for any batching strategy
is still larger than searching $\calG$ directly.
Thus, we consider numerous optimizations to make the strategies more efficient in practice.
\\
\\
SEARCH TECHNIQUES

Each subgraph is searched using Lazy $\text{A}^{*}$~\citep{CPL14} with 
incremental rewiring as in $\text{LPA}^{*}$~\citep{koenig2004lifelong}. For details,
see the search algorithm used for a single batch of $\text{BIT}^{*}$~\citep{gammell2014batch}.
This lazy edge-centric variant of $\text{A}^{*}$ has been shown to outperform other shortest-path search techniques for problems
with expensive edge evaluations~\citep{dellin2016unifying}.
\\
\\
CACHING COLLISION CHECKS

Each time the collision-detector~$\calM$ is called on an edge, we store the 
ID of the edge along with the result using a hashing data structure. 
Subsequent calls for that specific edge are simply lookups in the hashing data structure which incur negligible running time.
Thus,~$\calM$ is called for each edge at most once.
\\
\\
SAMPLE PRUNING AND REJECTION

For anytime algorithms, once an initial solution is obtained, subsequent searches should be focused
on the subset of states that could potentially improve the solution. When the space $\mathcal{X}$ is
Euclidean, this so-called ``informed subset'' can be described 
by a prolate hyperspheroid~\citep{GSB14}. 
For our densification strategies, we prune away all existing vertices
(for all batching), and reject the newer vertices (for vertex and hybrid batching),
that fall outside the informed subset. 

Doing successive prunings due to intermediate solutions significantly reduces the average-case complexity
of future searches~\citep{gammell2014batch}, despite the extra time required to do so, 
which we account for in our benchmarking. Note that for Vertex and Hybrid Batching, 
which begin with only a few samples, samples in successive batches that are outside the 
current informed set can just be rejected. This is cheaper than pruning, which is 
required  for Edge Batching. Across all test cases, we noticed poorer performance when pruning was omitted.

In the presence of obstacles, the extent to which the complexity is reduced due to 
pruning is difficult to obtain analytically. In the assumption of free space, however, we can derive an 
interesting result for Edge Batching
and also Hybrid Batching, which motivates using this heuristic.

\begin{figure*}[th]
\captionsetup[subfigure]{justification=centering}
    \begin{subfigure}[b]{0.66\columnwidth}
    \centering
        \includegraphics[trim={600 300 300 150},clip,width=0.8\columnwidth]{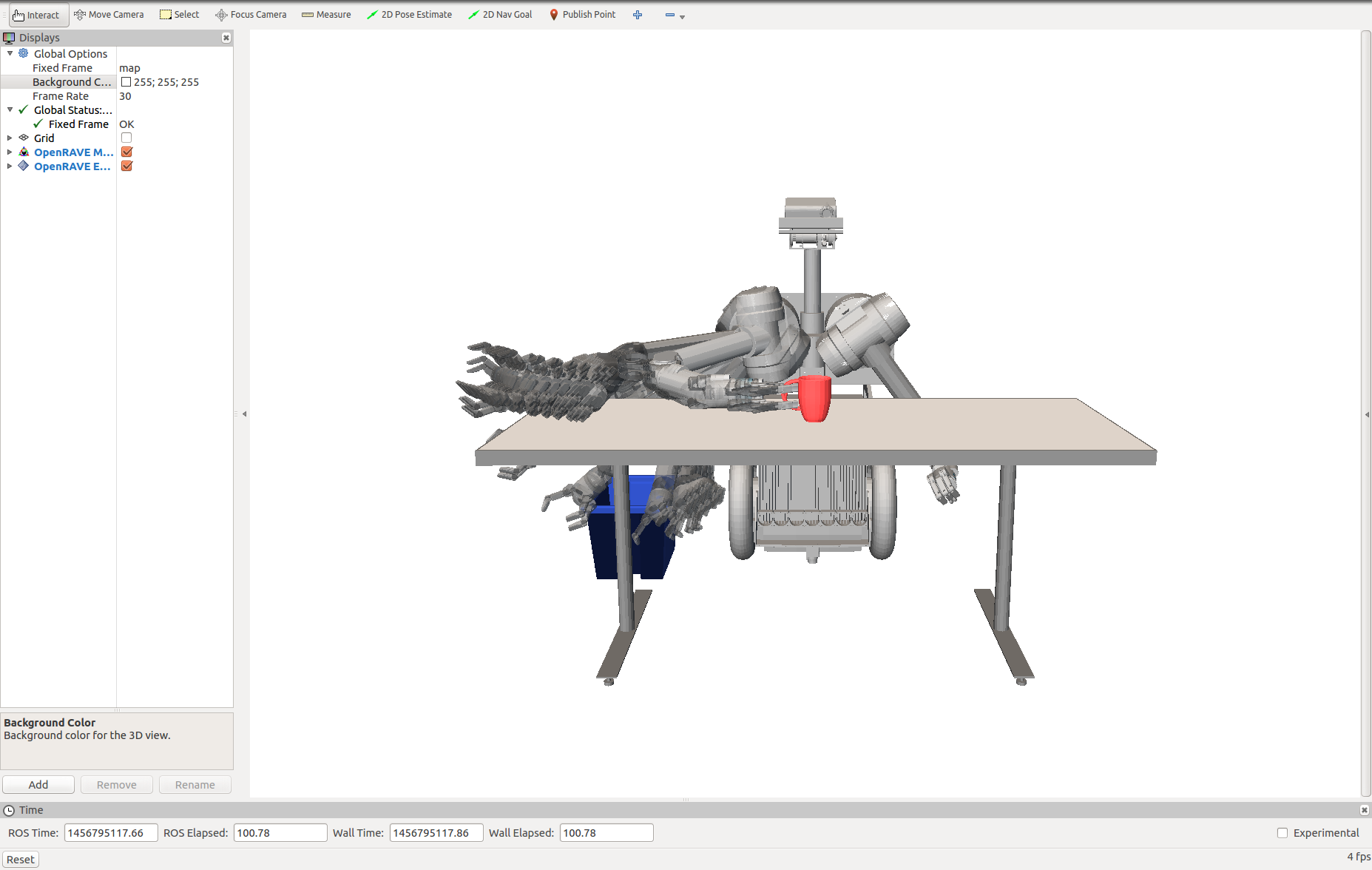}
        \caption{}
        \label{fig:anytimereel1}
    \end{subfigure}
    \begin{subfigure}[b]{0.66\columnwidth}
    \centering
        \includegraphics[trim={600 300 300 150},clip,width=0.8\columnwidth]{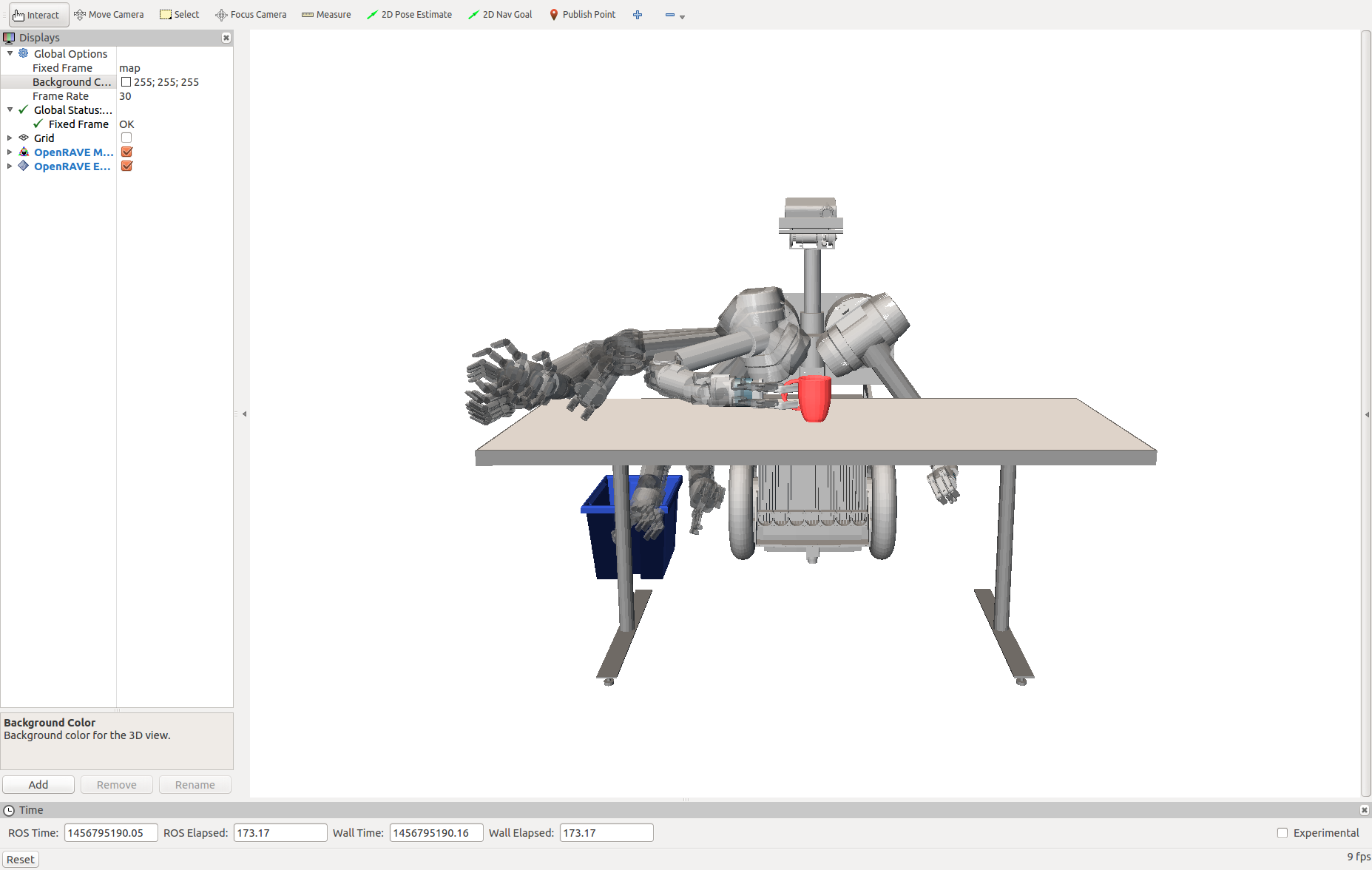}
        \caption{}
        \label{fig:anytimereel2}
    \end{subfigure}
    \begin{subfigure}[b]{0.66\columnwidth}
    \centering
        \includegraphics[trim={600 300 300 150},clip,width=0.8\columnwidth]{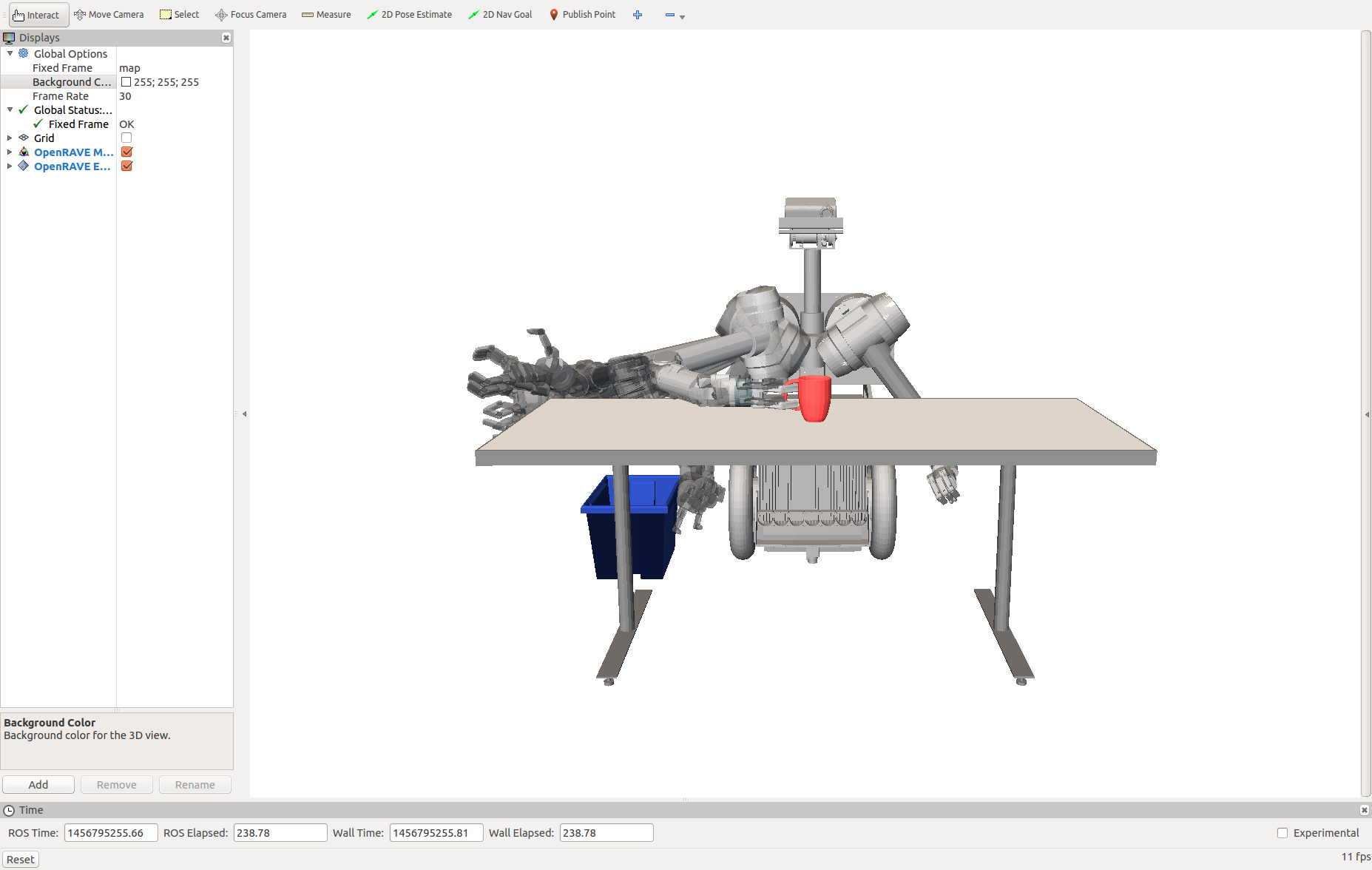}
        \caption{}
        \label{fig:anytimereel3}
    \end{subfigure}

    \begin{subfigure}[b]{0.99\columnwidth}
    \centering
        \includegraphics[trim={600 350 400 100},clip,width=0.7\columnwidth]{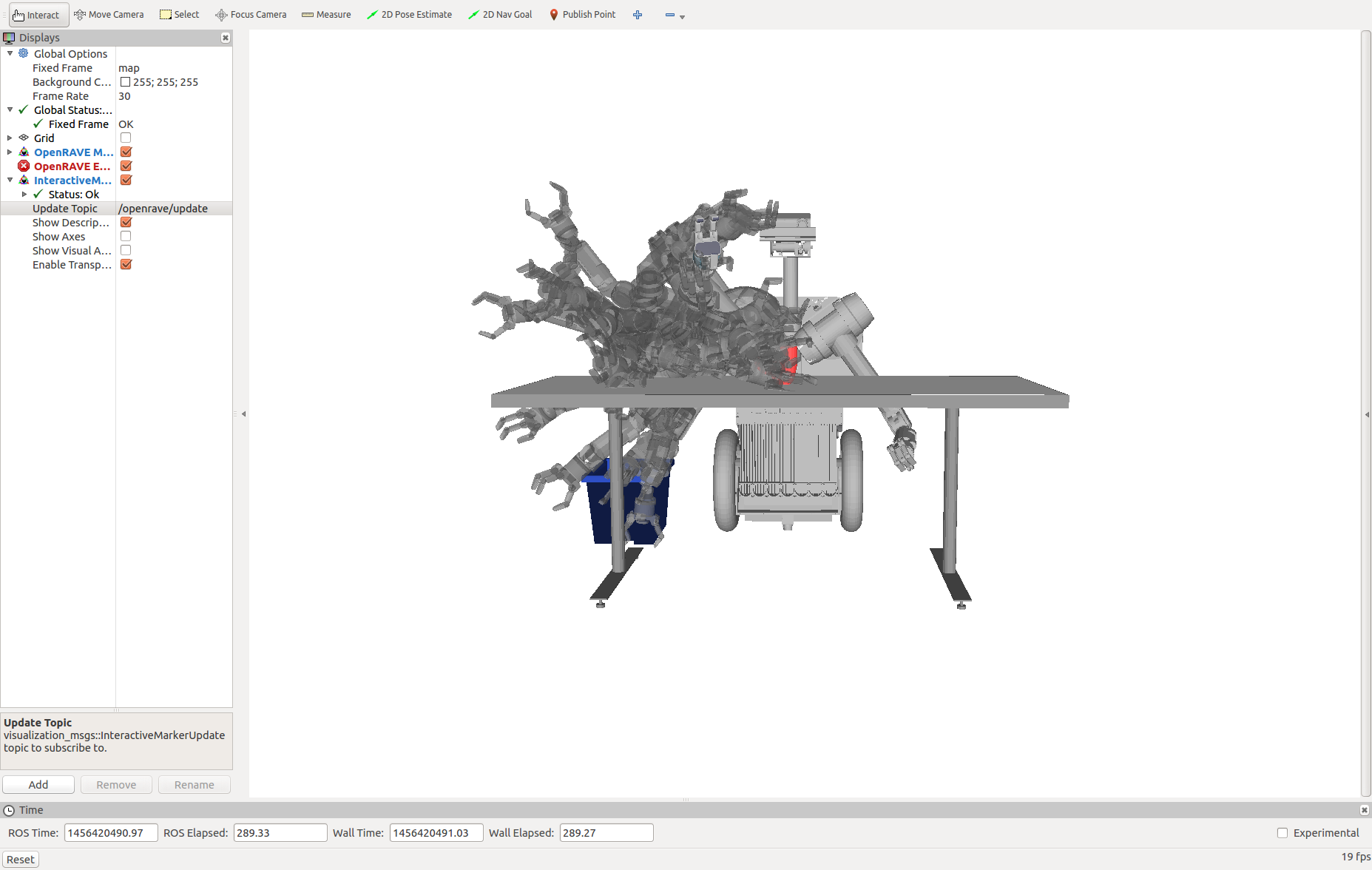}
        \caption{POMP}
        \label{fig:withchecks}
    \end{subfigure}
    \begin{subfigure}[b]{0.99\columnwidth}
    \centering
        \includegraphics[trim={600 350 400 100},clip,width=0.7\columnwidth]{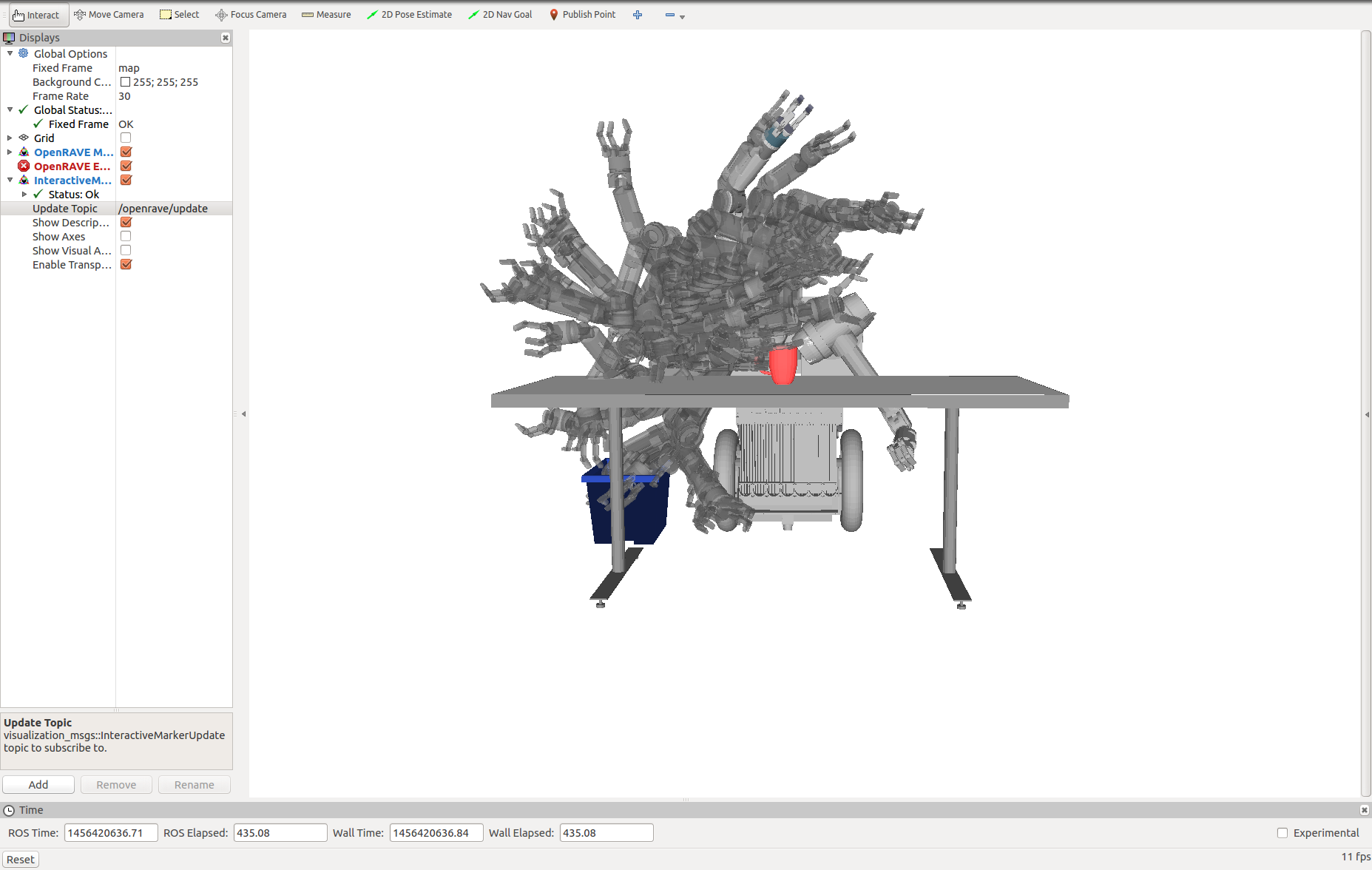}
        \caption{RRTConnect}
        \label{fig:rrtchecks}
    \end{subfigure}
    \caption{ The first row demonstrates the anytime behaviour of POMP. 
    A sequence of successively shorter trajectories is shown, from the first feasible path obtained in (\subref{fig:anytimereel1}) to the shortest feasible path in (\subref{fig:anytimereel3}). The second row shows the collision checks required by (\subref{fig:withchecks}) POMP and (\subref{fig:rrtchecks}) RRTConnect respectively to obtain a feasible path. POMP uses a belief model about the configuration space to guide the search for paths that are most likely to be free, and requires fewer collision checks than RRTConnect.}
    \label{fig:fig1}
\end{figure*}

\begin{theorem}
\label{th:ellipse}

Running edge batching or hybrid batching in an obstacle-free $d$-dimensional Euclidean space over
a roadmap constructed using a deterministic low-dispersion sequence with
$r_0 > 2D_{N}$ and $r_{i+1} = 2^{1/d} r_{i}$,
while 
using sample pruning and rejection, 
makes the worst-case complexity
of the total search, measured in edge evaluations, $O(N^{1+ 1/d})$.
\end{theorem}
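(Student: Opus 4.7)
The plan is to exploit the informed subset---a prolate hyperspheroid---induced by each incumbent path, showing that sample pruning shrinks the effective roadmap fast enough to offset the doubling of the ball volume $r_i^d$. Because the environment is obstacle-free, the first iteration always returns a feasible path, so the pruning machinery is available from iteration~$1$ onward.

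First I would dispense with iteration~$0$. Since $r_0 > 2 D_N$ and $D_N = \Theta(N^{-1/d})$ for a Halton-style low-dispersion sequence, the full $r$-disk graph has $|E_0| = O(N^2 r_0^d) = O(N)$ edges, already within the target. For $i \geq 1$, I would apply \eref{eq:dispersion_suboptimality}: in obstacle-free space the clearance is maximal and $c^*(\delta)$ reduces to the straight-line distance $c_{\min} := \zeta(s_1,s_2)$, yielding
\begin{equation*}
c_i \leq \left(1 + \tfrac{2 D_N}{r_i - 2 D_N}\right) c_{\min} =: (1 + \epsilon_i)\, c_{\min}, \qquad \epsilon_i = O(D_N/r_i).
\end{equation*}
The hyperspheroid with foci $s_1, s_2$ and transverse diameter $c_i$ has semi-minor axis $b_i = \tfrac{1}{2}\sqrt{c_i^2 - c_{\min}^2}$ and volume $V_i \propto c_i\, b_i^{d-1} = O\!\left(c_{\min}^d \epsilon_i^{(d-1)/2}\right) = O\!\left((D_N/r_i)^{(d-1)/2}\right)$.

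Next I would bound the edges added in iteration~$i{+}1$. Sample pruning retains only $n_i = O(N V_i)$ vertices, and the thinness of the hyperspheroid limits the intersection of any ball of radius $r_{i+1}$ with it to volume $O(b_i^{d-1} r_{i+1})$. Multiplying by the point density $N$ yields a per-vertex degree bound of $O(N b_i^{d-1} r_{i+1})$, so
\begin{equation*}
|E_{i+1}| = O\!\left(n_i \cdot N b_i^{d-1} r_{i+1}\right) = O\!\left(N^2 D_N^{d-1}\, r_{i+1}\right) = O\!\left(N^{1+1/d}\, r_i\right),
\end{equation*}
after substituting $D_N = \Theta(N^{-1/d})$ and $r_{i+1} = 2^{1/d} r_i$. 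Summing along the geometric schedule $r_i = 2^{i/d} r_0$ over $i = 0, \ldots, O(\log_2 N)$ gives $\sum_i r_i = r_0 \cdot O(N^{1/d}) = O(1)$, producing the claimed $O(N^{1+1/d})$ total. For hybrid batching I would split the analysis into its two phases: during the vertex-growing phase $r_i = \Theta(n_i^{-1/d})$ keeps each batch at $O(n_i^2 r_i^d) = O(n_i)$ edges and the phase totals $O(N)$ by geometric doubling; the radius-growing phase is structurally identical to edge batching and inherits the same bound.

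The main obstacle is the geometric estimate on the ball--hyperspheroid intersection, whose sharpness depends on whether $r_{i+1}$ is smaller or larger than the semi-minor axis $b_i$. In the small-$r_{i+1}$ regime I would switch to the alternative bound $O(r_{i+1}^d)$ on the neighbourhood volume, and verify that both regimes yield per-iteration work $O(N^{1+1/d}\, r_i)$ so that the final sum is dominated by a single geometric tail rather than acquiring a spurious logarithmic factor. Stitching the two regimes together while tracking the crossover against the schedule $r_{i+1} = 2^{1/d} r_i$ is the delicate calculation that anchors the overall bound.
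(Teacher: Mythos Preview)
Your plan matches the paper's in spirit: both use the prolate hyperspheroid induced by the incumbent solution together with the decay $\epsilon_i = O(D_N/r_i)$ to control the pruned vertex count, and your split of hybrid batching into a vertex-growing and a radius-growing phase is exactly how the paper handles it. The genuine gap is in the central edge-count step. With $n_i = O(N V_i)$ and both $V_i$ and $b_i^{d-1}$ of order $(D_N/r_i)^{(d-1)/2}$, your product is
\[
n_i \cdot N\, b_i^{d-1} r_{i+1}
\;=\; O\!\bigl(N^2 V_i^2\, r_{i+1}\bigr)
\;=\; O\!\bigl(N^2 (D_N/r_i)^{d-1}\, r_i\bigr)
\;=\; O\!\bigl(N^{1+1/d}\, r_i^{\,2-d}\bigr),
\]
not $O(N^{1+1/d}\, r_i)$: a factor $r_i^{-(d-1)}$ has been silently dropped when you pass to $O(N^2 D_N^{d-1} r_{i+1})$. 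For $d\geq 3$ the resulting sum $\sum_i r_i^{2-d}$ is dominated by the \emph{smallest} $r_i$ in the thin-ellipse regime rather than forming the tame geometric tail you rely on, and for $d=2$ you acquire a $\log N$. Falling back to the ball-volume bound in the $r_{i+1}\leq b_i$ regime does not rescue the argument either, since there $|E_i|=O(N^2 V_i r_i^d)=O(N\cdot 2^{i(d+1)/(2d)})$, whose sum already exceeds $N^{1+1/d}$; so the crossover you rightly flag as delicate does not in fact stitch into the claimed exponent.

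The paper sidesteps the ball--ellipsoid intersection entirely. It tracks the decay iteratively via $\epsilon_{i+1}\leq 2^{-1/d}\epsilon_i$, deduces $\vol{\mathcal{X}_{\cbest{i}}}\leq 2^{-i(d-1)/(2d)}\vol{\mathcal{X}_{\cbest{0}}}$ and hence $n_{i}\leq 2^{-i(d-1)/(2d)}N$, and then invokes the blanket $r$-disk edge count $|E_i|=O(n_i^2 r_i^d)$ with no case split. The key cancellation is that squaring $n_i$ injects \emph{two} factors of the shrinking ellipse volume, giving $n_i^2 r_i^d = O\!\bigl(N^2\cdot 2^{-i(d-1)/d}\cdot r_0^d\, 2^i\bigr)=O(N\cdot 2^{i/d})$, whose geometric sum over $i\leq \log_2 N$ is exactly $O(N^{1+1/d})$. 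Your per-vertex degree bound, while geometrically sharper in the thin regime, buys only one such volume factor and so cannot reproduce this cancellation.
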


\begin{proof}
We first prove the result for edge batching, as our corresponding result
for hybrid batching will be expressed in terms of this one.
\\
Let $\cbest{i}$ denote the cost of the solution obtained after $i$ iterations by our edge batching algorithm,
and $\cmin = \zeta(s_1, s_2)$ where $\cmin \leq \sqrt{d}$ denote the cost of the optimal solution.
Using \eref{eq:dispersion_suboptimality},
\begin{equation}
\label{eq:cbest}
\cbest{i} \leq \left( 1 + \varepsilon_i \right) \cmin,
\end{equation}
where 
$\varepsilon_i = \frac{2D_N}{r_i - 2D_N}$.
Using the parameters for edge batching,
\begin{equation}
\label{eq:epsdec}
\eps{i+1} = \frac{2D_N}{r_{i+1} - 2D_N} = \frac{2D_N}{2^{1/d} \cdot r_i - 2D_N} \leq \frac{\eps{i}}{2^{1/d}}.
\end{equation}
Let $\imax$ be the maximum number of iterations and recall that we want $\imax = O \left( \log_2 N \right)$. 
As $r_{\imax} \leq \sqrt{d}$, $r_i = r_0 \cdot 2^{i/d}$ and $r_0 = O \left( N^{-1/d} \right)$ we have
\begin{equation}
\label{eq:imax}
\imax 
\leq 
d \cdot \log_2 \frac{\sqrt{d}}{r_0} 
\leq 
\log_2 d^{\frac{d}{2}} \cdot r_0^{-d} 
\leq 
O \left( \log_2 N \right).
\end{equation}
Note that pruning away edges and vertices outside the subset does not change the bound provided in \eref{eq:cbest}.
To compute the actual number of edges considered at the $i^{\text{th}}$ iteration, we bound the volume of the prolate hyperspheriod $\mathcal{X}_{\cbest{i}}$ in $\mathbb{R}^{d}$ (see~\citep{gammell2014batch}) by,
\begin{equation}
	\label{eq:ellipse}
	\vol{\mathcal{X}_{\cbest{i}}} 
	= \frac{ \cbest{i} \left( \left(\cbest{i}\right)^2 - \cmin^2 \right)^{\frac{d-1}{2}} \xi_d}{2^d},
\end{equation}
where $\xi_{d}$ is the volume of an $\mathbb{R}^{d}$ unit-ball. 
Using \eref{eq:cbest} in \eref{eq:ellipse},
\begin{equation}
	\vol{\mathcal{X}_{\cbest{i}}} 
	\leq \eps{i}^{\frac{d-1}{2}} \left(1 + \eps{i} \right)\left(2 + \eps{i}\right)^{\frac{d-1}{2}}  \const_d, 
\end{equation}
where $\const_d = \xi_d \cdot \left( \cmin / 2\right)^d$
 is a constant.
Using \eref{eq:epsdec} we can bound the volume of the ellipse used at the $i$'th iteration, where $i \geq 1$,
\begin{equation}
\begin{aligned}
	\vol{\mathcal{X}_{\cbest{i}}} 	& 
	\leq 
	\eps{i}^{\frac{d-1}{2}} \left(1 + \eps{0} \right)\left(2 + \eps{0}\right)^{\frac{d-1}{2}} \const_d \\
									& 
	\leq 
	\eta^{-\frac{i(d-1)}{2}} \eps{0}^{\frac{d-1}{2}} \left(1 + \eps{0} \right)\left(2 + \eps{0}\right)^{\frac{d-1}{2}} \const_d \\
									& 
	\leq 
	2^{-\frac{i(d-1)}{2d}} \vol{\mathcal{X}_{\cbest{0}}}.
\end{aligned}
\end{equation}
Furthermore, we choose  $r_0$ such that $\vol{\mathcal{X}_{\cbest{0}}} \leq \vol{\mathcal{X}}$.
Now, the number of vertices in $\spacebest$ can be bounded by,
\begin{equation}
n_{i+1} = \frac{ \vol{\spacebest} }{ \vol{\mathcal{X}} } N \leq 2^{-\frac{i(d-1)}{2d}} N.
\end{equation}
We measure the amount of work done by the search at iteration $i$ using $|E_i|$, the number of edges considered. Thus, 
\begin{multline}
|E_i| = O\left(n_i^2 \cdot r_i^d \right) = O\left(N^2 \cdot 2^{-\frac{i(d-1)}{d}} \left(r_0 \cdot 2^{\frac{i}{d}} \right)^d \right) \\ = O\left(N \cdot 2^{\frac{i}{d}} \right).
\end{multline}
Finally, the total work done by the search over all iterations is
\begin{equation}
O \left( \sum \limits_{i=0}^{\log_2 N} N \cdot 2^{i/d} \right)
=
O \left( N \sum \limits_{i=0}^{\log_2 N} 2^{i/d} \right)
= 
O \left( N^{1+1/d} \right).
\end{equation}
\hfill\\
\\
Now we briefly outline the corresponding proof for hybrid batching. Hybrid batching proceeds in two 
phases, one in which both vertices and edges
are added and another in which only edges are added.
\\
In phase 1, $n_{i+1} = 2 \cdot n_{i}$ and $r_{i+1} = 3 \cdot \left(n_{i+1}\right)^{-1/d} = 2^{-1/d} \cdot r_{i}$.
Therefore we have,
\begin{multline}
\eps{i+1} = \frac{2 D_{n_{i+1}}}{r_{i+1} - 2 D_{n_{i+1}}}
= \frac{ 2^{-1/d} \cdot 2 D_{n_i} } { 2^{-1/d} \left( r_i - 2 D_{n_i} \right) }
\\ = \frac{2 D_{n_i}}{r_i - 2 D_{n_i}} = \eps{i}.
\end{multline}
\\
Therefore the ellipse does not shrink between successive solutions.
Thus, in phase 1, the total work done is as follows:
\begin{multline}
O \left(\sum\limits_{i=0}^{\log_2 N} |E_i| \right)
= O \left( \sum\limits_{i=0}^{\log_2 N} n_i^2 \cdot r_i^d \right)
\\ = O \left( \sum\limits_{i=0}^{\log_2 N} 2^{i} \right)
= O(N).
\end{multline}
Furthermore, in phase 2, hybrid batching proceeds exactly as edge batching does,
so its worst-case work here is $O(N^{1 + 1/d})$. Therefore, the total work
done over all iterations for hybrid batching is also
\begin{equation}
O\left(N\right) + O\left(N^{1 + 1/d}\right) = O\left(N^{1 + 1/d}\right).
\end{equation}
\end{proof}
\noindent
A similar result for vertex batching cannot be obtained simply because in the obstacle free case, vertex 
batching would find a solution immediately, rendering this analysis trivial.
\\
\\
For the densification results that we discuss in \sref{sec:experiments-densification}, our implementations of
edge batching, vertex batching and hybrid batching use the various parameters and optimizations
discussed above. Together, they contribute to a significant improvement in runtime performance
of the densification strategies.


\section{Searching over configuration-space beliefs}
\label{sec:cspacebelief}

We now discuss in detail our second key idea - an algorithm for anytime planning 
on roadmaps that uses a model of the configuration space to search for successively 
shorter paths that are likely to be feasible~\citep{choudhury2016pareto}. Our algorithm, which we call
Pareto-Optimal Motion Planner (POMP), searches for paths that are Pareto-optimal in 
path length and collision probability. As mentioned in \sref{sec:intro-ideas}, we propose 
using POMP for searching each individual subgraph generated by the specific densification strategy used.
In this section, however, we will discuss its properties and behaviour independent
of any densification framework.
Therefore, for this specific section,
we do not assume the roadmaps are large and dense, as we allow the densification
strategy to handle that constraint.

\begin{figure*}[t]
\captionsetup[subfigure]{justification=centering}
	\begin{subfigure}[b]{0.5\columnwidth}
	\centering
		\includegraphics[trim=10 20 1 5,clip,width=\columnwidth]{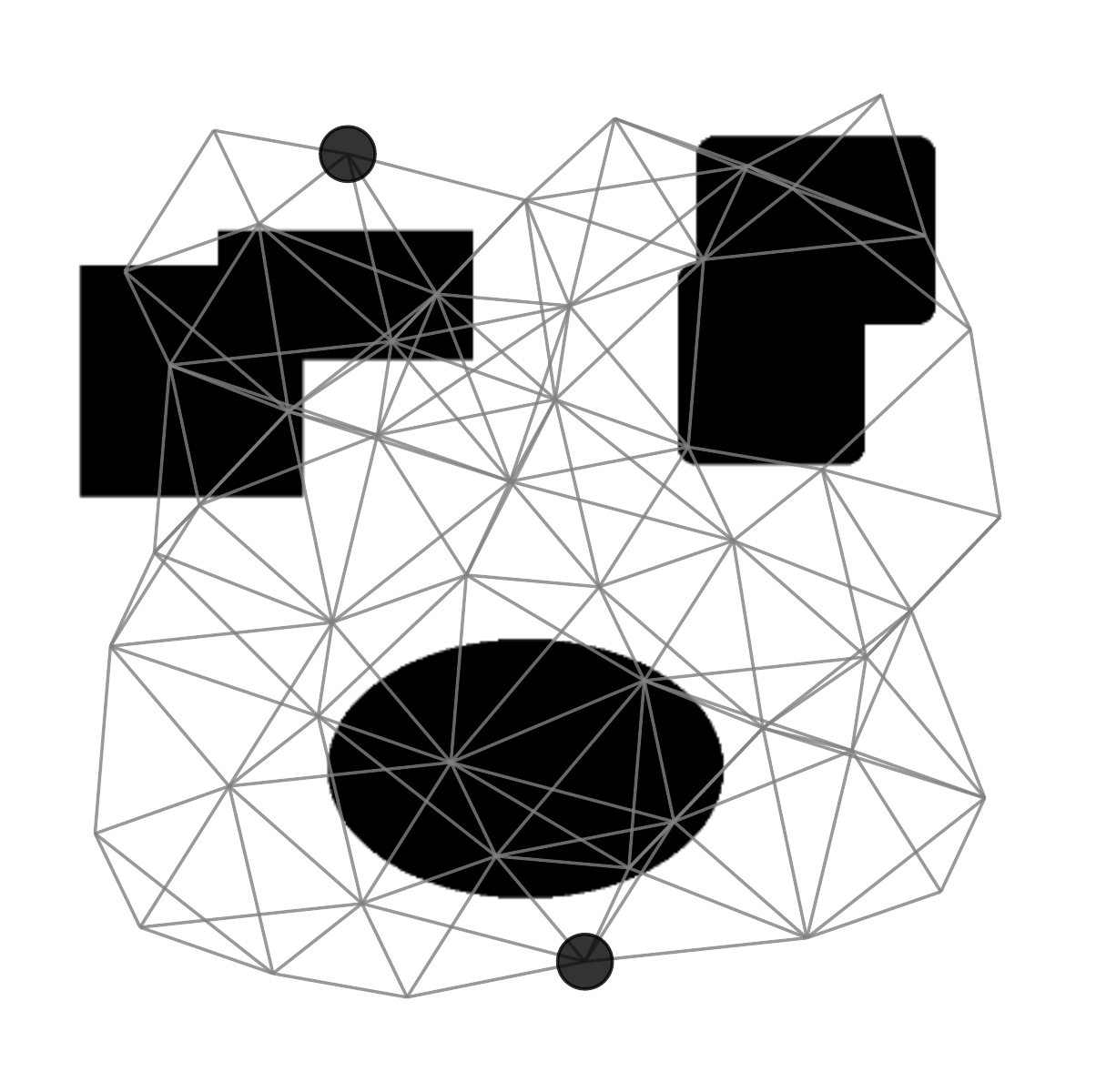}
		\caption{Problem Environment}
		\label{fig:with-illus-problem}
	\end{subfigure}
	\begin{subfigure}[b]{0.5\columnwidth}
	\centering
		\includegraphics[width=0.9\columnwidth]{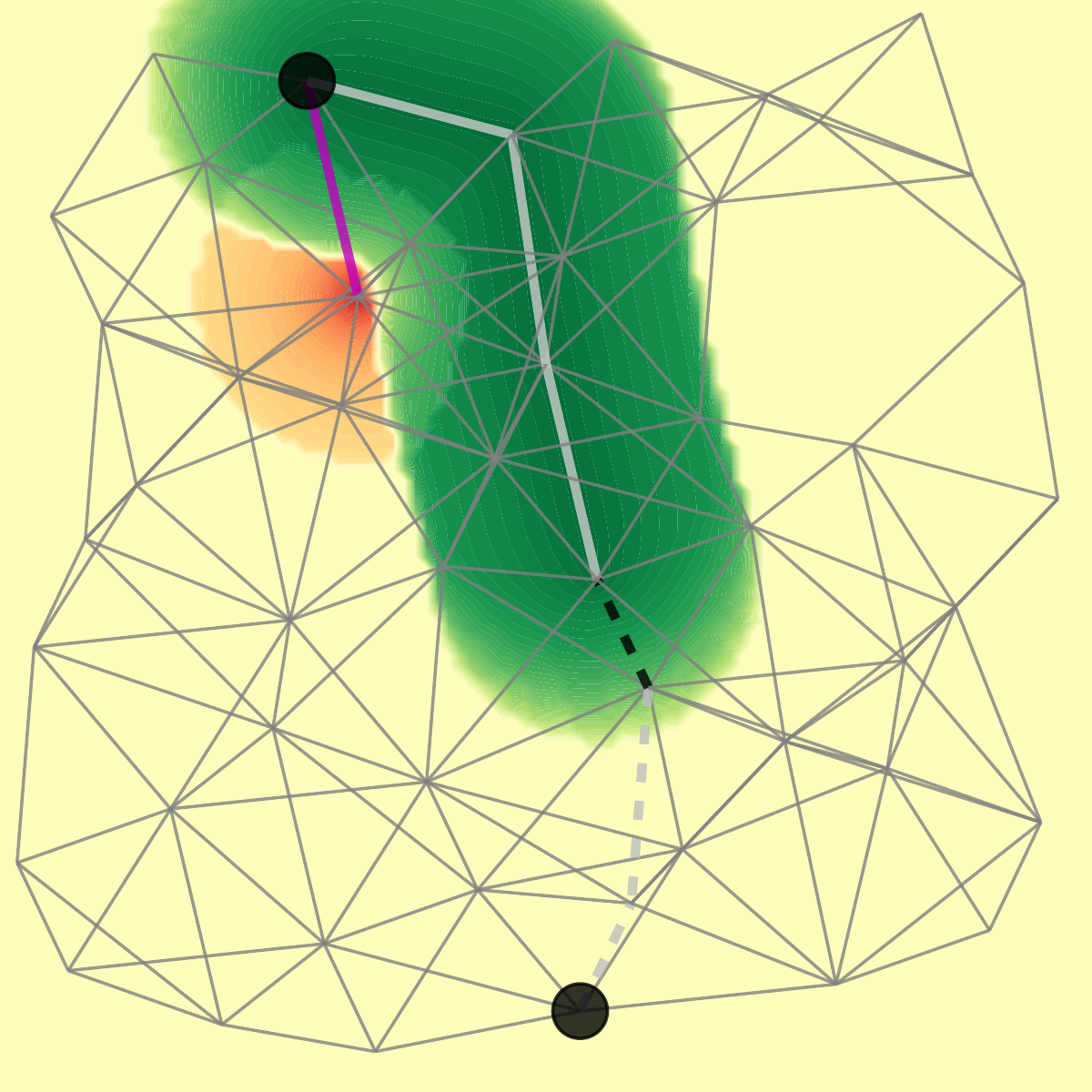}
		\caption{Model - Iteration 4}
		\label{fig:with-illus-path4}
	\end{subfigure}
	\begin{subfigure}[b]{0.5\columnwidth}
	\centering
		\includegraphics[width=0.9\columnwidth]{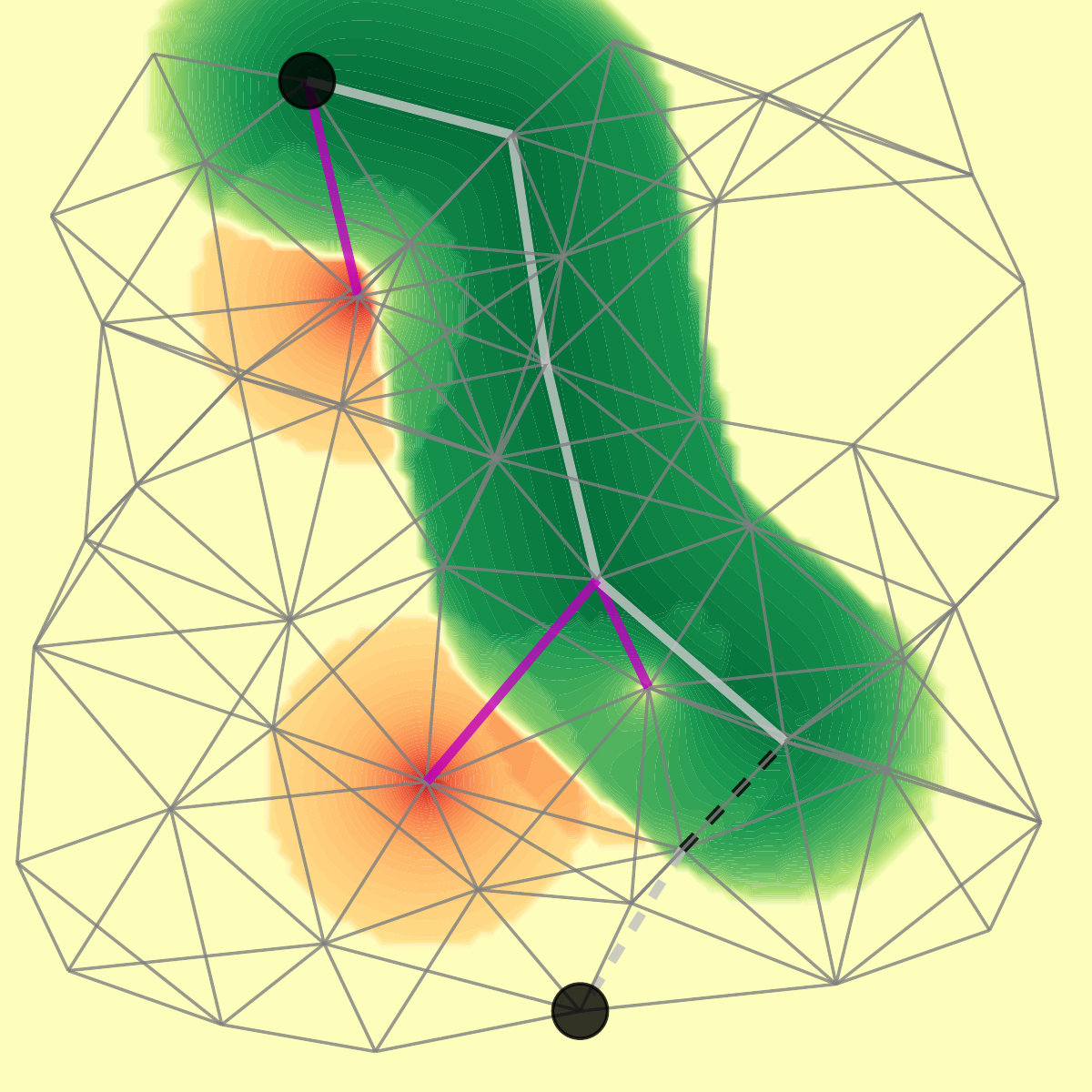}
		\caption{Model - Iteration 7}
		\label{fig:with-illus-path9}
	\end{subfigure}
	\begin{subfigure}[b]{0.5\columnwidth}
	\centering
		\includegraphics[width=0.9\columnwidth]{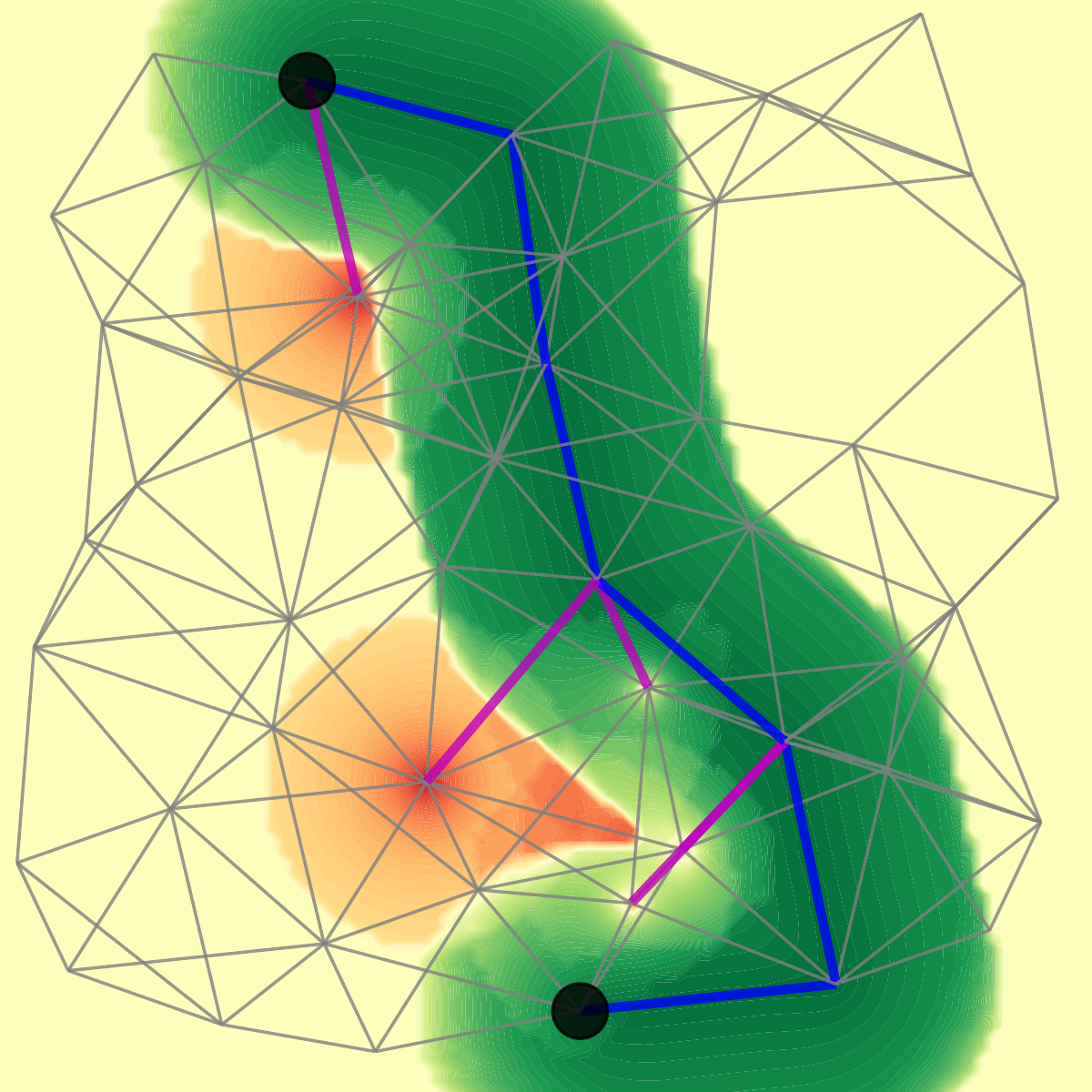}
		\caption{Model - First Path}
		\label{fig:with-illus-path12}
	\end{subfigure}

	\begin{subfigure}[b]{0.5\columnwidth}
	\centering
		\includegraphics[width=0.9\columnwidth]{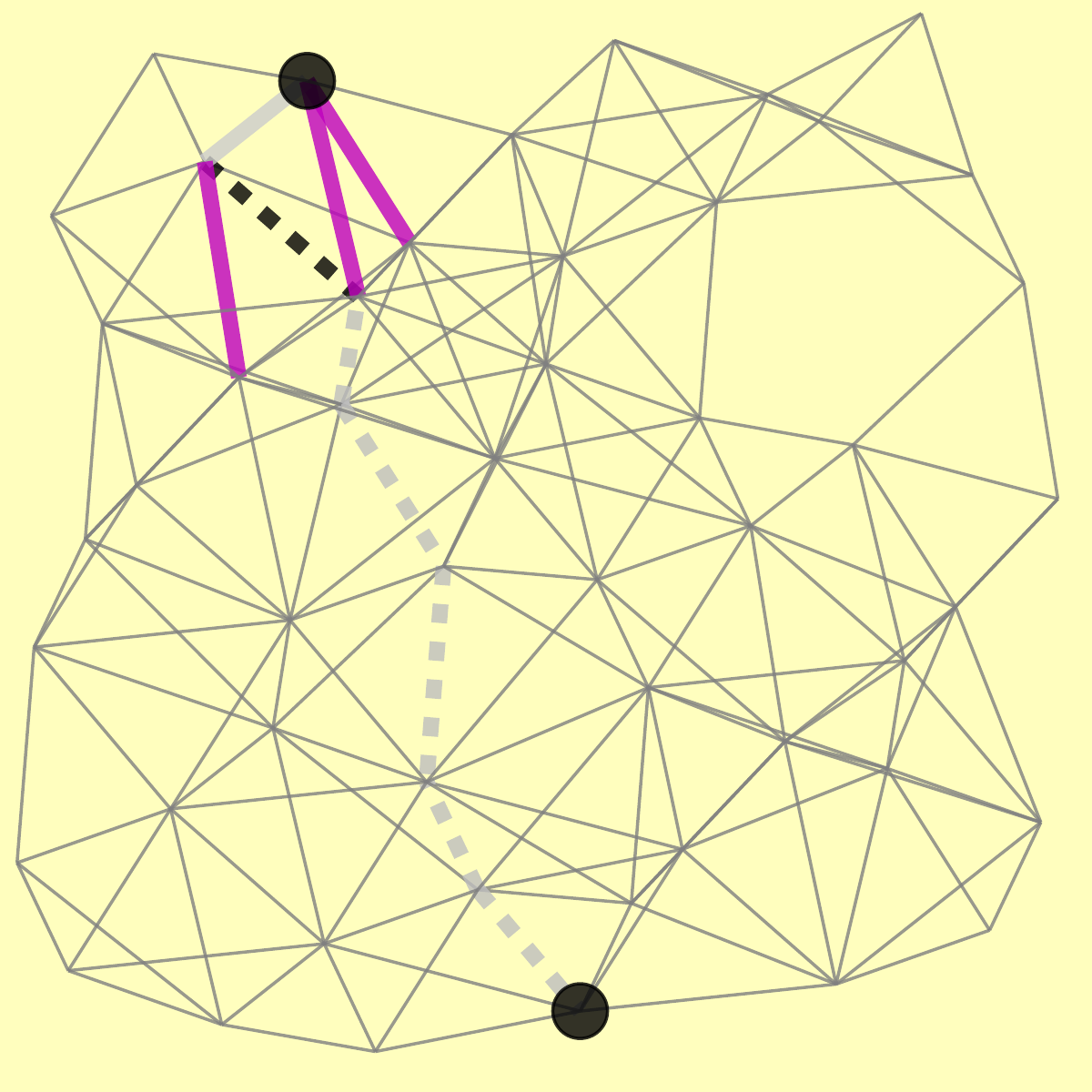}
		\caption{No Model - Iteration 4}
		\label{fig:with-illus-problem}
	\end{subfigure}
	\begin{subfigure}[b]{0.5\columnwidth}
	\centering
		\includegraphics[width=0.9\columnwidth]{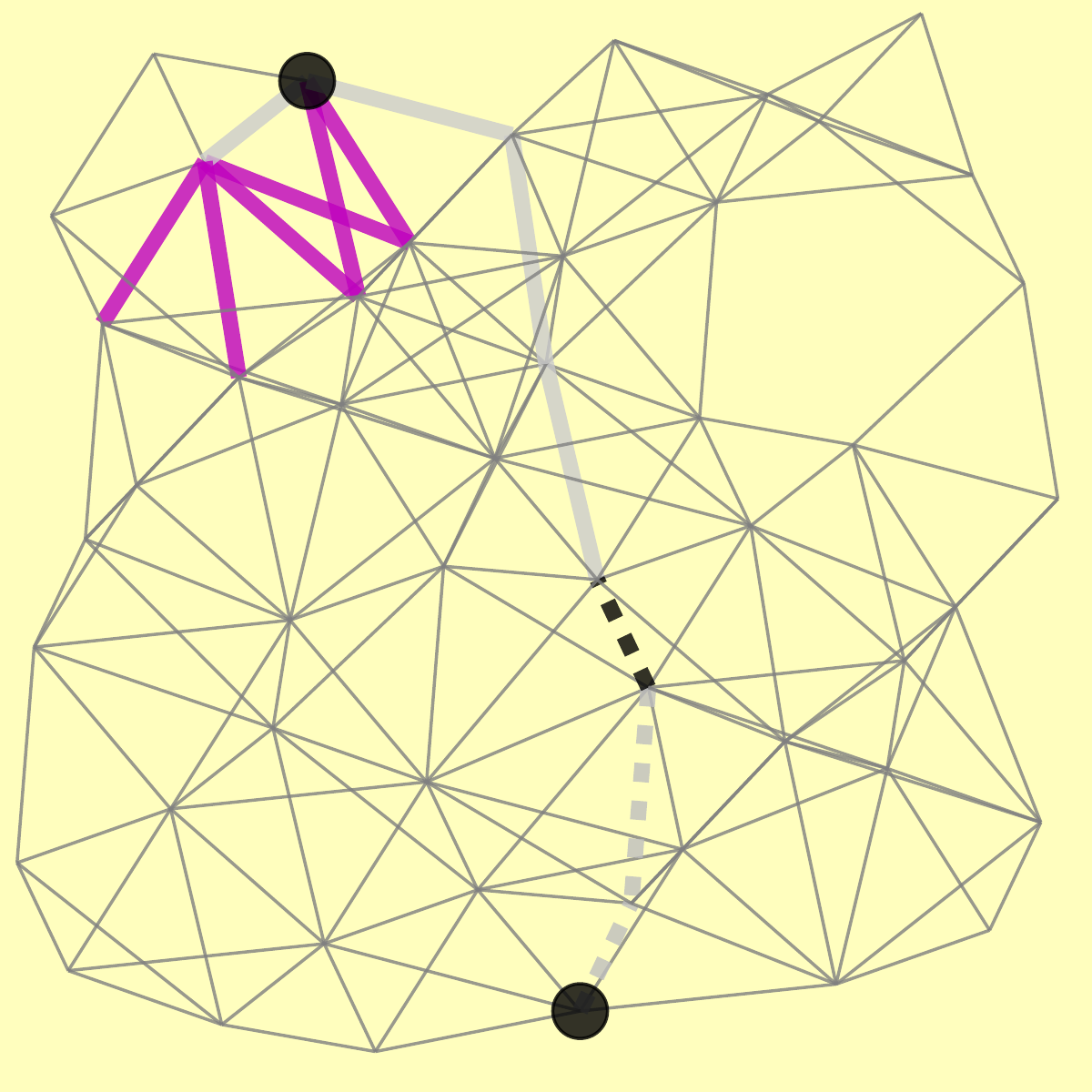}
		\caption{No Model - Iteration 10}
		\label{fig:with-illus-path4}
	\end{subfigure}
	\begin{subfigure}[b]{0.5\columnwidth}
	\centering
		\includegraphics[width=0.9\columnwidth]{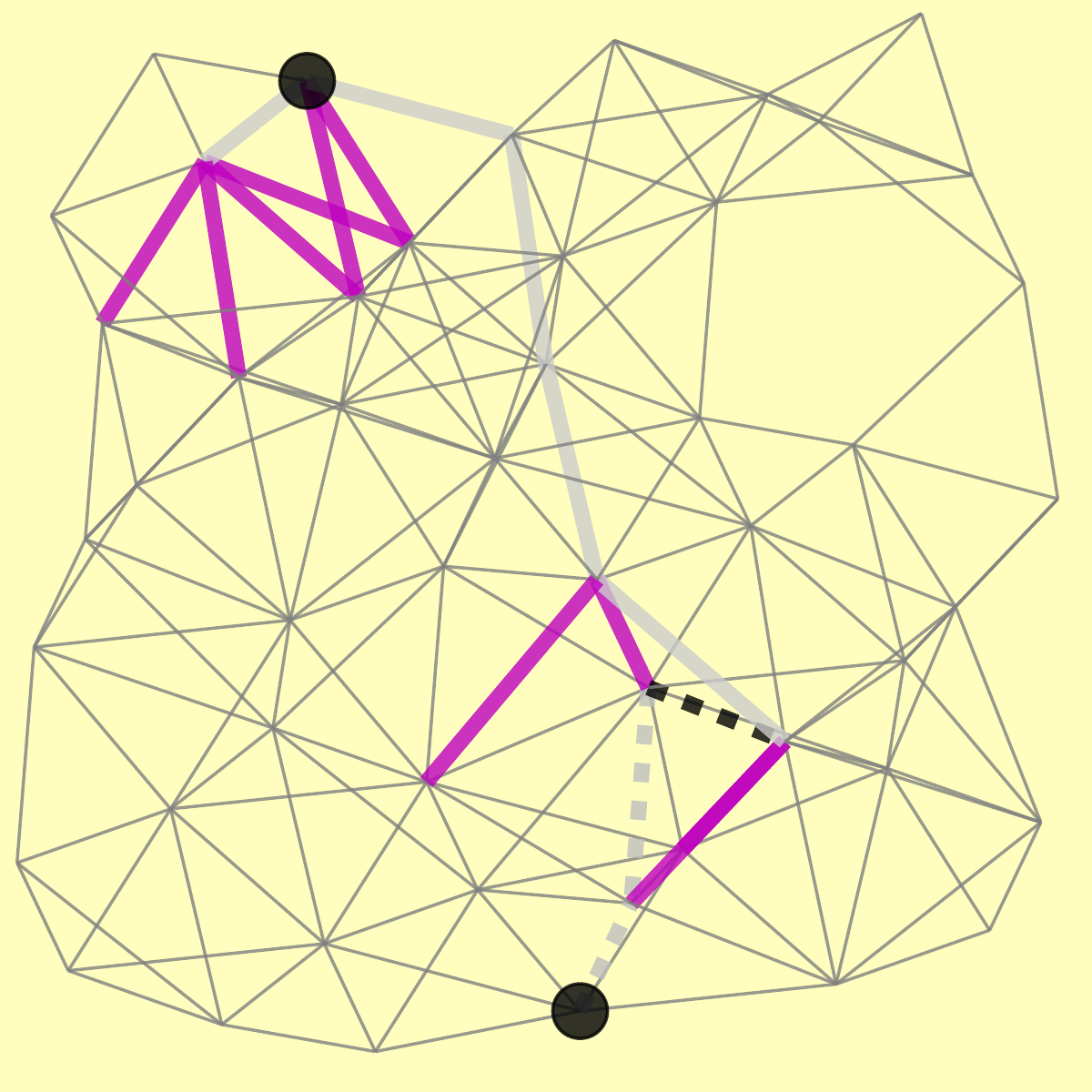}
		\caption{No Model - Iteration 15}
		\label{fig:with-illus-path9}
	\end{subfigure}
	\begin{subfigure}[b]{0.5\columnwidth}
	\centering
		\includegraphics[width=0.9\columnwidth]{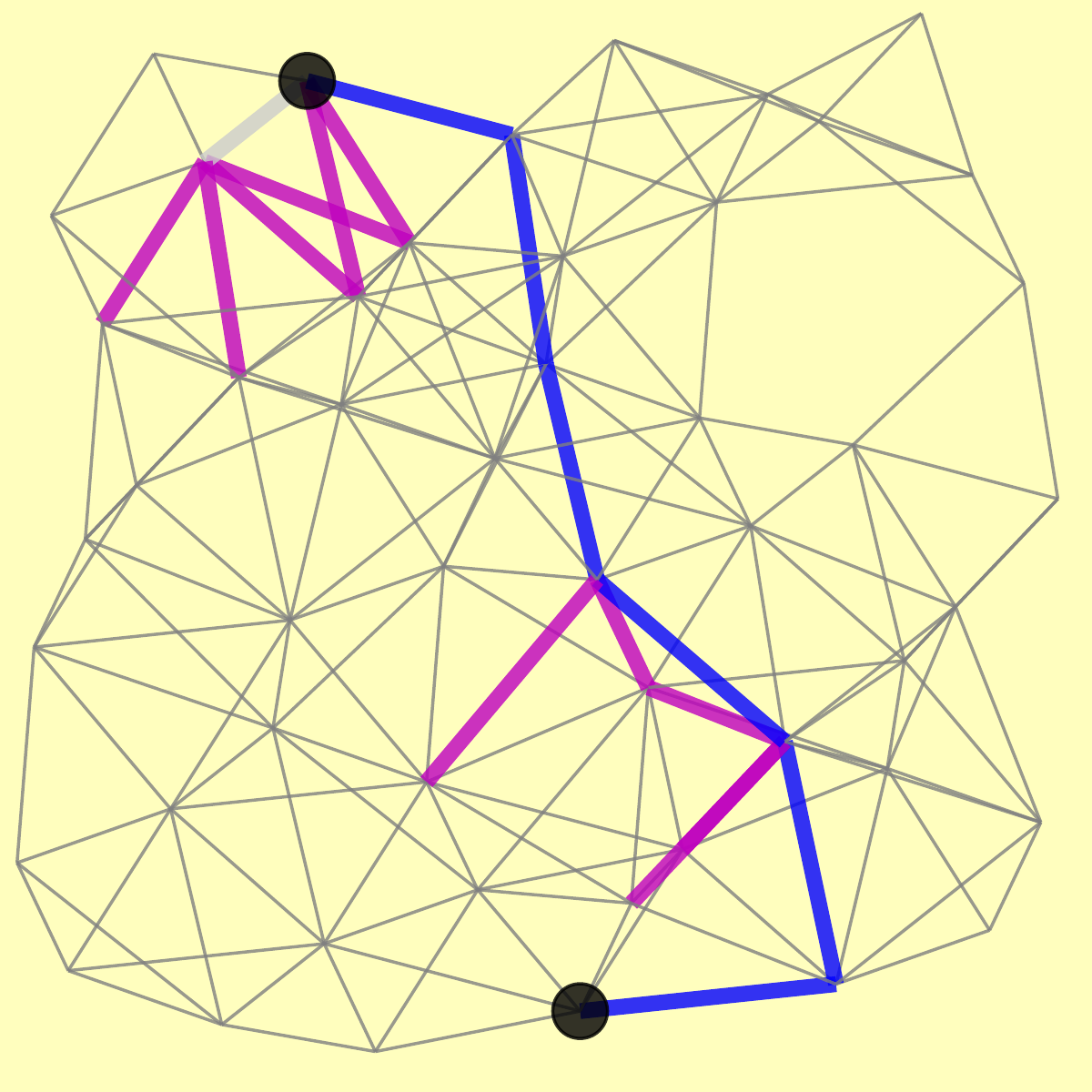}
		\caption{No Model - First Path}
		\label{fig:with-illus-path12}
	\end{subfigure}
	\caption{An illustration of the benefit of using configuration space beliefs. The upper and lower rows show runs of POMP on a 2D planning problem, with some finite model radius and zero radius respectively. The heatmap represents the belief model with green representing the belief of being free, and orange the belief of being in collision. The thin grey edges are unevaluated. The dashed edges are being evaluated. Thick grey edges are evaluated free, and thick magenta edges are evaluated in collision. The blue edges in the rightmost pictures represent the first feasible path in each case. Using the belief model, POMP requires 10 evaluations for the first feasible path, while without the model, it requires 18 evaluations.}
	\label{fig:illus_2d}
\end{figure*} 

\begin{figure}
\centering
	\includegraphics[height = 4.0cm,keepaspectratio]{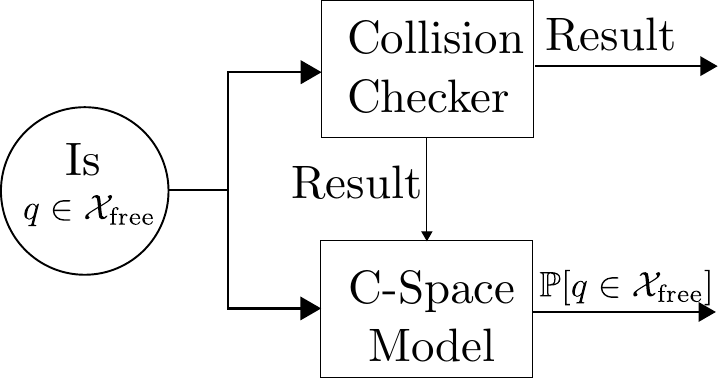}
	\caption[Schematic of the configuration space belief model]{
	 	The configuration space belief model is updated with the results of collision checks
	 	and is queried to obtain the collision probability of an unknown configuration.}
	\label{fig:cspacemodel}
\end{figure}

\subsection{Configuration Space Beliefs}
\label{sec:cspacebelief-beliefs}

The collision detector $\calM$ (\sref{sec:problem}) can be thought of as a perfect yet expensive \emph{binary classifier}, 
deciding if a queried configuration is in $\Cobs$ or $\Cfree$. Additionally, we consider an inexpensive 
but uncertain model $\Phi$ that takes in a configuration and outputs its belief that the query 
is collision-free, represented as  $\rho: \calX \mapsto [0,1]$. We can build and update this model 
using any black-box learner. Given a query $q$, we now have the choice of either 
inexpensively evaluating $\rho(q)$ from the model $\Phi$, or expensively querying the collision checker. 
A representation is shown in  \figref{fig:cspacemodel}. A number of different models exist in the literature~\citep{burns2005sampling,knepper2012real,pan2013faster} which could be used. We are concerned not with the specific model implementation
but with how to use this model to guide the search for paths on the roadmap.

\subsection{Edge Weights}
\label{sec:cspacebelief-wts}

Our POMP algorithm searches for paths that are Pareto-optimal in two criteria. The path criterion functions are  
obtained from two edge weight functions, which we define here.
The first is $w_l : E \rightarrow [0,\infty)$, which  measures the length of an edge based on our 
distance metric $\zeta$ on $\calX$. 
For an edge $(u,v)$ that is unevaluated or has been found (by $\calM$) to be collision-free, the weight is (optimistically) set to $\zeta(u,v)$.
For edges that have been evaluated to be in collision, the weight is set to $\infty$. 
The path length is obtained as $L(\gamma) = \sum\limits_{e \in \gamma} w_l(e)$.

The second weight function is $w_{m} : E \rightarrow [0,\infty)$, and it relates to the probability 
of the edge to be collision-free, based on our model $\Phi$. Specifically, $w_{m}(e) = -\text{log }\rho(e)$, 
where $\rho(e)$ is the probability of $e$ to be collision-free. We define $w_{m}(e)$ as a negative log to avoid numerical underflow
issues with $\rho(e)$, which is typically computed as $\rho(e) = \prod\limits_{q \in e}\rho(q)$, where
$\rho(q)$ is always less than 1, and edges have several embedded configurations.
An edge evaluated (by $\calM$) to be collision-free has $w_m(e) =0$ and 
a known-colliding edge has $w_m(e) = \infty$. If we assume conditional independence of configurations 
given the edge, we can express the log-probability of a path being in collision, $M(\gamma)$, in the same summation form as $L(\gamma)$:

\begin{multline}
\label{eq:collision_measure}
M(\gamma) = -\text{log } \mathbb{P}(\gamma  \in \Cfree) = -\mathrm{log} \prod \limits_{e \in \gamma} \rho(e) \\ =
\sum \limits_{e \in \gamma} -\text{log } \rho(e) =  \sum \limits_{e \in \gamma} w_{m}(e).
\end{multline}
We refer to $M$ as the \emph{collision measure} of the path. Having both $M(\gamma)$ and $L(\gamma)$ be additive over edges enables efficient searches over the roadmap, as we will describe subsequently.

\subsection{Weight Constrained Shortest Path}

Our first objective is to obtain some initial feasible path quickly, irrespective of path length. We search for paths that are most likely to be free according to $\Phi$. Once we have a feasible path, we search only for paths of shorter length, based on their likelihood of being free. Specifically, we want to search over paths most likely to be free, with a length lower than some upper bound, where the bound reduces over time, with each successive feasible solution. 
One way to represent this is by repeatedly solving the problem

\begin{equation}
\label{eq:wcsp}
\begin{aligned}
& \underset{\gamma}{\text{argmin}}
& & M(\gamma)\\
& \text{subject to}
& & L(\gamma) < L(\hat{\gamma}_i)
\end{aligned}
\end{equation}
and subsequently evaluating the returned solution for feasibility. The initial bound is $\infty$, after which the first bound is $L(\hat{\gamma}_0)$, where $\hat{\gamma}_0$ is the first feasible solution, and then $L(\hat{\gamma}_1)$ and so on. Therefore, the first iteration of the problem is an unconstrained shortest path problem. For a particular finite upper bound, however, this problem is an instance of the Weight Constrained Shortest Path (WCSP) problem.

\begin{figure}
\centering
	\includegraphics[width=0.9\columnwidth]{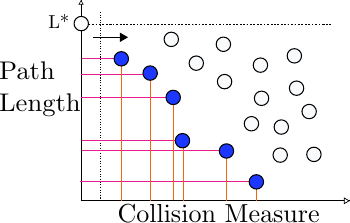}
	\caption
	{
		The LazyWCSP algorithm performs a horizontal sweep on the path measure plane to select the left-most point. If each point chosen is feasible (collision-free), and there are no updates to the model $\Phi$ between searches, this sweeps out the Pareto frontier of valid points, with respect to initial weights.
	}
	\label{fig:wcsp-sweep}
\end{figure}

The above is one way we could have formulated the problem, based on our intuition. A closer look will reveal, however, that this is not the most appropriate. Therefore \textbf{we will use an alternative formulation} to \eref{eq:wcsp}, which we will explain shortly. We visualize paths on a 2D plane in terms of their two weights - the path length $L$ and the collision measure $M$. Each path is a point on this \emph{path measure plane}, as shown in \figref{fig:wcsp-sweep}.

For bicriteria problems such as the one we are facing, a point (path) is \emph{strictly dominated} by another point if it is worse off in both criteria than the latter. For instance, if both criteria are to be minimized, and if there are two points $\tau, \tau^{\prime}$ such that $L(\tau) < L(\tau^{\prime})$ and $M(\tau) < M(\tau^{\prime})$, then $\tau$ strictly dominates $\tau^{\prime}$, i.e. $\tau \succ \tau^{\prime}$. A point is \emph{Pareto optimal} if it is not strictly dominated by any other point. The set of Pareto optimal points is known as the \emph{Pareto frontier}.

Consider a simple approach that uses WCSP and evaluates paths lazily, updating the model $\Phi$ after each search and solving the updated problem defined in \eref{eq:wcsp}. Let us call this the \emph{LazyWCSP} method. It repeatedly performs a horizontal sweep over the points in the plane under some horizontal line, the upper length bound (initially there is no line as the bound is $\infty$). It selects the left-most one (with minimum collision measure $M$) to evaluate. If the path is infeasible, it is moved infinitely to the right, and if feasible, it is moved left onto the $y$-axis, with the collision measure set to 0, and is flagged as the current best solution $\gamma_i$. The upper bound $L^{*}$ is now $L(\gamma_i)$, represented by a horizontal line. The collision testing of previously unknown configurations updates the model $\Phi$ with the configuration and its collision status, which in turn updates the $x$-coordinate of certain points.

\begin{figure}
\centering
\captionsetup[subfigure]{justification=centering}
\begin{subfigure}[b]{0.99\columnwidth}
\centering
	\includegraphics[width=0.9\columnwidth]{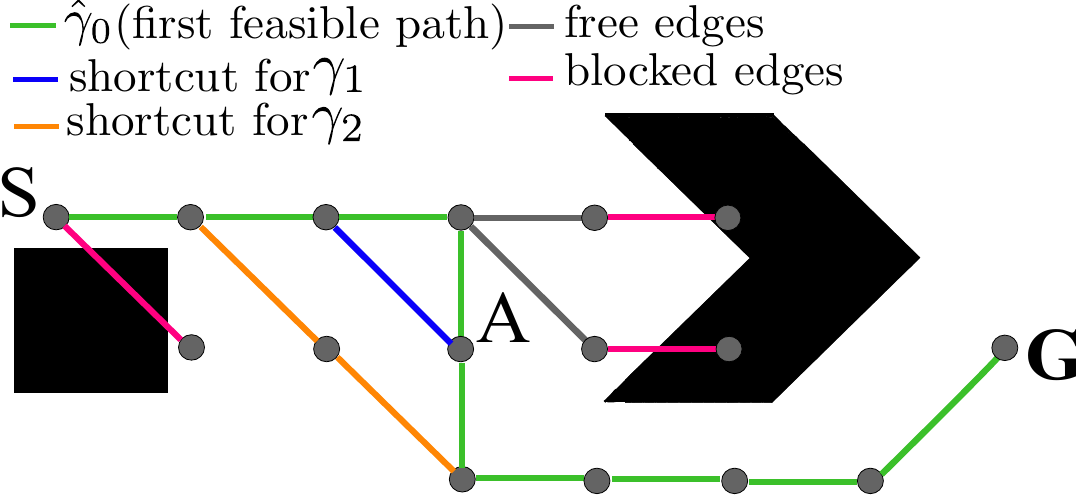}
	\caption{}
	\label{fig:wcspp-toy}
\end{subfigure}

\begin{subfigure}[b]{0.49\columnwidth}
\centering
	\includegraphics[width=0.99\columnwidth]{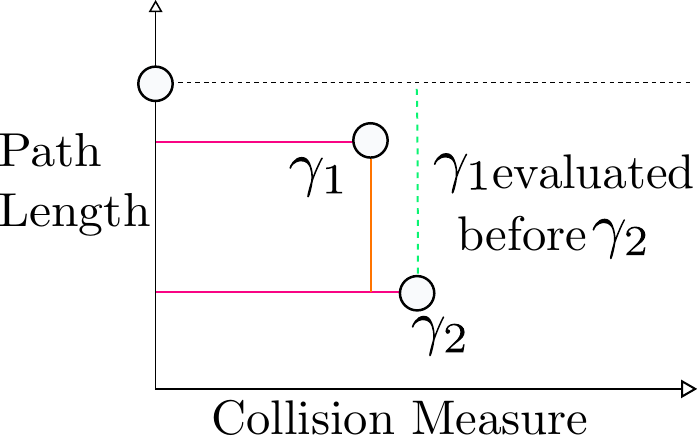}
	\caption{}
	\label{fig:wcspp-ex1}
\end{subfigure}
\begin{subfigure}[b]{0.49\columnwidth}
\centering
	\includegraphics[width=0.9\columnwidth]{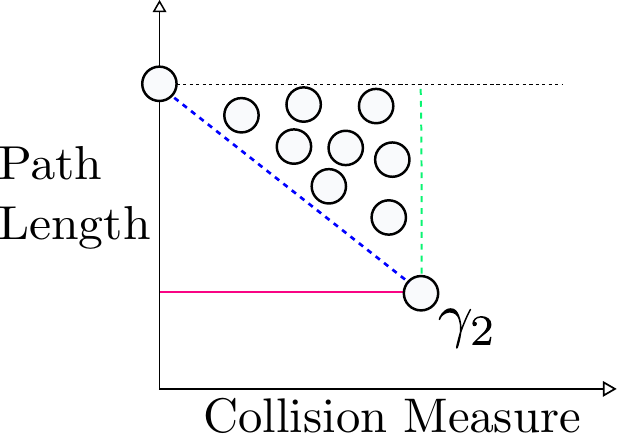}
	\caption{}
	\label{fig:wcspp-ex2}
\end{subfigure}
\caption{Two problematic scenarios for using LazyWCSP. A toy case is shown in (\subref{fig:wcspp-toy}), along with two possible corresponding path measure plots. In (\subref{fig:wcspp-ex1}), a small decrement in collision measure for $\gamma_1$ is prioritized over a larger decrement in path length for $\gamma_2$. In (\subref{fig:wcspp-ex2}), all points above the blue line and to the left of $\gamma_2$ are evaluated before the more promising $\gamma_2$. These points correspond to several paths through A.}
\label{fig:wcspp-exs}
\end{figure}

\begin{figure*}[t]
\centering
\captionsetup[subfigure]{justification=centering}
	\begin{subfigure}[b]{0.5\columnwidth}
	\centering
		\includegraphics[width=0.9\columnwidth]{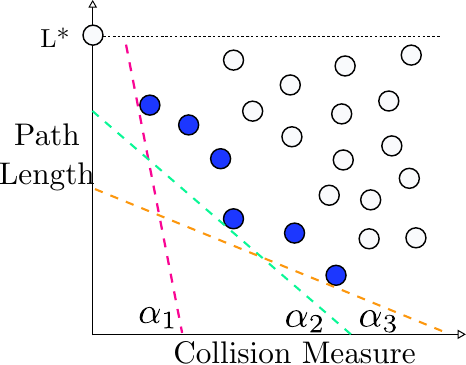}
		\caption{}
		\label{fig:alg-hulltrace}
	\end{subfigure}
	\begin{subfigure}[b]{0.5\columnwidth}
	\centering
		\includegraphics[width=0.9\columnwidth]{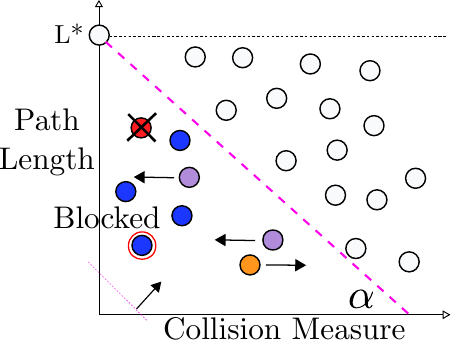}
		\caption{}
		\label{fig:alg-diagsweep}
	\end{subfigure}
	\begin{subfigure}[b]{0.5\columnwidth}
	\centering
		\includegraphics[width=0.9\columnwidth]{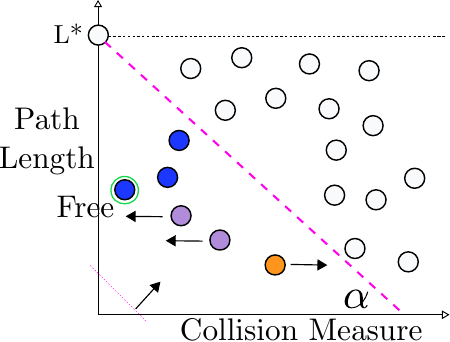}
		\caption{}
		\label{fig:alg-convswitch}
	\end{subfigure}
	\begin{subfigure}[b]{0.5\columnwidth}
	\centering
		\includegraphics[width=0.9\columnwidth]{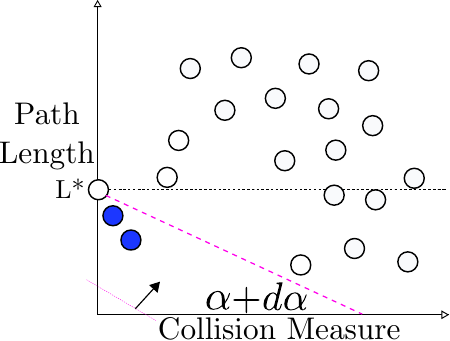}
		\caption{}
		\label{fig:alg-afterswitch}
	\end{subfigure}
	\caption{The key insights to our algorithm POMP. (\protect\subref{fig:alg-hulltrace}) Under the assumption of no model updates, 
 	POMP selects candidates paths along the convex hull of the Pareto frontier for different values of $\alpha$. The actual behaviour of POMP, without that assumption, is shown through (\protect\subref{fig:alg-diagsweep}) - (\protect\subref{fig:alg-afterswitch}). The diagonal sweep corresponds to searching with $J^{\alpha}$, and the first point found in the sweep corresponds to the path that minimizes $J^{\alpha}(\gamma)$. When a path is evaluated, other paths may have their $M$-values \protect\tikz{\protect\node[fill=orange,draw=black]{};}\; increased or \protect\tikz{\protect\node[fill=mypurple,draw=black]{};}\; decreased, and they may be \protect\tikz{\protect\node[fill=red,draw=black]{};}\; deleted if they share infeasible edges.}
	\label{fig:algfig}
\end{figure*}

\subsection{POMP}
\label{sec:cspacebelief-pomp}

The performance of LazyWCSP (and related algorithms in this setting) is affected by the frequency of 
updates to the model $\Phi$. Updating $\Phi$ after each candidate path evaluation makes the searches
more informed but is more computationally expensive than, say, updating $\Phi$ only after a new 
feasible solution is found. Given the specific belief model our algorithm uses, we are able to perform model updates efficiently (\ref{sec:cspacebelief-algorithm}) and so we update $\Phi$ after each candidate path evaluation.

Our characterization of the behaviour of LazyWCSP (and related algorithms), however, is easier if 
we update the model $\Phi$ only when a new feasible solution is found. Under this assumption, between any two successive feasible solutions, LazyWCSP traces out the Pareto frontier of the feasible paths with respect to their initial coordinates, as shown in \figref{fig:wcsp-sweep}. In both of the examples in \figref{fig:wcspp-exs}, this defers the evaluation of the more promising $\gamma_2$.

Therefore, instead of doing LazyWCSP, we explicitly control the tradeoff between the two weights by defining the objective function for paths as a convex combination of the two weights,

\begin{equation}
\label{eq:weight_comb}
J^{\alpha}(\gamma) = \alpha L(\gamma) + (1-\alpha) M(\gamma) \ , \ \alpha \in [0,1].
\end{equation}

Minimizing $J^{\alpha}$ for various choices of $\alpha$, traces out the convex hull of the Pareto frontier of the initial coordinates, as shown in \figref{fig:alg-hulltrace}. This is the key idea behind our algorithm POMP, or Pareto-Optimal Motion Planner. The $\alpha$ parameter represents the tradeoff between the weights. Also, optimizing over $J^{\alpha}$ implicitly satisfies the constraint on $L$. If the current solution is $\gamma_i$, then for any path $\gamma^{\prime}$

\begin{align*}
& J^{\alpha}(\gamma^{\prime}) < J^{\alpha}(\gamma_i) \\
&\implies \alpha L(\gamma^{\prime}) + (1-\alpha) M(\gamma^{\prime}) < \alpha L(\gamma_i) \ \ [ \text{as} \  M(\gamma_i) = 0]\\
&\implies L(\gamma^{\prime}) < L(\gamma_i).
\end{align*}

Therefore our problem can be restated as
\begin{equation}
\label{eq:pomp}
\begin{aligned}
& \underset{\gamma}{\text{argmin}}
& & J^{\alpha}(\gamma),
\end{aligned}
\end{equation}
where each candidate path is evaluated lazily for collision, as done before.

The path measure functions are additive over edges, and so is $J^{\alpha}$. Therefore, each iteration of the algorithm is now a shortest path search problem. 
This allows us to trace out the convex hull of the pareto frontier without explicitly enumerating all possible 
candidate paths in the roadmap, which is $O(N!)$.
The edge weight function for each search is obtained as 

\begin{equation}
\label{eq:conv_obj}
w_{j}^{\alpha}(e) = \alpha w_{l}(e) + (1-\alpha) w_{m}(e) \ , \ \alpha \in [0,1].
\end{equation}
When the previous assumption is relaxed, i.e. when collision measures of paths are updated after each search, the corresponding points move and the Pareto frontier moves as well after each search. A visual description of an intermediate 
search of POMP is in Figure \ref{fig:algfig}.

\subsection{Algorithm}
\label{sec:cspacebelief-algorithm}

POMP is outlined in Algorithm \ref{alg:prob_search}.
The $\alpha$ parameter begins from 0, and there is initially no feasible solution (Line 1).
For all $\alpha \leq 1$,
POMP carries out repeated shortest path searches with the 
weight function $w_{\text{j}}^{\alpha}$.
We use the $\text{A}^{*}$ algorithm~\citep{hart1968formal} for the underlying search 
where the heuristic function is the Euclidean length scaled by $\alpha$. 
When $\alpha = 0$, i.e. when the weight function is only $w_m$, the Djikstra~\citep{dijkstra1959note} search algorithm 
($\text{A}^{*}$ with the zero heuristic) is used.
This corresponds to there being no heuristic for the collision measure $w_m$.

Each POMP search is done optimistically, without any edge evaluations.
If the path returned by this search, $\gamma_{\text{new}}$, is the same as the 
current shortest feasible path $\gamma_{\text{curr}}$
(this can only happen once the first feasible solution has been found),
then no shorter candidate path can be found with the current $\alpha$.
In this case POMP immediately increases $\alpha$ (Line 14), which has the effect
of prioritizing length $w_l$ more, and 
resumes searching with a new $w_j^{\alpha}$.

When a new candidate path $\gamma_{\text{new}}$ is obtained, it is lazily evaluated using the
helper method \texttt{LazyEvalPath}, outlined in Algorithm \ref{alg:eval_path}.
\texttt{LazyEvalPath} steps through each  edge in $\gamma_{\text{new}}$, evaluating 
it and updating its weight according to the result. It also returns
the status of the path to POMP. If $\gamma_{\text{new}}$ is feasible (Line 7),
then POMP updates its current best solution $\gamma_{\text{curr}}$ to $\gamma_{\text{new}}$ (Line 8)
and reports it (Line 9). Before starting the next search, POMP increments the $\alpha$ parameter (Line 14).
If, however, $\gamma_{\text{new}}$ turns out to be in collision (Line 10),
POMP simply resumes searching the roadmap with the same $w_j^{\alpha}$.

We increase $\alpha$ monotonically from $0$ to $1$ (rather than restarting from $0$ after each search)
for the same reason we advocated for POMP
over LazyWCSP - we want to prioritize length more and more as we search for feasible paths
in the roadmap. We also noted empirically that the monotonic increase of $\alpha$ traced out a sequence of candidate paths
similar to that obtained from restarting, while reducing the total number of searches conducted.
Regarding the step-size $d\alpha$, the two quantities that are added in the cost function, $w_l$ and $w_m$, have different units, and so the $d\alpha$ value needs to be chosen to balance them reasonably. This is a domain-specific exercise.

\begin{algorithm}[t]
\caption{{POMP}} \label{alg:prob_search}
\begin{algorithmic}[1]

\renewcommand{\algorithmicrequire} {\textbf{Input :} }
\Require $G = (V,E)$, $w_{l}$, $w_{m}$, $s$, $t$, $\Phi$
\State $\alpha \gets 0$ , \ $\gamma_{\text{curr}} = \emptyset$
\While{$\alpha \leq 1$}
	\State $w_{\text{j}}^{\alpha}(e) = \alpha w_{l}(e) + (1-\alpha) w_{m}(e), \ \forall e \in E$
	\State $\gamma_{\text{new}} \gets $ \verb!AStar_Path!$(G,w_{\text{j}}^{\alpha})$
	\If{$\gamma_{\text{new}} \neq \gamma_{\text{curr}}}$
		\State \verb!LazyEvalPath!$(G,\gamma_{new},\Phi)$
		\If{$\gamma_{\text{new}} \in \Cfree$}
			\State $\gamma_{\text{curr}} \gets \gamma_{\text{new}}$
			\State \textbf{yield} $\gamma_{\text{curr}}$
		\Else 
			\State \textbf{continue}
		\EndIf
	\EndIf
	\State $\alpha \gets \alpha + d\alpha$
\EndWhile
\State $\gamma^{*} \gets \gamma_{\text{curr}}$

\end{algorithmic}
\end{algorithm}

\subsection{Implementation}

Here we outline two key implementation issues for POMP, that are related to the specific
kind of belief model that we use.

\subsubsection{Configuration Space Model}
\hfill\\
We utilize a $k$-NN method similar to one used previously~\citep{pan2013faster}.
When $q_i \in \calX$ is evaluated for feasibility, we obtain $F(q_i) = 0 \text{ if } q_i \in \Cfree$ or $1$ otherwise. Then we add $(q_i,F(q_i))$ to the model. Given some new query point q, we obtain the $k$ closest known instances to q, say $\left\{q_1,q_2 \ldots q_k\right\}$, and then compute a weighted average of $F(q_i)$ with weight $w_i = \frac{1}{\zeta(q,q_i)}$. To this quantity, we also do additive smoothing with a prior probability $\lambda$ of weight $w_{\lambda}$. This is a common technique used to smooth the value
returned by an estimator by injecting a prior value. Additive smoothing is particularly useful for getting less noisy estimates when the model has scarce information.
Therefore,
\begin{align*}
&\mathbb{P}[q \in \Cobs] = \frac{\mathbf{w} . \mathbf{F} + w_{\lambda}\cdot \lambda}{|\mathbf{w}| + w_{\lambda}} \\
&\rho(q) = 1 - \mathbb{P}[q \in \Cobs]
\end{align*}
where $\mathbf{w} = [w_1 , w_2 \dotsc w_k]^{T}$ and $\mathbf{F} = [F(q_1) , F(q_2) \dotsc F(q_k)]^{T}$.

In principle, the model lookup and update costs increase with the number of samples, while a collision check is always $O(1)$.
In practice, a datastructure like the Geometric Near-neighbour Access Tree (GNAT)~\citep{brin1995near} makes model interactions efficient, much more so than the average collision check (especially for articulated manipulators).
The GNAT data structure is also optimized for fast querying rather than updating - this aligns well with POMP as it queries the model
multiple times in each search, but only updates the model after evaluating a candidate path.
Asymptotically, however, the model lookup time will exceed check time~\citep{kleinbort2016collision}.

\subsubsection{Efficient Model Updates}
\hfill\\
An important subtlety in Algorithm~\ref{alg:eval_path} is Line 4, where the model is updated 
with the data from collision-checking edge $e$. A change in $\Phi$ implies a change
in the $w_m$ function for potentially all edges.
Naively, this update is done by re-computing $w_m$ for each unevaluated edge
remaining in the roadmap. This is quite wasteful, however, as there are several edges in the roadmap
whose collision measure would be unaffected by the status of edge $e$.
On the other hand, the minimal set of potentially affected edges is the set of all edges which have an embedded configuration
for which any configuration embedded in $e$ is a $k$-nearest neighbour. Computing this
involves running several \emph{reverse k-NN} searches on the configurations embedded in $e$,
an expensive procedure~\citep{singh2003high}.

\begin{algorithm}[t]
\caption{{LazyEvalPath}} \label{alg:eval_path}
\begin{algorithmic}[1]

\renewcommand{\algorithmicrequire} {\textbf{Input :} }
\Require $G = (V,E)$,$\gamma$, $\Phi$
	\For{$e \in \gamma$} 
		\If{$e$ is unevaluated}
			\State \verb!Evaluate!$(e)$
			\State Update $\Phi$ with collision data for $e$.
			\If{$e \in \Cobs$}
				\State $w_{l}(e),w_{m}(e) \gets \infty$  \Comment{Known blocked edge.}
				\State\textbf{return} $\gamma \in \Cobs$
			\Else
				\State $w_{m}(e) \gets 0$ \Comment{Known free edge.}
			\EndIf
		\EndIf
	\EndFor
	\State \textbf{return} $\gamma \in \Cfree$

\end{algorithmic}
\end{algorithm}

\begin{figure*}[t]
	\centering
	\captionsetup[subfigure]{justification=centering}
	\begin{subfigure}[b]{0.99\columnwidth}
		\centering
		\includegraphics[width=0.9\textwidth]{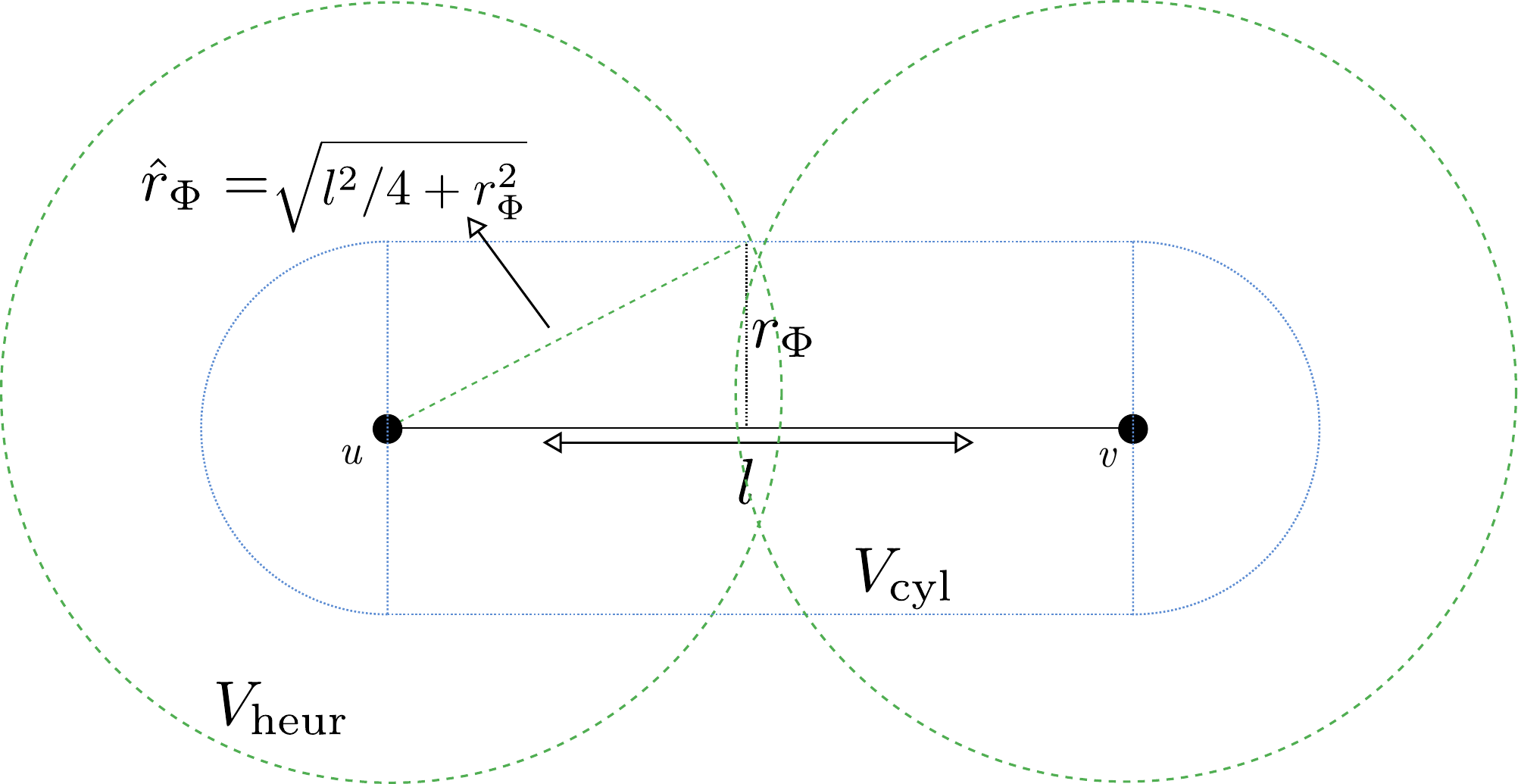}
		\caption{}
		\label{fig:affected-1}
	\end{subfigure}
	\begin{subfigure}[b]{0.99\columnwidth}
		\centering
		\includegraphics[width=0.9\textwidth]{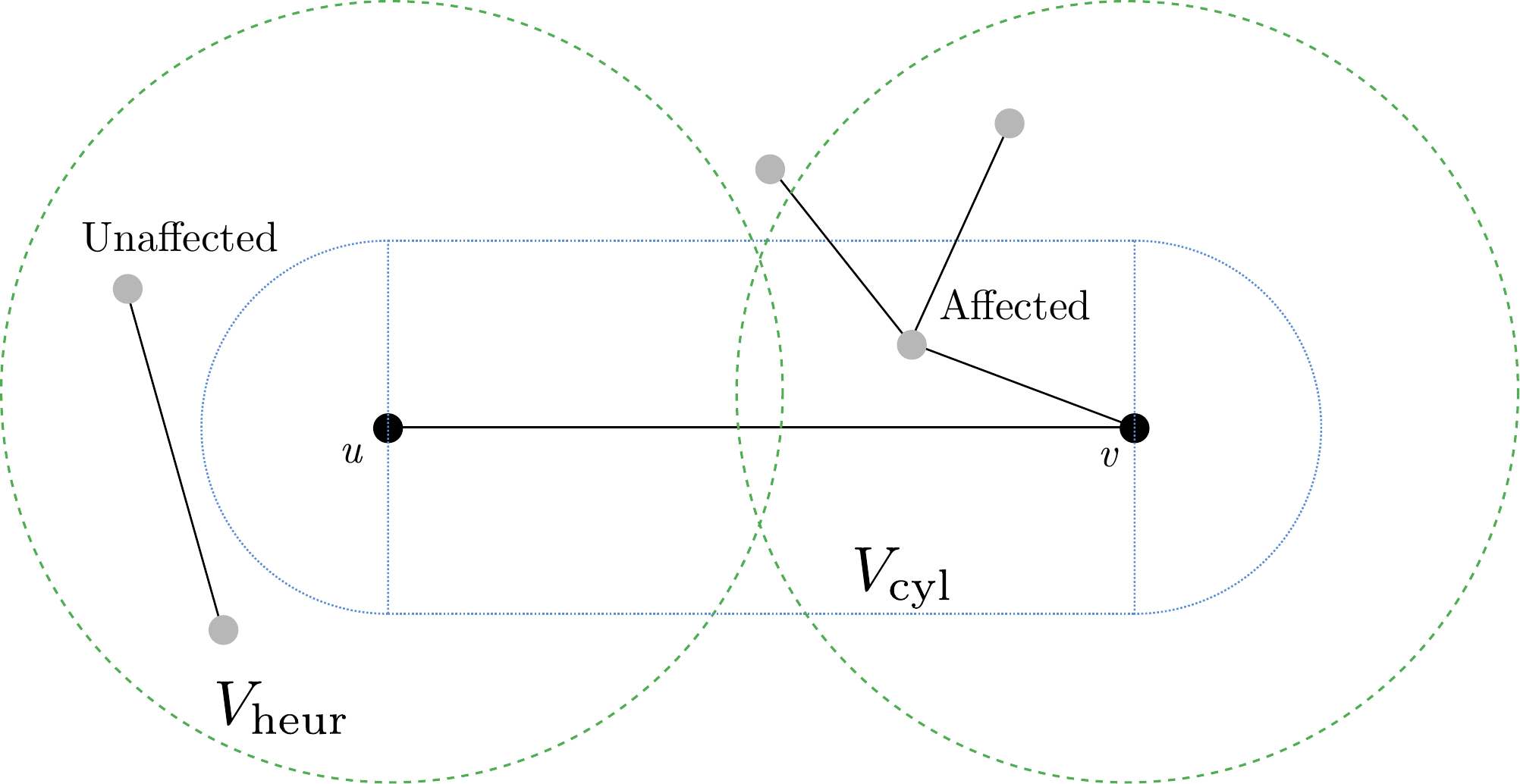}
		\caption{}
		\label{fig:affected-2}
	\end{subfigure}
	\caption{ (\subref{fig:affected-1})~When the edge $e = (u,v)$ is evaluated for collision, the
	 edges affected by the result are those with some configurations in the blue hyper-cylinder around $e$, where
	 $r_{\Phi}$ is the distance threshold for the model. This is expensive to compute exactly, so we instead 
	 query all vertices in the green hyper-sphere of radius $\hat{r}_{\Phi}$ around each of $u$ and $v$,
	 and update all incident edges for those vertices. (\subref{fig:affected-2})~While efficient,
	 this potentially recomputes the collision measure for unaffected edges that lie outside the hyper-cylinder.
	}
	\label{fig:affected}
\end{figure*}

With a simple assumption on our $k$-NN model $\Phi$, we can be efficient in obtaining
the set of potentially affected edges for each evaluated edge. The assumption is that
there is some minimum distance threshold $r_{\Phi}$, beyond which the result of an
evaluated configuration does not affect the estimate for the query
(even if the former happens to be a $k$-nearest neighbour of the latter).
This is a reasonable assumption to make, particularly in the beginning when
the model is very sparse and the $k$-nearest neighbours of a queried configuration
may contain evaluated configurations that are far away.

Under this assumption, given an edge $e = \left(u, v \right)$ of length $l$ 
that is evaluated, all configurations that could possibly be affected are in the union of the
 $d$-dimensional
hyper-cylinder of radius $r_{\Phi}$ and axial length $l$, and the two hyperspheres
of radius $r_{\Phi}$ around the edge endpoints (\figref{fig:affected}).
The volume of this cylinder is 
\begin{equation}
\xi_{\text{cyl}} = \xi_{d-1} \cdot r_{\Phi}^{d-1} \cdot l + \xi_{d} \cdot r_{\Phi}^{d},
\end{equation}
where $\xi_d$ is the volume of the $d$-dimensional unit hypersphere. So the set of all
affected edges is the set of incident edges of vertices within $\xi_{\text{cyl}}$.
As mentioned previously, obtaining this exactly is difficult and expensive.

In practice, we use the heuristic of querying all neighbouring vertices within a radius
$\hat{r}_{\Phi} = \sqrt{l^2/4 + r_{\Phi}^2}$ from each of $u$ and $v$. This involves 
exactly $2$ queries of $O(\text{log }|V|)$ time complexity each, where $|V|$ is the 
current number of vertices in the roadmap.
As shown in \figref{fig:affected}, this ensures that we include all possible affected configurations,
but is an over-estimate as well. The volume that we consider via this heuristic is the 
union of the volume of the 2 hyperspheres, each of radius $\hat{r}_{\Phi}$, around
$u$ and $v$. The heuristic volume is then
\begin{equation}
\xi_{\text{heur}} \leq 2 \cdot \xi_{d} \cdot \left(l^2/4 + r_{\Phi}^2\right)^{\frac{d}{2}}.
\end{equation}

This heuristic is a conservative approximation, thus $\xi_{\text{heur}}$ is strictly greater than $\xi_{\text{cyl}}$
for all values of the parameters. However, other than for evaluated edges where $l \approx \sqrt{d}$, the number of edges
in the hyperspheres is far less than the total number of edges in the roadmap being considered, which is
the naive update strategy. Furthermore, if any unnecessary edge update must be avoided, the set of vertices inside
the hyperspheres can be used as a candidate set from which only vertices inside the hypercylinder are returned,
via a (relatively inexpensive) hypercylinder membership test.

\subsection{Minimizing Expected Length}

Each roadmap search of POMP uses as the objective function a convex combination of edge length and collision measure
based on the current C-space belief. We have explained how this is motivated by our intution that tracing
the convex hull of the Pareto frontier of the path measure plane achieves a desirable tradeoff between
solution quality and likelihood of feasibility. 
Given a C-space belief model, an alternate (perhaps more natural) approach would be to iteratively search for paths 
of minimum expected length. In this section, we will relate the search behaviour of POMP to the search
behaviour induced by this minimum expected length algorithm, to provide
additional theoretical motivation for POMP.

If $\hat{J}(\gamma) = \mathbb{E}[J(\gamma)]$ is the expected length of a path $\gamma$, we know, by
linearity of expectation, that $\hat{J}(\gamma) = \sum\limits_{e \in \gamma}\hat{w_{j}}(e)$.
For the expected length of an edge $\hat{w_j}$, the length of the edge,
if collision-free is $w_{l}(e)$, and if in collision is $w_l(e) + \beta$, where $\beta$ is a length-independent
factor $(\beta \geq 0)$.
Though we require $\gamma \in \Cfree$, we do not use a formulation where the length of $e$ is $w_{l}(e)$ if free and  $\infty$ if
in collision as that would make the expected length of any unevaluated edge $\infty$, which makes analysis meaningless. Because the algorithm eventually evaluates edges, no infeasible paths will be reported as solutions so we do consider the cost of
collision $\infty$ in implementation anyway. The above formulation has appeared in a similar context
for motion-planning roadmaps~\citep{missiuro2006adapting}. We denote $\hat{w_j}$
as $\hat{w_j}^{\beta}$ to indicate that $\beta$ is a parameter.

The larger the value of $\beta$, the more we penalize an edge with a higher likelihood
of collision, regardless of its length. An anytime algorithm that minimizes
expected length while searching for shorter paths would gradually reduce this penalty 
and prioritize the length more. We will show that the sequence of candidate paths
selected by this minimum expected length formulation has a similar trend, 
in terms of length and collision measure, to 
that selected by POMP. Specifically, candidate paths that POMP selects (assuming no model updates while searching) 
have decreasing path length and increasing collision measure as the sequence progresses. 
This is what we mean by the \emph{trend} of the sequence.
It turns out that the minimum expected length algorithm also generates a sequence of candidate paths
with this trend.
We will support this claim by showing that the edge weight function for minimizing expected length, $\hat{w_j}^{\beta}$
behaves similarly to the edge weight function for POMP, $w_j^{\alpha}$, as $\beta$ decreases from $\beta \rightarrow \infty$ to $0$,
and $\alpha$ correspondingly increases from $0$ to $1$.

\begin{figure*}[t]
    \captionsetup[subfigure]{justification=centering}
    \begin{subfigure}[b]{0.99\columnwidth}
        \centering
        \includegraphics[width=0.8\textwidth]{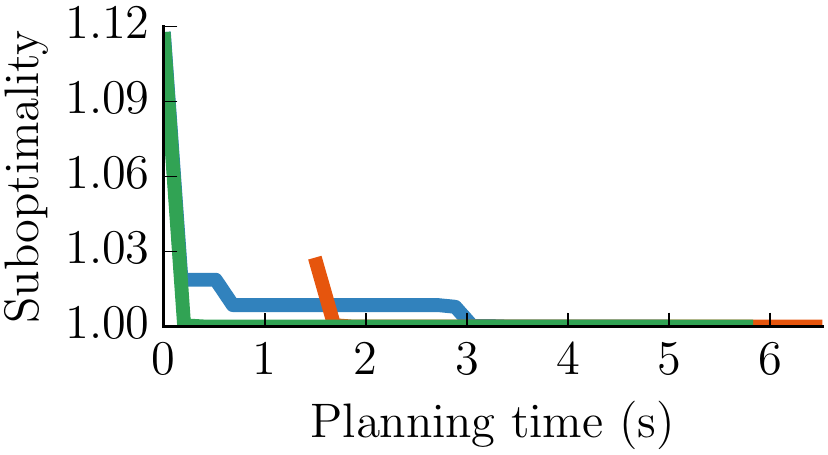}
        \caption{ $\mathbb{R}^{2}$ - Easy}
        \label{fig:results_2d_easy}
    \end{subfigure}
    \begin{subfigure}[b]{0.99\columnwidth}
        \centering
        \includegraphics[width=0.8\textwidth]{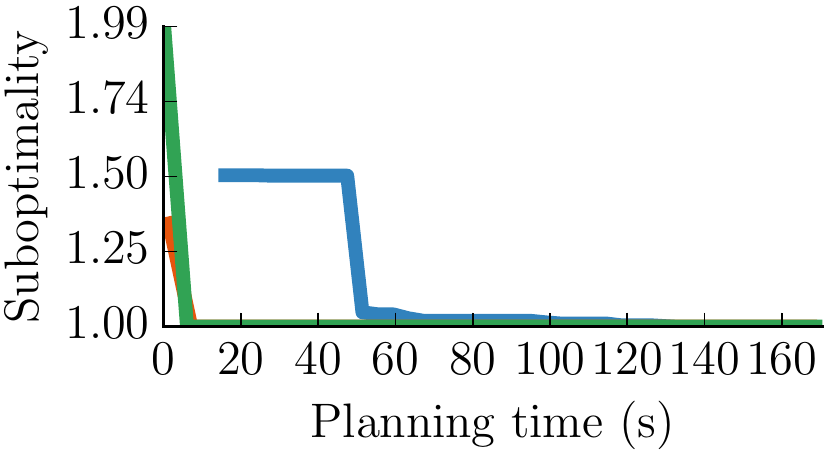}
        \caption{ $\mathbb{R}^{2}$ - Hard}
        \label{fig:results_2d_hard}
    \end{subfigure}

    \begin{subfigure}[b]{0.99\columnwidth}
        \centering
        \includegraphics[width=0.8\textwidth]{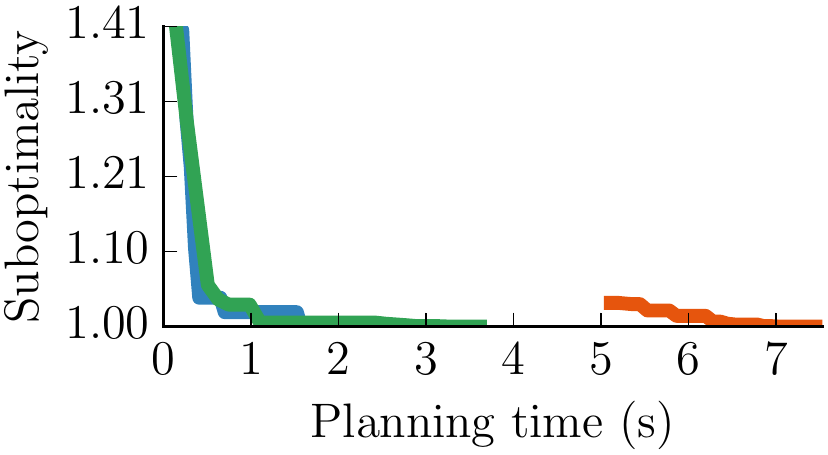}
        \caption{ $\mathbb{R}^{4}$ - Easy}
        \label{fig:results_4d_easy}
    \end{subfigure}
    \begin{subfigure}[b]{0.99\columnwidth}
        \centering
        \includegraphics[width=0.8\textwidth]{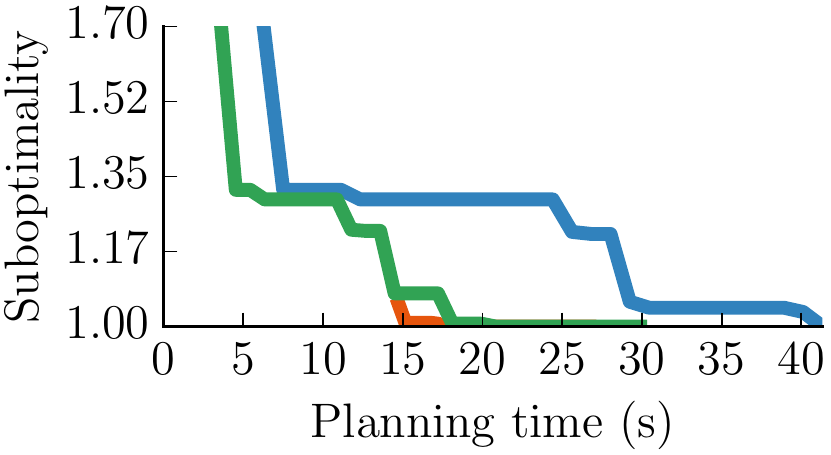}
        \caption{ $\mathbb{R}^{4}$ - Hard}
        \label{fig:results_4d_hard}
    \end{subfigure}

    \caption{ Experimental results in random unit hybercube scenarios for
    \protect\tikz{\protect\node[fill=myblue,draw=black]{};}\; vertex batching
    \protect\tikz{\protect\node[fill=myred,draw=black]{};}\; edge batching, and \protect\tikz{\protect\node[fill=mygreen,draw=black]{};}\; hybrid batching.
    The $y$-axis is the ratio between the length of the path produced by the algorithm and length of $\gamma^*$ (the shortest path
    on $\calG$) for that problem. The naive strategy of searching the complete graph required the following times to find a solution
    - (\subref{fig:results_2d_easy}) $\mathbf{44}$s, (\subref{fig:results_2d_hard}) $\mathbf{200}$s, (\subref{fig:results_4d_easy}) $\mathbf{12}$s and (\subref{fig:results_4d_hard}) $\mathbf{56}$s. In each case this was significantly more than the time for any other strategy to reach the optimum.
    Figure best viewed in color.}

    \label{fig:results_2d_4d}
\end{figure*}

\begin{proposition}
\label{th:minexp}
The search behaviour of POMP on the roadmap generates a sequence of candidate paths that has similar trends
(in terms of path length and collision measure)
to the sequence obtained by searching for paths of minimum expected length on the roadmap.
\end{proposition}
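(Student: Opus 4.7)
The plan is to cast both edge-weight functions in a common convex-combination form and then invoke a bicriteria exchange argument to extract the monotonic $(L,M)$ trend shared by both sequences of candidate paths.

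First, I would rewrite the expected-length edge weight in a directly comparable form. Since $\hat{w_j}^{\beta}(e) = \rho(e)\,w_l(e) + (1-\rho(e))(w_l(e)+\beta) = w_l(e) + \beta(1-\rho(e))$, and an argmin is unchanged by positive scaling, minimizing $\hat{J}^{\beta}$ is equivalent to minimizing the objective with edge weight $\tilde{w}_j^{\beta}(e) = \tfrac{1}{1+\beta} w_l(e) + \tfrac{\beta}{1+\beta}(1-\rho(e))$. Setting $\alpha_\beta = 1/(1+\beta)$, which varies monotonically over $(0,1]$ as $\beta$ decreases from $\infty$ to $0$, this becomes $\tilde{w}_j^{\beta}(e) = \alpha_\beta w_l(e) + (1-\alpha_\beta)(1-\rho(e))$. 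This is structurally identical to $w_j^{\alpha}(e) = \alpha w_l(e) + (1-\alpha)(-\log \rho(e))$ except that the per-edge collision penalty is $1-\rho(e)$ rather than $-\log \rho(e)$.

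Next, I would observe that the two per-edge collision penalties share every qualitative property relevant to the proposition: both are strictly decreasing nonnegative functions of $\rho(e) \in (0,1]$, both vanish at $\rho(e)=1$ (known-free edges), both diverge as $\rho(e) \to 0$ (known-in-collision edges), and they agree to first order near $\rho(e) = 1$. In particular, at the limits $\alpha=1$ (equivalently $\beta=0$) and $\alpha=0$ ($\beta \to \infty$), the two algorithms yield paths of the same kind---a pure shortest path in the first case, and, in the second, a path whose per-edge $\rho$ values are driven upward under a monotonically equivalent criterion.

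Third, I would establish the monotonic trend itself via a standard bicriteria exchange. Under the working assumption of no model updates between searches, let $\gamma(\alpha)$ denote a minimizer of $J^{\alpha}$. For $\alpha_1 < \alpha_2$, combining the optimality inequalities for $\gamma(\alpha_1)$ and $\gamma(\alpha_2)$ on their respective objectives yields, after rearrangement, $L(\gamma(\alpha_2)) \leq L(\gamma(\alpha_1))$ and $M(\gamma(\alpha_2)) \geq M(\gamma(\alpha_1))$. The same argument applied to $\tilde{w}_j^{\beta}$, with $\alpha_\beta$ in place of $\alpha$, shows that the minimum-expected-length sequence has decreasing path length and increasing $\sum_e (1-\rho(e))$ as $\beta$ decreases; since $-\log \rho$ and $1-\rho$ are both strictly monotone functions of $\rho$ that vanish and diverge together, the qualitative trend in $M(\gamma) = -\sum_e \log \rho(e)$ is also upward.

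The main obstacle will be that $1-\rho$ and $-\log \rho$ are not order-equivalent across arbitrary pairs of paths, so the two algorithms can disagree on which specific candidate path to emit at intermediate parameter values; POMP's sequence is therefore not literally a subsequence of the minimum-expected-length sequence or vice versa. I would handle this by stating the proposition precisely as a claim about similar trends---a shared monotonic tradeoff between $L$ and $M$ with matching endpoint behaviour---rather than path-by-path equivalence, which is exactly what the scaled convex-combination form together with the exchange argument delivers.
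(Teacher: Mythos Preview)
Your proposal is correct and follows the same core reduction as the paper: both you and the paper simplify $\hat{w_j}^{\beta}(e)$ to $w_l(e) + \beta(1-\rho(e))$, put it alongside POMP's $w_l(e) + \tfrac{1-\alpha}{\alpha}(-\log\rho(e))$, and observe that the two per-edge penalties are monotone decreasing in $\rho(e)$ with matching endpoint behaviour. Your normalization $\alpha_\beta = 1/(1+\beta)$ is exactly the paper's relation $\beta = (1-\alpha)/\alpha$ written the other way.

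Where you differ is in rigor of the ``trend'' claim. The paper stops at computing the derivatives of $-\log\rho$ and $1-\rho$, asserts a per-edge $\alpha\leftrightarrow\beta$ correspondence, and then simply states the weaker conclusion that the two sequences are ``similar (though not equal).'' You instead supply the standard bicriteria exchange argument to actually \emph{prove} that $L(\gamma(\alpha))$ is nonincreasing and the collision penalty is nondecreasing in $\alpha$ (and likewise in $\alpha_\beta$). That is a genuine strengthening: the paper never derives this monotonicity explicitly. Your final caveat---that per-edge monotone equivalence of $1-\rho$ and $-\log\rho$ does not force the two algorithms to agree path-by-path, so the conclusion is only about a shared monotone tradeoff with matching endpoints---is also more carefully stated than the paper's version. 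The one small wrinkle is your sentence inferring that $M = \sum_e(-\log\rho(e))$ trends upward for the expected-length sequence from the fact that $\sum_e(1-\rho(e))$ does; that inference is not strictly valid across arbitrary paths, but since you immediately retreat to the weaker ``similar trends'' claim, it does no damage.
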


\begin{proof}

By our chosen expected length model,
\begin{multline}
\hat{w_{j}}^{\beta}(e) = \rho({e})w_{l}(e) + \left(1-\rho({e})\right)\left(w_l(e) + \beta\right)
\\ = w_l(e) + \left(1-\rho({e})\right) \beta,
\end{multline}
where $\rho(e)$ is the probability of edge $e$ to be collision-free, using the standard notion of expectation over a single event. 
Compare this to $w_{j}^{\alpha}(e)$ in Eq. \ref{eq:conv_obj}
\begin{align*}
w_{j}^{\alpha}(e) &= \alpha w_{l}(e) + (1-\alpha)w_{m}(e)\\
&\equiv w_{l}(e) + \frac{(1-\alpha)}{\alpha}w_{m}(e) \ \ \text{[equivalent for minimizing]}\\
&= w_l(e) + \frac{(1-\alpha)}{\alpha}\left(-\text{log }\rho(e)\right)
\end{align*}
Consider the relationship between the terms added to $w_l(e)$ in each case - $\left(1-\rho({e})\right) \beta$
and $\frac{(1-\alpha)}{\alpha}\left(-\text{log }\rho(e)\right)$.
For any unevaluated edge (which we care about as expected length is irrelevant for evaluated edges),
$\rho(e) \in (0,1)$. Both functions of $\rho(e)$ decrease monotonically with $\rho(e)$ 
in this interval.
\begin{equation}
\frac{d \left(-\text{log } \rho(e)\right)}{d \left(\rho(e)\right)} = -\frac{1}{\rho(e)} < 0  \ \ \forall \rho(e) \in (0,1)
\end{equation}
\begin{equation}
\frac{d \left(1-\rho(e)\right)}{d \left(\rho(e)\right)} = -1 < 0  \ \ \forall \rho(e) \in (0,1)
\end{equation}
Moreover,
\begin{equation}
\forall \alpha \in (0,1] \ \ \exists \ \beta \ \text{such that} \ 
\frac{1-\alpha}{\alpha}\left(-\text{log }\rho(e)\right) = \beta(1-\rho(e))
\end{equation}
Therefore, $\beta$ varies as $\frac{1-\alpha}{\alpha}$ for $\alpha \in (0,1]$. The corner case of $\alpha = 0$ is handled by
searching only based on $w_m(e)$ in the case of POMP and based on $(1-\rho(e))$ in the case of minimizing expected length.

Of course, in general, we would not compute a different $\beta$ for every edge based on $\rho(e)$. This is why we make a
weaker but useful claim that the sequence of candidate paths selected by POMP, by increasing $\alpha$ continuously from $0$
to $1$, is similar (though not equal) to the sequence obtained by searching for paths of shortest expected length by decreasing
$\beta$ continuously from $\beta \rightarrow \infty$ to $0$. In practice, neither of these variations would be
continuous but rather at some chosen discretization, which would affect the profile of candidate paths in each case, however, we make this assumption for sake of simplicity.
\end{proof}

The $\beta$ parameter represents the penalty factor that the minimum expected length algorithm assigns to additional collision checks. Reducing $\beta$ and increasing $\alpha$ both represent the increasing risk of collision that the respective search algorithms are willing to take while searching for edges that, if collision-free, may potentially lead to shorter paths. It should also be noted that at the stage where $\alpha = 1 \implies \beta = 0$, POMP is equivalent to LazyPRM~\citep{bohlin2000path} which searches for paths based only on their optimistic length.

Even though POMP and the minimum expected length algorithm have similar behaviour, we advocate for 
using the former in practice. The minimum expected length algorithm would begin by prioritizing
low probability of collision and end by prioritizing length. However, it does not necessarily select points on the convex hull
of the Pareto frontier of the path measure plane. We have explained in \sref{sec:cspacebelief-pomp} 
how such behaviour does not represent the kind of length-probability tradeoff that we want.


\section{Experiments}
\label{sec:experiments}

We extensively tested our key ideas, both individually
and as a combined framework for anytime motion planning
on large dense roadmaps. In this section, we discuss the results
of those experiments. All algorithms were implemented in
\texttt{C++} with the OMPL~\citep{SMK12} library, and the manipulation planning examples were run
from a \texttt{Python} interface. All testing was done on an
Ubuntu 14.04 system with a 3.4 GHz processor and 16GB RAM.
The relative performance of the algorithms
is of interest to us rather than the absolute timing data.

\begin{figure*}[t]
    \captionsetup[subfigure]{justification=centering}
    \begin{subfigure}[b]{0.99\columnwidth}
        \centering
        \includegraphics[width=0.7\textwidth]{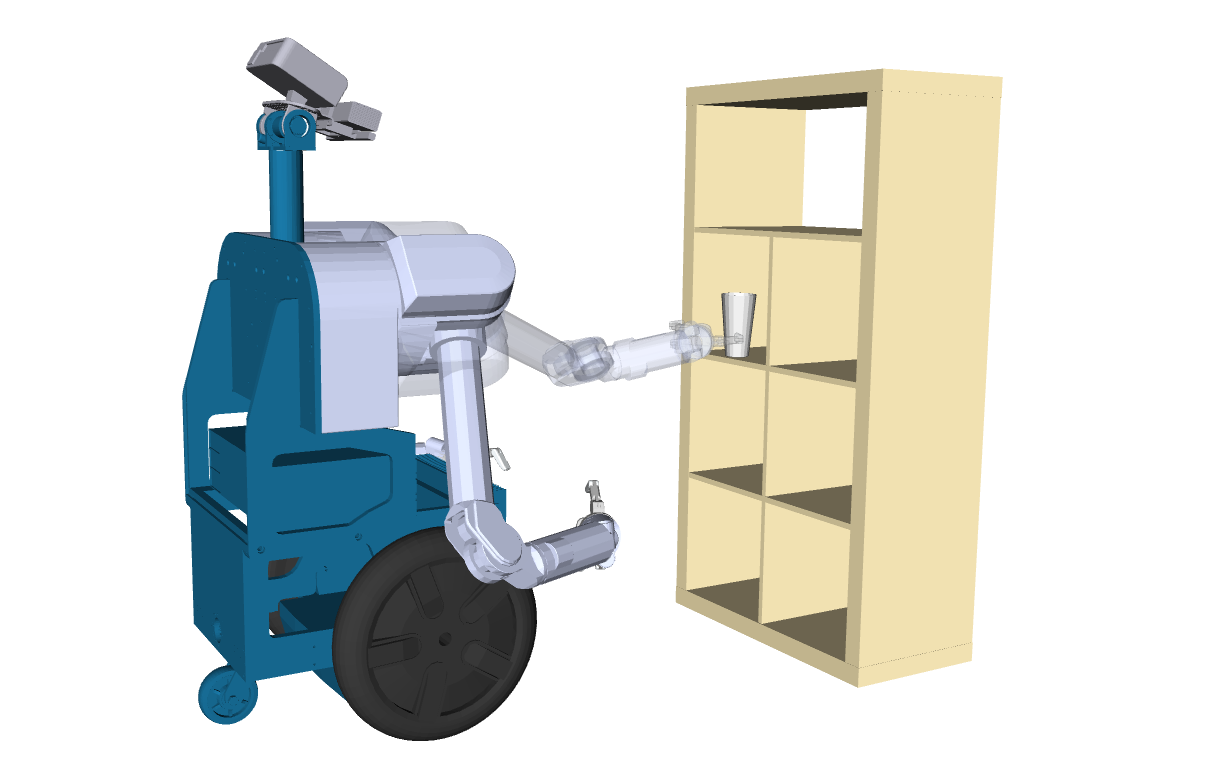}
        \caption{}
        \label{fig:herbprob1}
    \end{subfigure}
    \begin{subfigure}[b]{0.99\columnwidth}
        \centering
        \includegraphics[width=0.8\textwidth]{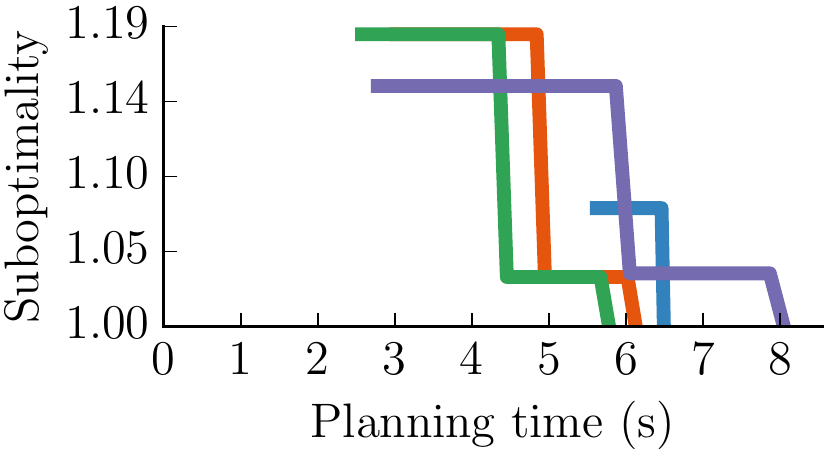}
        \caption{}
        \label{fig:herbplot1}
    \end{subfigure}

    \begin{subfigure}[b]{0.99\columnwidth}
        \centering
        \includegraphics[width=0.7\textwidth]{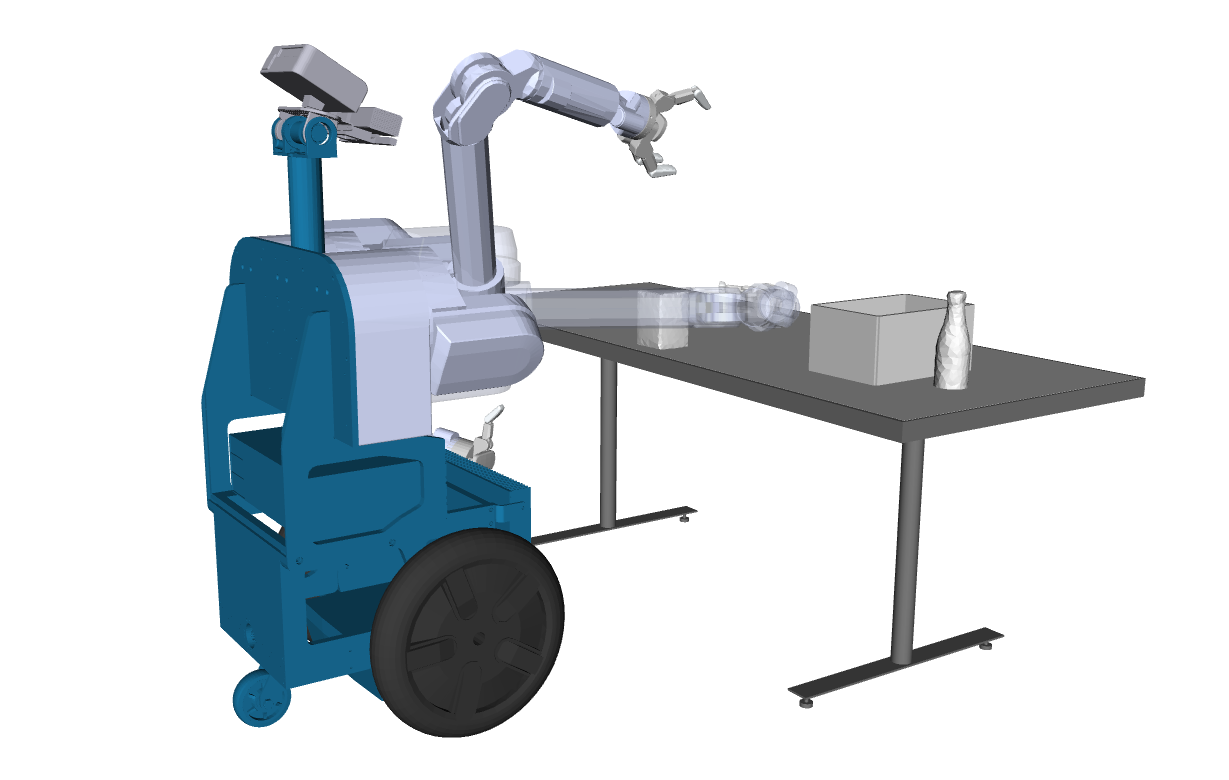}
        \caption{}
        \label{fig:herbprob2}
    \end{subfigure}
    \begin{subfigure}[b]{0.99\columnwidth}
        \centering
        \includegraphics[width=0.8\textwidth]{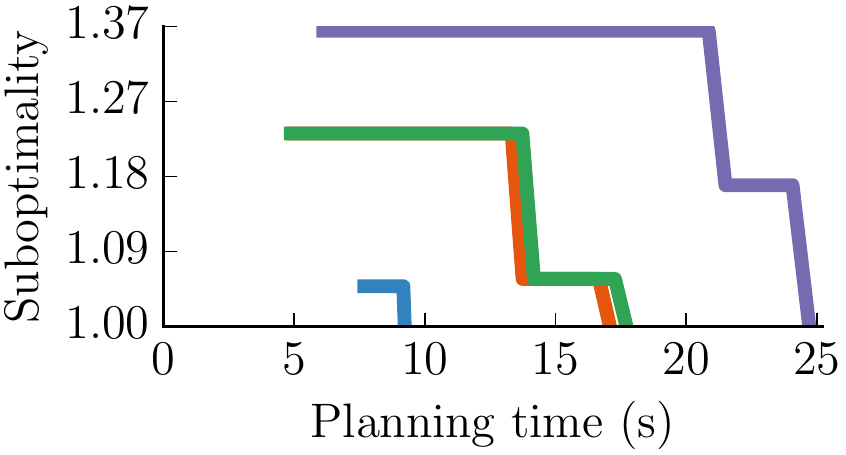}
        \caption{}
        \label{fig:herbplot2}
    \end{subfigure}

    \caption{ We show results on 2 manipulation problems for \protect\tikz{\protect\node[fill=myblue,draw=black]{};}\; vertex batching
    \protect\tikz{\protect\node[fill=myred,draw=black]{};}\; edge batching, \protect\tikz{\protect\node[fill=mygreen,draw=black]{};}\; hybrid batching
    and \protect\tikz{\protect\node[fill=mypurple,draw=black]{};}\; $\text{BIT}^{*}$. For each problem the goal configuration of the right
    arm is rendered translucent. Both of the problems were fairly constrained and non-trivial. The problem depicted in (\subref{fig:herbprob2}) has a 
    large clear area in front of the starting configuration, which may allow for a long edge. This could explain the better performance
    of vertex batching. The naive strategy takes $\mathbf{25}$s for (\subref{fig:herbplot1}) and $\mathbf{44}$s for (\subref{fig:herbplot2}) respectively.}

    \label{fig:results_herb}
\end{figure*}

\subsection{Densification}
\label{sec:experiments-densification}

Our implementations of the various densification strategies were based on the publicly available
OMPL implementation of $\text{BIT}^{*}$~\citep{gammell2014batch}. Other than the specific parameters
and optimizations mentioned earlier, we used the default parameters of $\text{BIT}^{*}$. 
Notably, we used the Euclidean distance heuristic and limited graph pruning to changes in path length greater than 1\%.
For all our densification experiments, we plot the anytime performance of the various algorithms. This
is the curve of solution sub-optimality (with respect to the optimal solution on the roadmap) vs. planning
time to find each solution. Between any two curves demonstrating anytime planning performance, the one more to the lower left indicates better performance.

\subsubsection{Random hypercube scenarios}
\hfill\\
The different batching strategies were compared to each other on problems in $\mathbb{R}^{d}$ for $d =2,4$.
The domain was the unit hypercube $[0,1]^{d}$ while the obstacles were randomly generated axis-aligned
$d$-dimensional hyper-rectangles. 
All problems had a start configuration of $[0.25,0.25,\ldots]$
and a goal configuration of $[0.75,0.75,\ldots]$.
We used the first $n = 10^{4}$ and $n = 10^{5}$ points of the Halton sequence for the~$\mathbb{R}^{2}$ 
and~$\mathbb{R}^{4}$ problems respectively. The roadmaps were all complete, i.e. all edges
between vertices were considered.

Two parameters of the obstacles were varied to approximate the notion of problem hardness
described earlier -- the number of obstacles and the fraction
of $\mathcal{X}$ which is in $\Cobs$, which we denote by $\xi_{\text{obs}}$. 
An easy problem is one which
has fewer obstacles and a smaller value of $\xi_{\text{obs}}$.
The converse is true for a hard problem. 
Specifically, in $\mathbb{R}^{2}$, we had 
easy problems with $100$ obstacles and $\xi_{\text{obs}} = 0.33$, and 
hard problems with $1000$ obstacles and $\xi_{\text{obs}} = 0.75$. 
In $\mathbb{R}^{4}$
we maintained the same values for $\xi_{\text{obs}}$, but used $500$ and $3000$ obstacles for easy and hard problems, respectively. For each problem setting ($\mathbb{R}^{2}$/$\mathbb{R}^{4}$; easy/hard) we generated~$30$ different random
scenarios and evaluated each strategy with the same set of samples on each of them. 
Each random scenario had a different set of solutions, so we show a representative
result for each problem setting in \figref{fig:results_2d_4d}. The majority
of the $30$ results show similar relative behaviour between the three batching
strategies.

For easy problems, vertex batching had better anytime performance than
edge batching and vice versa for hard problems. Hybrid batching performed well
in both problem settings.
These results align with our analysis
of the relative performance of the densification strategies on easy and hard problems.
The naive strategy of searching $\calG$ with $\text{A}^{*}$ directly required considerably more time to report the 
optimum solution than any other strategy. We mention the numbers 
in the accompanying caption of \figref{fig:results_2d_4d} but avoid plotting them so
as not to stretch the figures. The hypercube plots show the reasonable performance of hybrid batching across problems and difficulty levels.

\subsubsection{Manipulation planning problems}
\hfill\\
We also ran simulated experiments on HERB~\citep{srinivasa2010herb}, a mobile manipulator designed and built by the Personal Robotics Lab at Carnegie Mellon University. 
The planning problems were for the right arm, a $7$-DOF Barrett WAM, on the problem
scenarios shown in ~\figref{fig:results_herb}. 
We used a complete roadmap of $10^5$ vertices defined using a Halton sequence~$\calS$ which was generated using the first $7$ prime numbers.
In addition to the batching strategies, we also evaluated the performance
of $\text{BIT}^{*}$, which was forced to use the same set of samples~$\calS$. 
$\text{BIT}^{*}$ has been shown to achieve anytime performance
superior to contemporary anytime algorithms.
The hardness of the problems in terms of clearance
is difficult to visualize because of the high-dimensional C-space.

In our results (~\figref{fig:results_herb}), for both problems, hybrid batching had better
anytime performance than $\text{BIT}^{*}$. Furthermore, no batching method did significantly
worse than $\text{BIT}^{*}$. Of course, these are individual examples that do not admit of any
statistical significance. The purpose of these $2$ experiments was to demonstrate that the batching
strategies we propose perform comparably to the $\text{BIT}^{*}$ strategy on sample robot planning problems.

\begin{figure*}[t]
\captionsetup[subfigure]{justification=centering}
    \begin{subfigure}[b]{0.33\columnwidth}
    \centering
        \includegraphics[width=0.6\columnwidth]{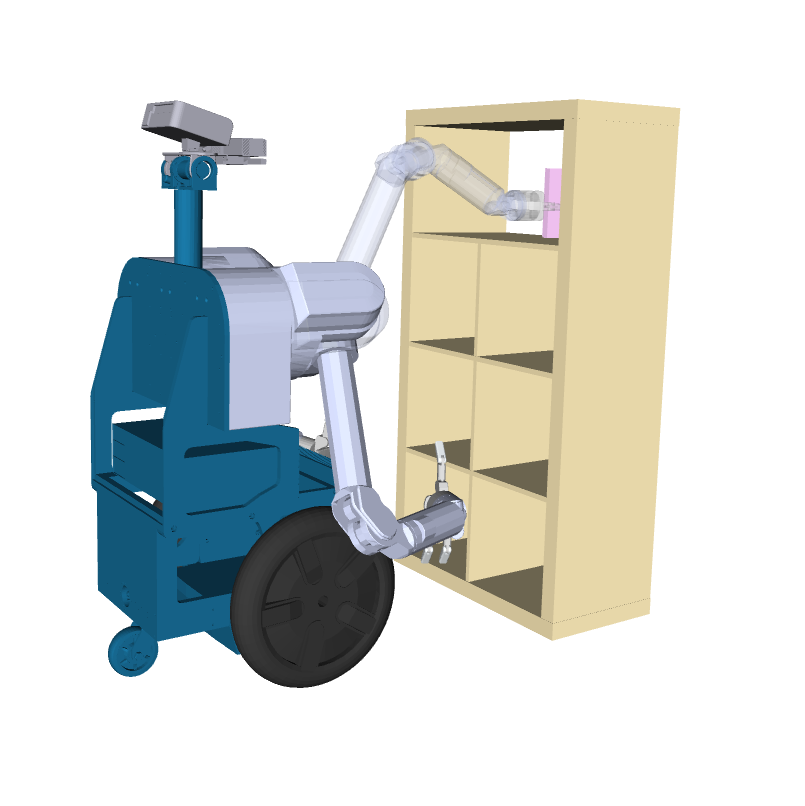}
        \caption{P1}
        \label{fig:prob1}
    \end{subfigure}
    \begin{subfigure}[b]{0.33\columnwidth}
    \centering
        \includegraphics[width=0.6\columnwidth]{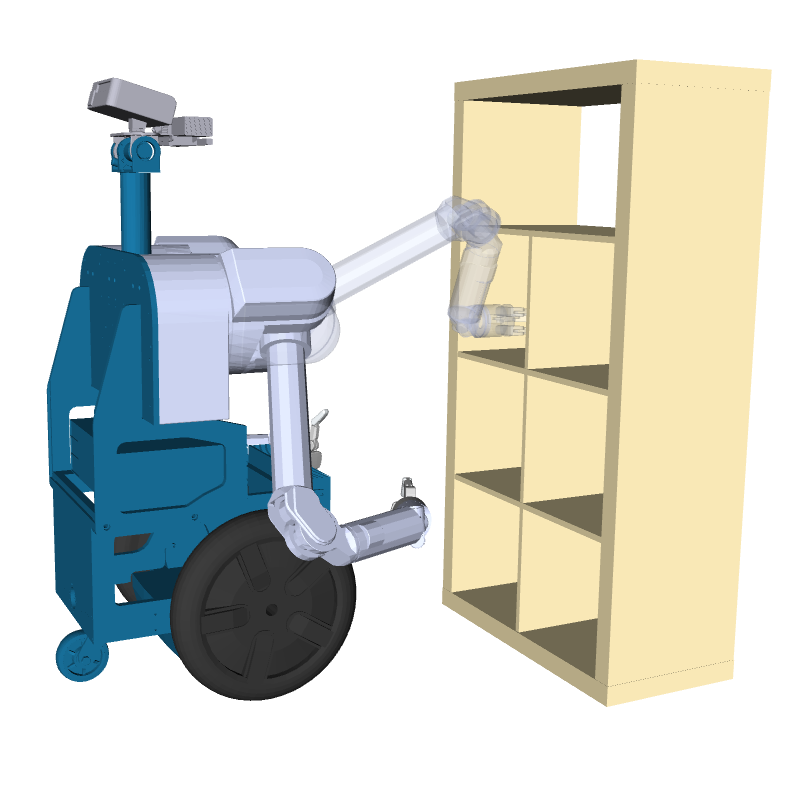}
        \caption{P2}
        \label{fig:prob2}
    \end{subfigure}
    \begin{subfigure}[b]{0.33\columnwidth}
    \centering
        \includegraphics[width=0.6\columnwidth]{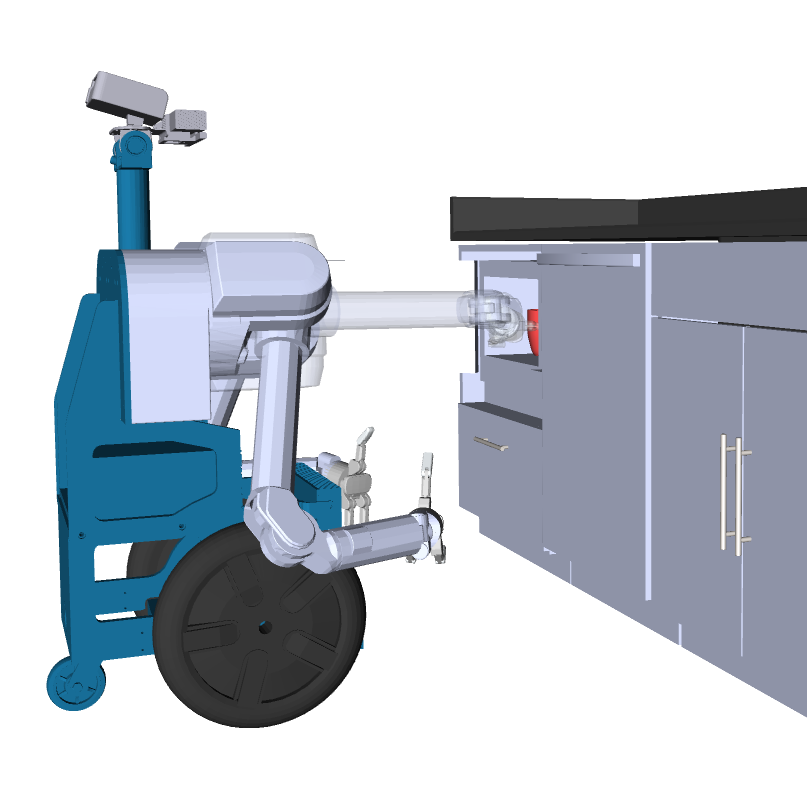}
        \caption{P3}
        \label{fig:prob3}
    \end{subfigure}
    \begin{subfigure}[b]{0.33\columnwidth}
    \centering
        \includegraphics[width=0.7\columnwidth]{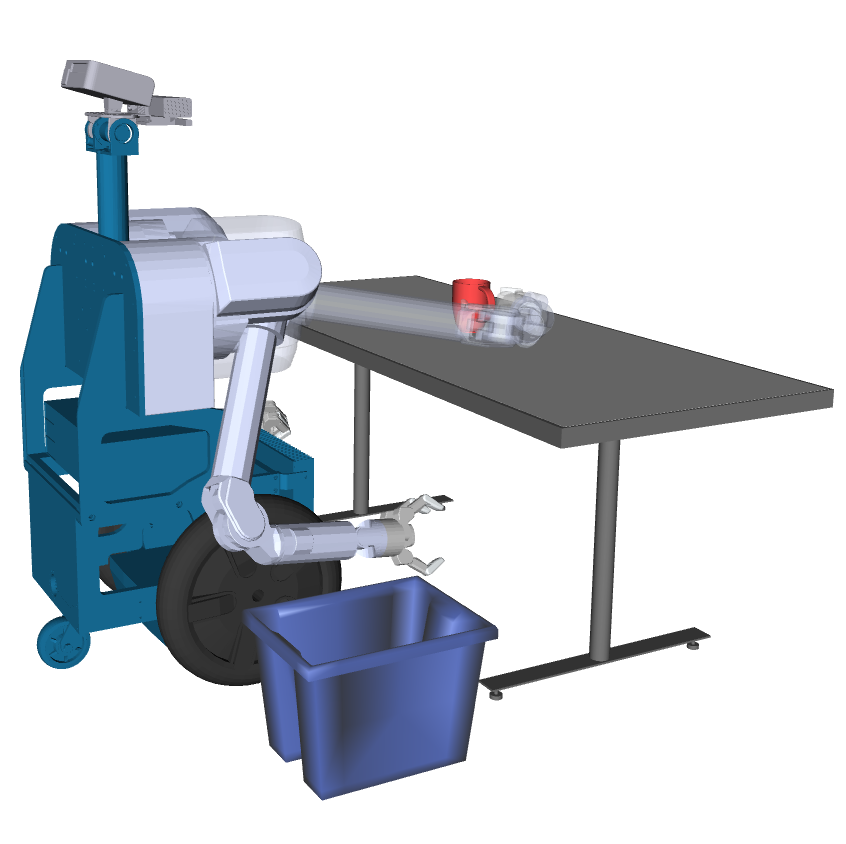}
        \caption{P4}
        \label{fig:prob1}
    \end{subfigure}
    \begin{subfigure}[b]{0.33\columnwidth}
    \centering
        \includegraphics[width=0.7\columnwidth]{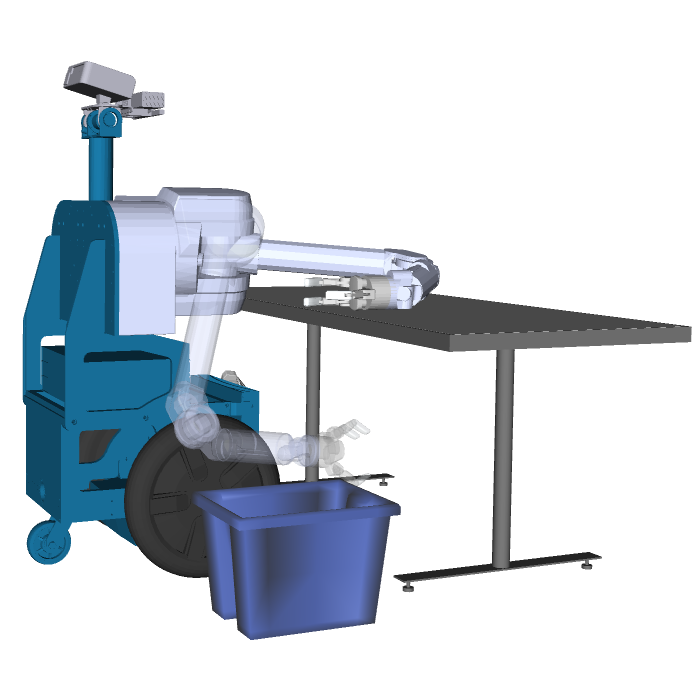}
        \caption{P5}
        \label{fig:prob2}
    \end{subfigure}
    \begin{subfigure}[b]{0.33\columnwidth}
    \centering
        \includegraphics[width=0.7\columnwidth]{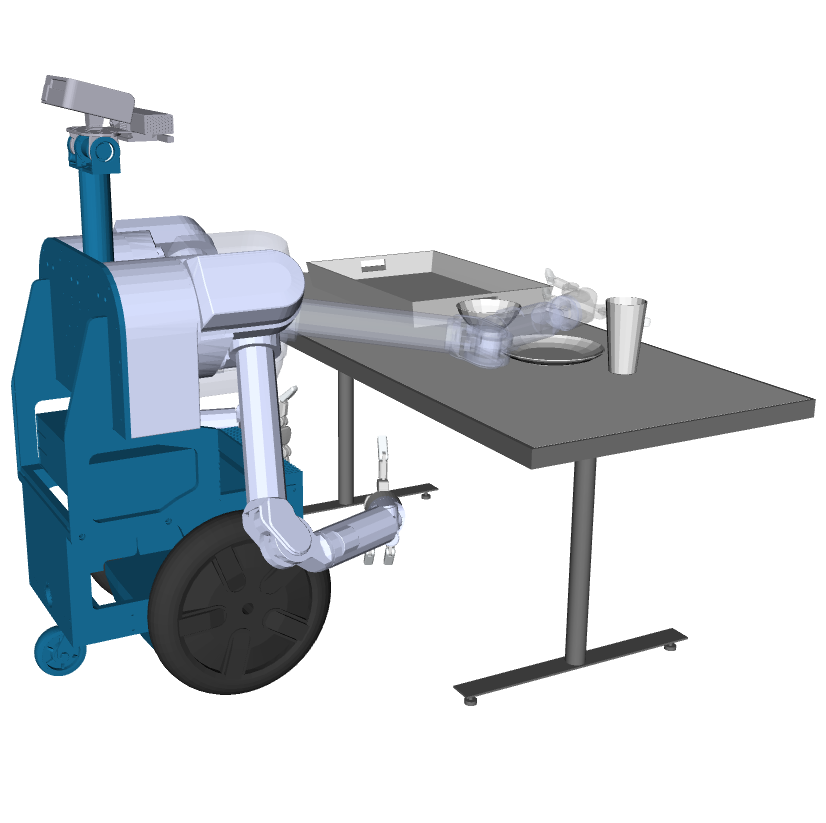}
        \caption{P6}
        \label{fig:prob3}
    \end{subfigure}

    \caption{The test cases we use for our POMP experiments. We name them P1 through P6 for reference. The planning is for the right arm of the robot, which is at the starting configuration in each case. The translucent rendered arm represents the desired goal configuration.}

    \label{fig:all_probs}
\end{figure*}

\begin{figure*}
\captionsetup[subfigure]{justification=centering}
    \begin{subfigure}[b]{0.99\columnwidth}
    \centering
        \includegraphics[width=\textwidth]{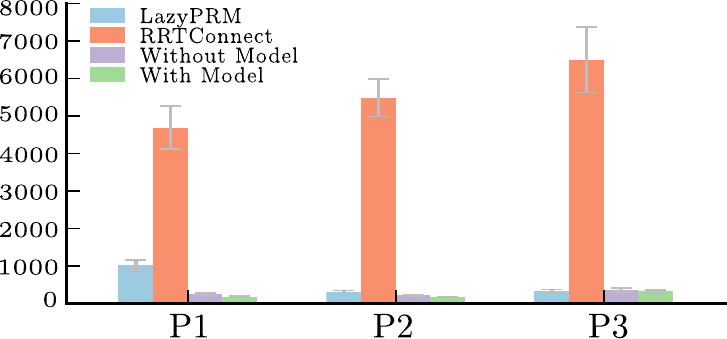}
        \caption{Number of Collision Checks}
        \label{fig:perfhyp-check}
    \end{subfigure}
    \begin{subfigure}[b]{0.99\columnwidth}
    \centering
        \includegraphics[width=\textwidth]{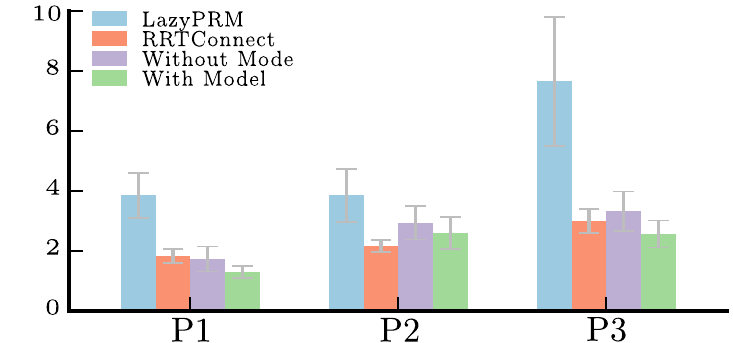}
        \caption{Planning Time (s)}
        \label{fig:perfhyp-time}
    \end{subfigure}
    \caption{ A comparison of our algorithm POMP with LazyPRM and RRTConnect, in terms of the average planning time and collision checks required for computing the first feasible path. POMP requires far fewer checks than RRTConnect, but spends additional time searching and updating the large roadmap.}
    \label{fig:perfhyp}
\end{figure*}

\subsection{POMP}

\begin{figure*}[t]
\captionsetup[subfigure]{justification=centering}
    \begin{subfigure}[b]{0.66\columnwidth}
    \centering
        \includegraphics[width=0.9\columnwidth]{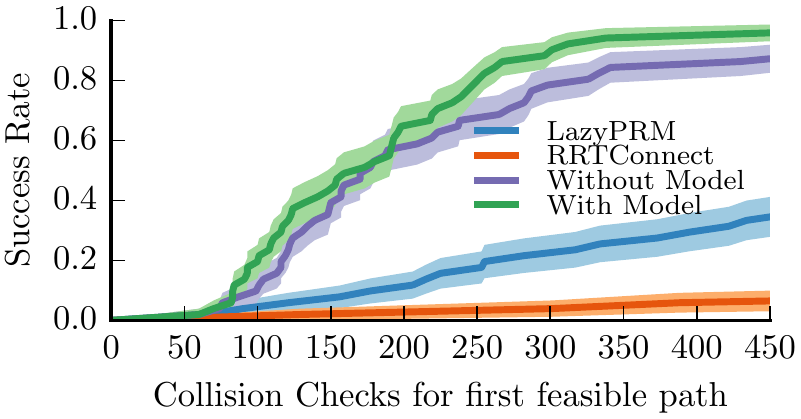}
        \caption{P1}
        \label{fig:jenchecks1}
    \end{subfigure}
    \begin{subfigure}[b]{0.66\columnwidth}
    \centering
        \includegraphics[width=0.9\columnwidth]{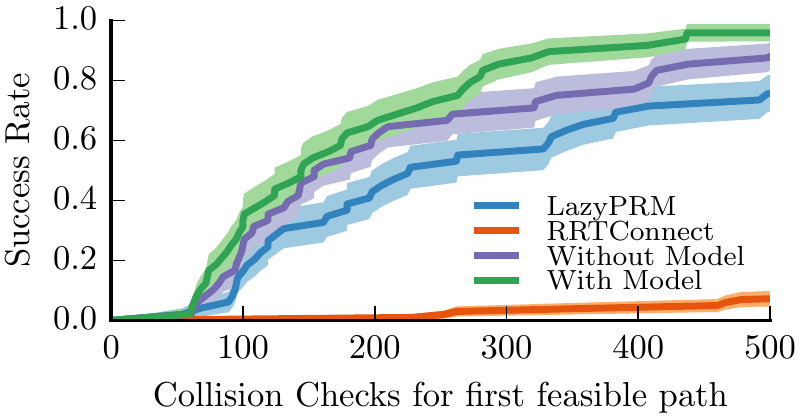}
        \caption{P2}
        \label{fig:jenchecks2}
    \end{subfigure}
    \begin{subfigure}[b]{0.66\columnwidth}
    \centering
        \includegraphics[width=0.9\columnwidth]{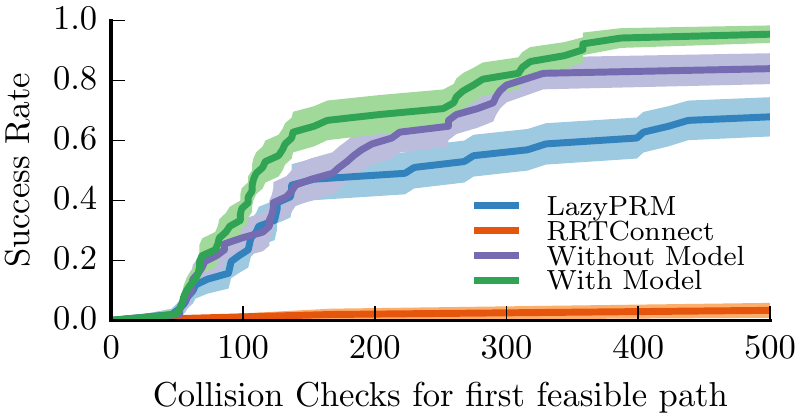}
        \caption{P3}
        \label{fig:jenchecks3}
    \end{subfigure}

    \begin{subfigure}[b]{0.66\columnwidth}
    \centering
        \includegraphics[width=0.9\columnwidth]{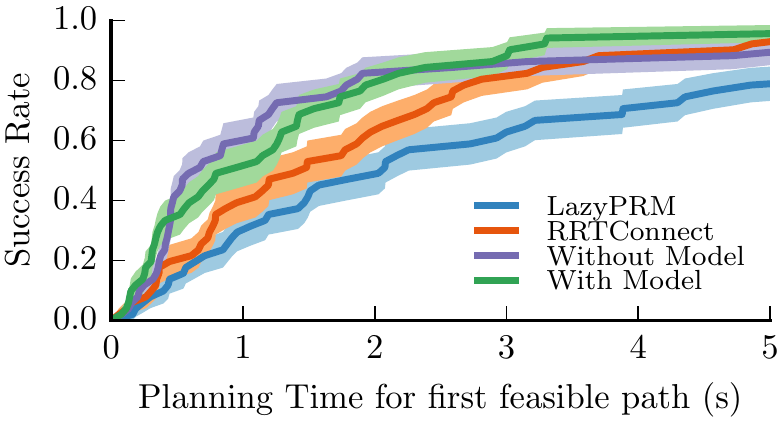}
        \caption{P1}
        \label{fig:jentime1}
    \end{subfigure}
    \begin{subfigure}[b]{0.66\columnwidth}
    \centering
        \includegraphics[width=0.9\columnwidth]{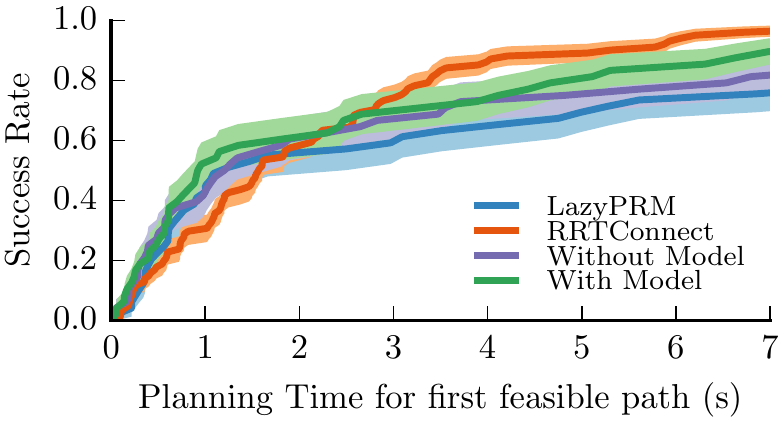}
        \caption{P2}
        \label{fig:jentime2}
    \end{subfigure}
    \begin{subfigure}[b]{0.66\columnwidth}
    \centering
        \includegraphics[width=0.9\columnwidth]{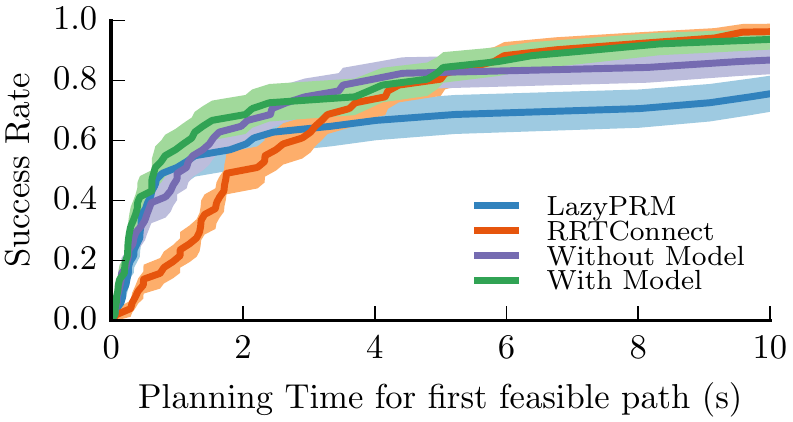}
        \caption{P3}
        \label{fig:jentime3}
    \end{subfigure}
    
    \caption{These plots shows how the collision checks and planning time required varies with the percentage of successful runs for each algorithm. Note that With Model, Without Model and LazyPRM were all run on the same roadmap, and would have all reported a feasible solution if one existed on the roadmap. In each case, the $x$-axis is cut off after all runs of With Model, on roadmaps with feasible solutions, have concluded. RRTConnect would keep searching till it found a solution, and asymptotically its success rate would be 1.}
    \label{fig:jenplots}
\end{figure*}

\begin{figure*}
    \captionsetup[subfigure]{justification=centering}
    \begin{subfigure}[b]{\columnwidth}
        \centering
        \includegraphics[width=0.8\columnwidth]{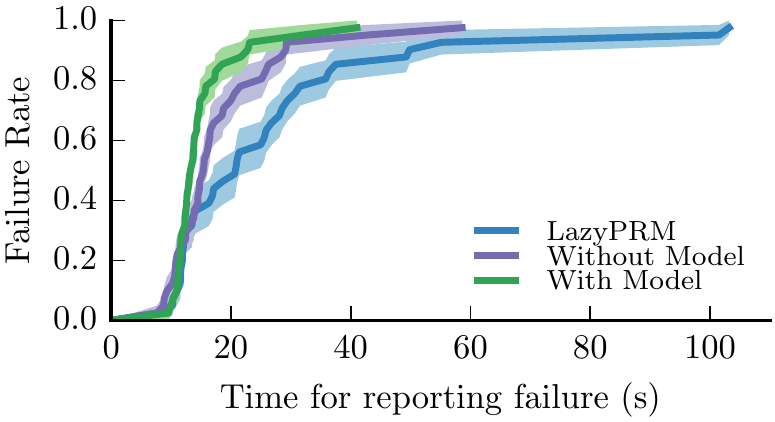}
        \caption{}
        \label{fig:failtime}
    \end{subfigure}
    \begin{subfigure}[b]{\columnwidth}
        \centering
        \includegraphics[width=0.8\columnwidth]{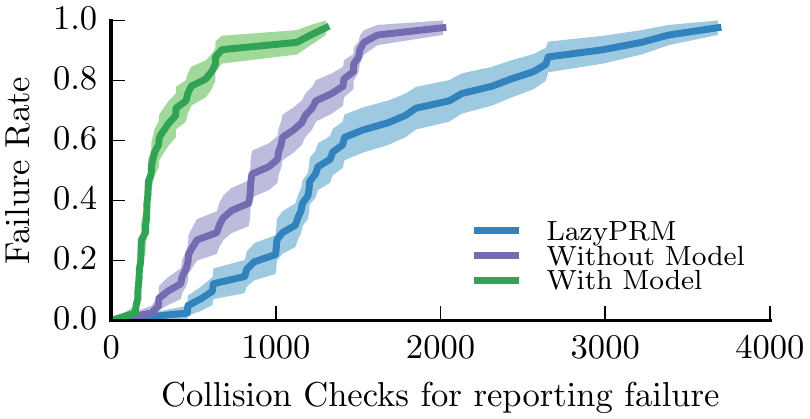}
        \caption{}
        \label{fig:failchecks}
    \end{subfigure}
    
    \caption{POMP with a model reports failure faster than without a model, and LazyPRM,
    for the same roadmaps and problem instances. This is aggregated over all failure cases
    from P1 through P3.}
    \label{fig:failfast}
\end{figure*}

\begin{figure*}
\captionsetup[subfigure]{justification=centering}
    \begin{subfigure}[b]{0.66\columnwidth}
    \centering
        \includegraphics[width=\columnwidth]{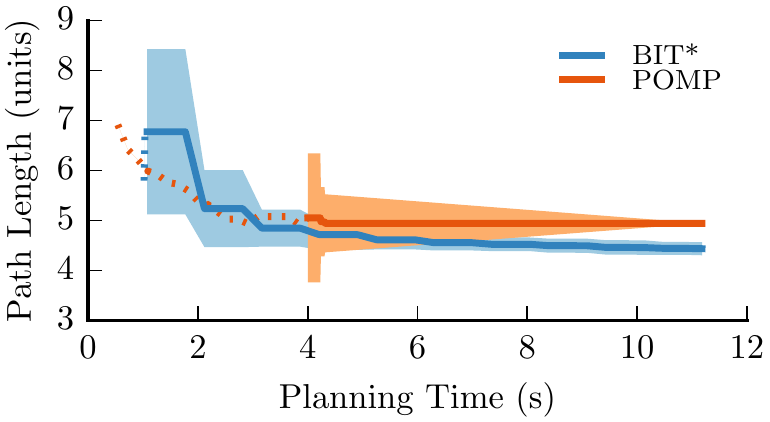}
        \caption{Anytime - P4}
        \label{fig:anytimehyp-p1}
    \end{subfigure}
    \begin{subfigure}[b]{0.66\columnwidth}
    \centering
        \includegraphics[width=\columnwidth]{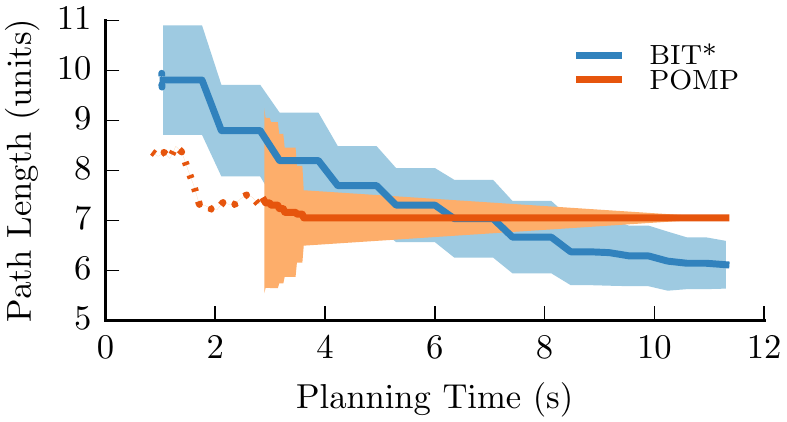}
        \caption{Anytime - P5}
        \label{fig:anytimehyp-p2}
    \end{subfigure}
    \begin{subfigure}[b]{0.66\columnwidth}
    \centering
        \includegraphics[width=\columnwidth]{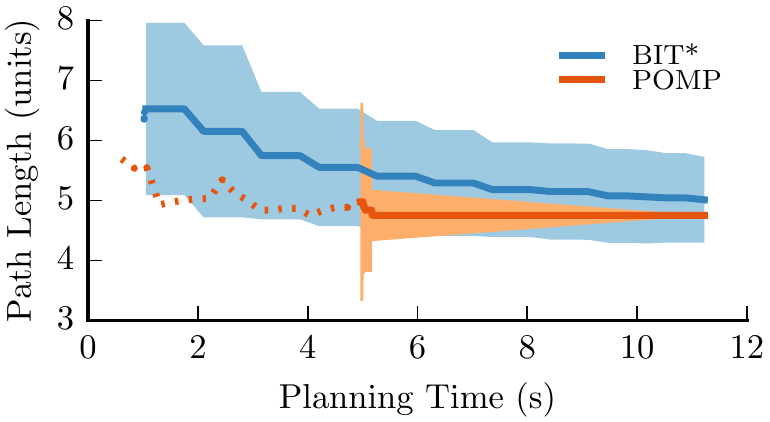}
        \caption{Anytime - P6}
        \label{fig:anytimehyp-p5}
    \end{subfigure}
    \caption{ A comparison of the anytime performance of POMP with that of $\text{BIT}^*$. This was done on 3 separate problems that better demonstrate anytime behaviour than the ones used earlier. The dotted line begins after 50\% of the runs have found a solution. The solid line begins after all runs have found a solution. The flattening of the lines for POMP happens after the final roadmap finds the shortest path, as there is no further scope of improvement.}
    \label{fig:anytimehyp}
\end{figure*}

We evaluated POMP through a number of simulated experiments on HERB. We considered two hypotheses about POMP - the benefit of using and updating a C-space belief model for computing the first solution, and the anytime performance. Our experiments were run on 6 different planning problems for the 7-DOF right arm, shown in \figref{fig:all_probs}. The first three problems - P1, P2, P3 - were used for evaluating the first hypothesis. They have goal configurations with significant visibility constraints. The next three problems - P4, P5, P6 - were used for the second hypothesis. Their goal configurations are less constrained than the first three. Thus they have more feasible solutions and better demonstrate anytime behaviour.

For each problem, we tested POMP over 50 different roadmaps. For each roadmap, the samples were generated from a Halton sequence and the node positions were offset by random amounts. The roadmaps each had approximately 14000 nodes, and the $r$-disk radius for connectivity was $0.3$ radians. This radius induced an average degree of about $16$, which is a little more than $\mathrm{log}_2 14000$.

Using explicit fixed roadmaps allowed us to eliminate all nodes and edges which have configurations in self collision in a pre-processing step, thereby requiring us to only evaluate environmental collisions at runtime. We utilized the same set of default model parameters for each run of POMP -  the $k$ for $k$-NN lookup is $15$, the prior belief is $0.5$, and the weight of this prior $0.25$ and $\alpha$ increases in steps of $0.1$.

\subsubsection{Benefit of C-space belief model for first feasible path}
\hfill\\
We evaluated the planning time and the number of collision checks required to obtain a feasible solution. We compared against the widely used LazyPRM~\citep{bohlin2000path} and RRT-Connect~\citep{kuffner2000rrt}. For RRTConnect, we used the standard OMPL implementation. For LazyPRM, we used the search of POMP with $\alpha = 1$ on the same roadmaps as POMP. We also compared against a variant of POMP that does not use a belief model - it assigns the same probability of collision $\rho(q) = 0.5$ to all unknown configurations and only sets them to 0 or 1 when they are evaluated. This was done to demonstrate the tradeoff between actually updating the model, at some computational expense, and using a prior without any further updates. We name the original `With Model' and the variant `Without Model'.

\figref{fig:perfhyp} shows the average collision checks and planning time to compute the first feasible solution for the various algorithms. This is for those roadmaps that have at least one feasible solution for the problem. A second perspective is shown in \figref{fig:jenplots}, which shows the success rate of the methods with time and checks. This plot considers all of the 50 roadmaps, whether they have a feasible solution or not, and so the success rate of the methods using them (With Model, Without Model, LazyPRM) all have the same upper bound (in the limit). 

The figures show that over all problems, POMP with a belief model achieved superior average-case performance. Furthermore, the length of the first feasible path returned by POMP is better than RRTConnect. For the three problems, the average length of feasible paths computed by POMP was approximately 60\% that of paths computed by RRTConnect. Additionally, for cases where the roadmap had no feasible solution, POMP using a model reported failure more quickly than the variant without a model and LazyPRM (\figref{fig:failfast}).

Observe in Figure \ref{fig:perfhyp} that though RRTConnect had an order of magnitude more collision checks than POMP, the planning time was still comparable. A qualitative breakdown of the timing shows that POMP spent far less time than RRT-Connect actually doing collision checking. However, it also had greater overhead for searching the roadmap for candidate paths and 
updating the collision measure of edges after collision tests. This reaffirms the importance of efficient belief model updates 
to the behaviour and performance of POMP.

\begin{figure*}[t]
    \centering
    \captionsetup[subfigure]{justification=centering}
    \begin{subfigure}[b]{0.66\columnwidth}
        \centering
        \includegraphics[width=\textwidth]{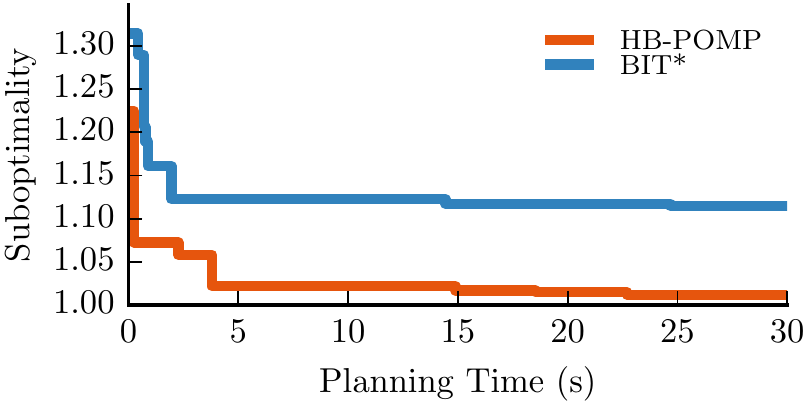}
        \caption{}
        \label{unit-6d-14-ns}
    \end{subfigure}
    \centering
    \begin{subfigure}[b]{0.66\columnwidth}
        \centering
        \includegraphics[width=\textwidth]{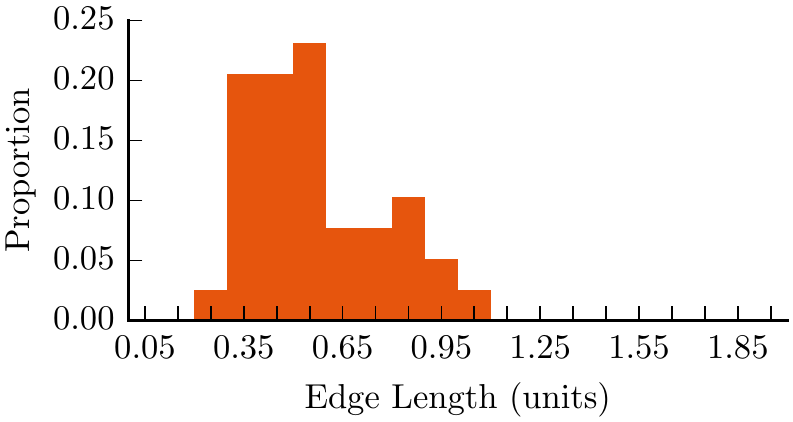}
        \caption{}
        \label{unit-6d-14-hb}
    \end{subfigure}
    \centering
    \begin{subfigure}[b]{0.66\columnwidth}
        \centering
        \includegraphics[width=\textwidth]{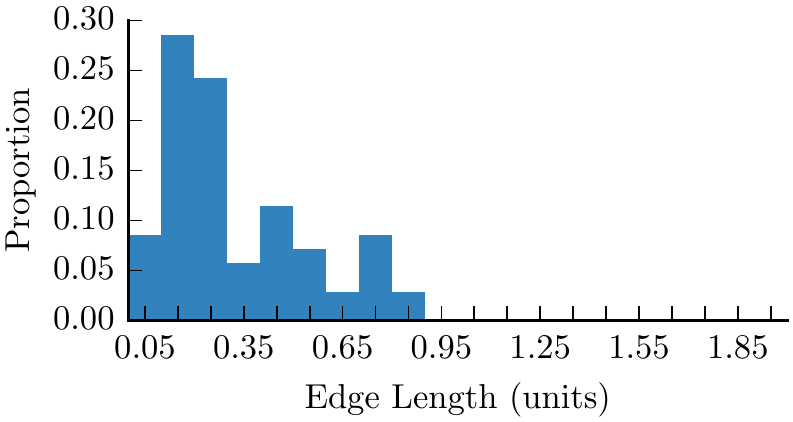}
        \caption{}
        \label{unit-6d-14-ns}
    \end{subfigure}
    \begin{subfigure}[b]{0.66\columnwidth}
        \centering
        \includegraphics[width=\textwidth]{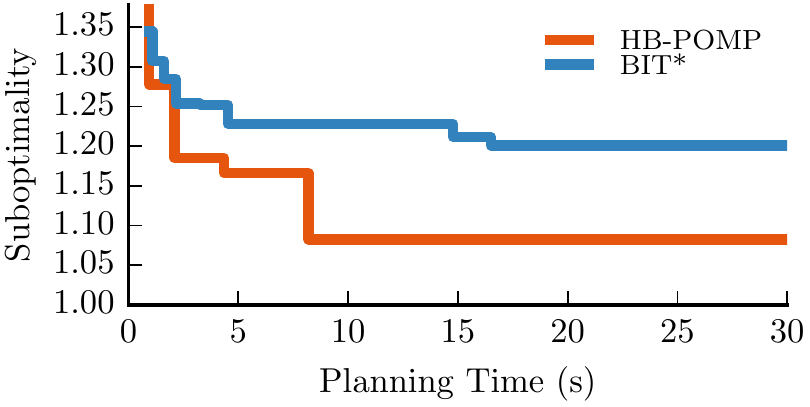}
        \caption{}
        \label{unit-6d-21-ns}
    \end{subfigure}
    \centering
    \begin{subfigure}[b]{0.66\columnwidth}
        \centering
        \includegraphics[width=\textwidth]{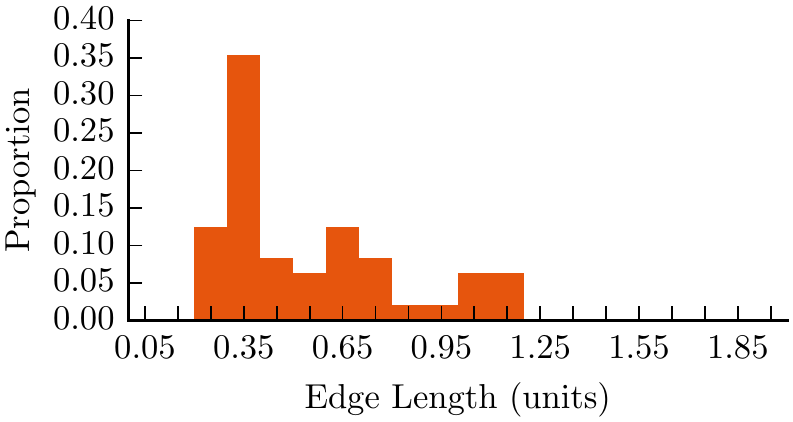}
        \caption{}
        \label{unit-6d-21-hb}
    \end{subfigure}
    \centering
    \begin{subfigure}[b]{0.66\columnwidth}
        \centering
        \includegraphics[width=\textwidth]{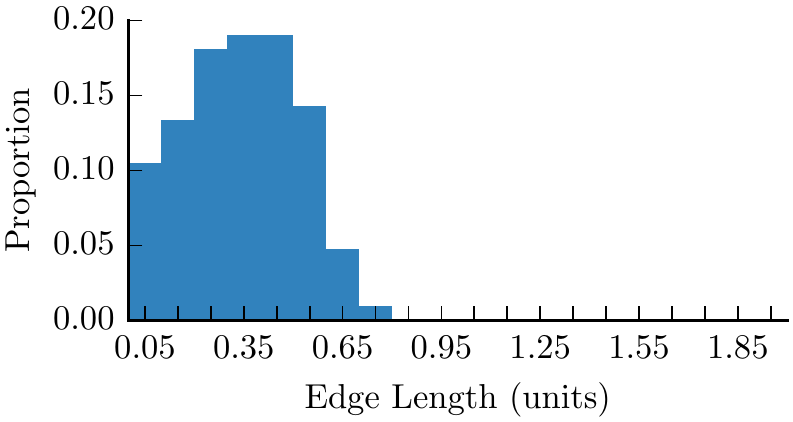}
        \caption{}
        \label{unit-6d-21-ns}
    \end{subfigure}
    \begin{subfigure}[b]{0.66\columnwidth}
        \centering
        \includegraphics[width=\textwidth]{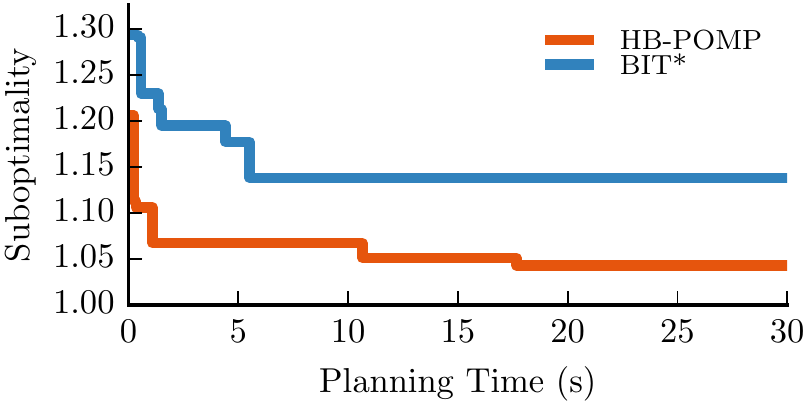}
        \caption{}
        \label{unit-6d-25-ns}
    \end{subfigure}
    \centering
    \begin{subfigure}[b]{0.66\columnwidth}
        \centering
        \includegraphics[width=\textwidth]{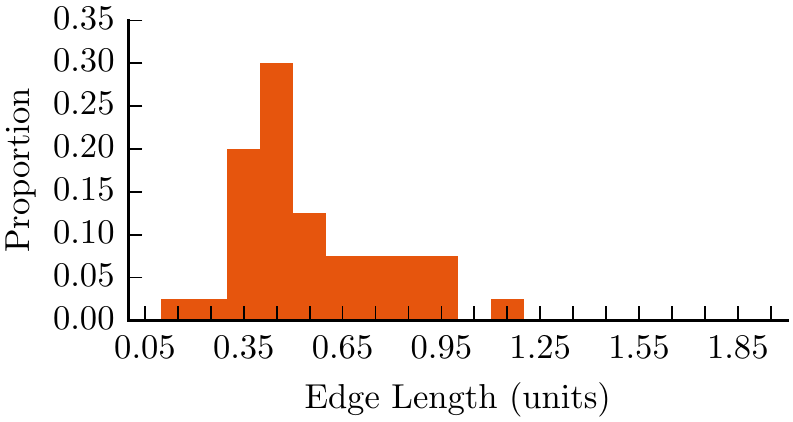}
        \caption{}
        \label{unit-6d-25-hb}
    \end{subfigure}
    \centering
    \begin{subfigure}[b]{0.66\columnwidth}
        \centering
        \includegraphics[width=\textwidth]{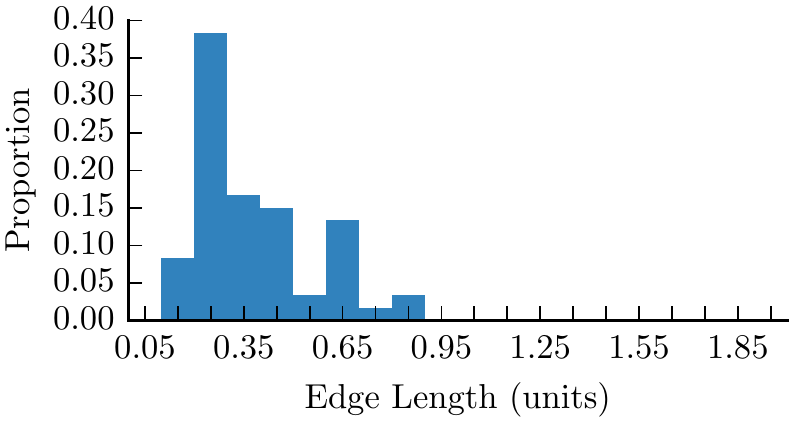}
        \caption{}
        \label{unit-6d-25-ns}
    \end{subfigure}
    \begin{subfigure}[b]{0.66\columnwidth}
        \centering
        \includegraphics[width=\textwidth]{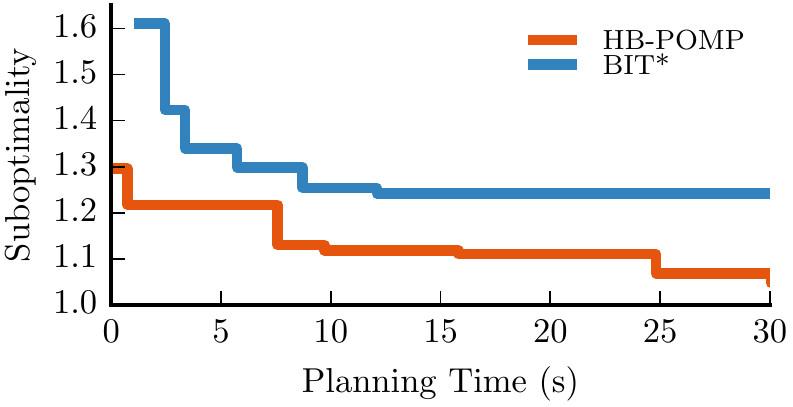}
        \caption{}
        \label{unit-6d-36-ns}
    \end{subfigure}
    \centering
    \begin{subfigure}[b]{0.66\columnwidth}
        \centering
        \includegraphics[width=\textwidth]{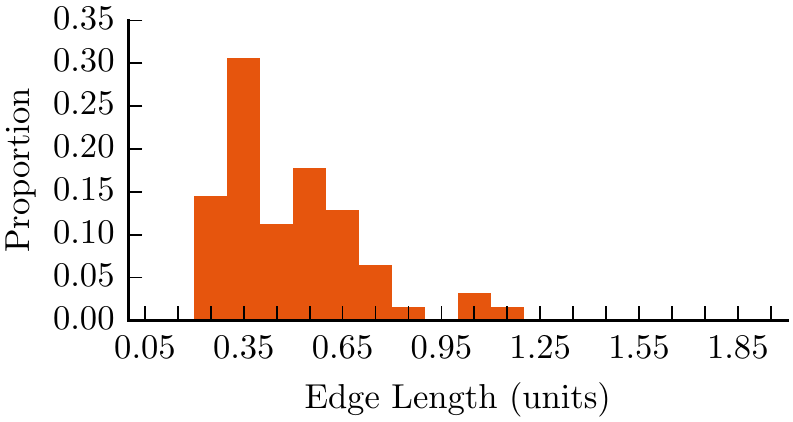}
        \caption{}
        \label{unit-6d-36-hb}
    \end{subfigure}
    \centering
    \begin{subfigure}[b]{0.66\columnwidth}
        \centering
        \includegraphics[width=\textwidth]{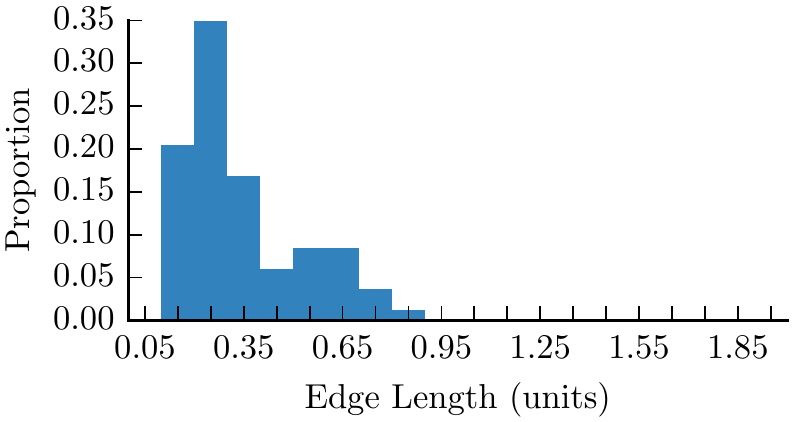}
        \caption{}
        \label{unit-6d-36-ns}
    \end{subfigure}
    \caption{
        Our combined algorithmic framework HB-POMP outperforms $\text{BIT}^{*}$ 
    in anytime planning performance on problems in $\R^6$, as shown in $4$ representative results here.  The histograms
    of edge lengths support the hypothesis that the inability of $\text{BIT}^{*}$ 
    to converge to a solution shorter than hybrid batching (even after $10$
    minutes of computation, not shown here) is partly due to not considering long edges.}
    \label{fig:unit-6d-hbpomp}
\end{figure*}

\subsubsection{Anytime Behaviour}
\hfill\\
We also evaluated the anytime performance of POMP by comparing against $\text{BIT}^*$. We ran tests for 3 different problems P4, P5 and P6 using 50 roadmaps with random offsets for POMP and 50 trials for $\text{BIT}^*$; the corresponding results are in Figure \ref{fig:anytimehyp}. Since POMP works with only the roadmaps provided, without any incremental sampling or rewiring, the path length does not improve once the shortest feasible path has been obtained. The shortest path on the roadmap can be arbitrarily sub-optimal with respect to the asymptotic optimum. $\text{BIT}^*$ adds more samples, however, and can continue to obtain improved paths with time. The results show that POMP has a comparable anytime planning performance.

\subsection{Hybrid Batching with POMP}

Our proposed algorithmic framework uses POMP as the underlying search algorithm along with the densification
strategies for the large dense roadmap.
We compared the anytime performance of the combined algorithm against $\text{BIT}^*$. 
We used hybrid batching in particular in order to have the most general framework (we call it HB-POMP); 
however, if the problem distribution is known to be easy or hard, then we suggest using
vertex batching or edge batching respectively.  An updated and more efficient implementation of both
densification and POMP was used for these experiments.

We have already described our extensive testing of the individual components of the framework, and the results support
our hypotheses about the value of those components. The purpose of this set of experiments is primarily to show that the
overall algorithm of HB-POMP works well
over a range of problems compared to our baseline of $\text{BIT}^*$, which in turn 
has been shown to outperform other anytime planning algorithms~\citep{gammell2014batch}. 
Therefore, we did not do any parameter tuning for HB-POMP - 
the hybrid batching parameters were the same as that used for the densification experiments, and the POMP parameters
were the same as the default set chosen for the earlier POMP experiments. Similarly, for $\text{BIT}^*$ we used
the default set of parameters for all experiments.

For evaluation, we tested on 50 problems in each of the $\R^6$ and $\R^8$ unit hypercubes, 
using large dense roadmaps with $N = 10^5$ vertices. To simulate a range of difficulties,
for each problem we randomly selected the number of obstacles (between 1000 and 5000)
and the $\xi_{\text{obs}}$ parameter (between $0.1$ and $0.5$). The rest of our 
experimental setup was similar to that in~\sref{sec:experiments-densification}.

\begin{figure*}[t]
    \centering
    \captionsetup[subfigure]{justification=centering}
    \begin{subfigure}[b]{0.66\columnwidth}
        \centering
        \includegraphics[width=\textwidth]{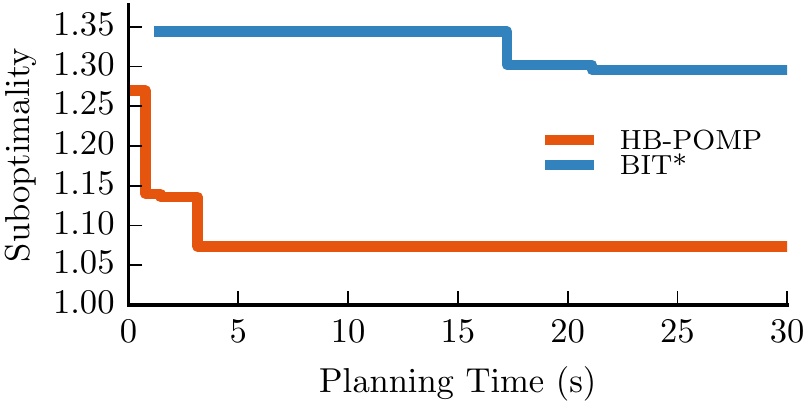}
        \caption{}
        \label{unit-8d-2-ns}
    \end{subfigure}
    \centering
    \begin{subfigure}[b]{0.66\columnwidth}
        \centering
        \includegraphics[width=\textwidth]{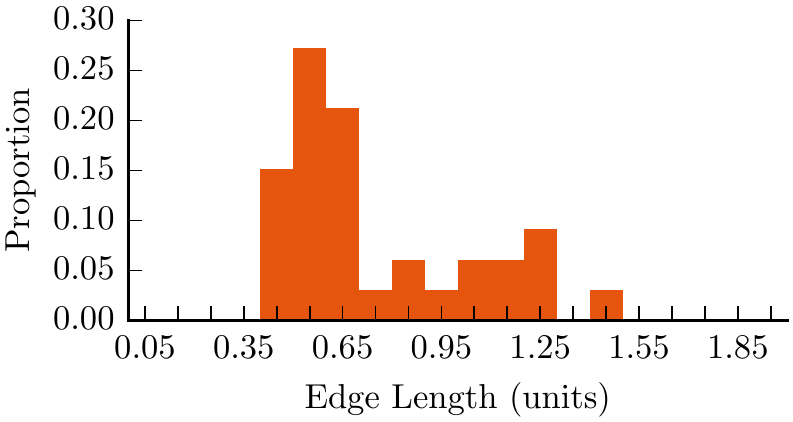}
        \caption{}
        \label{unit-8d-2-hb}
    \end{subfigure}
    \centering
    \begin{subfigure}[b]{0.66\columnwidth}
        \centering
        \includegraphics[width=\textwidth]{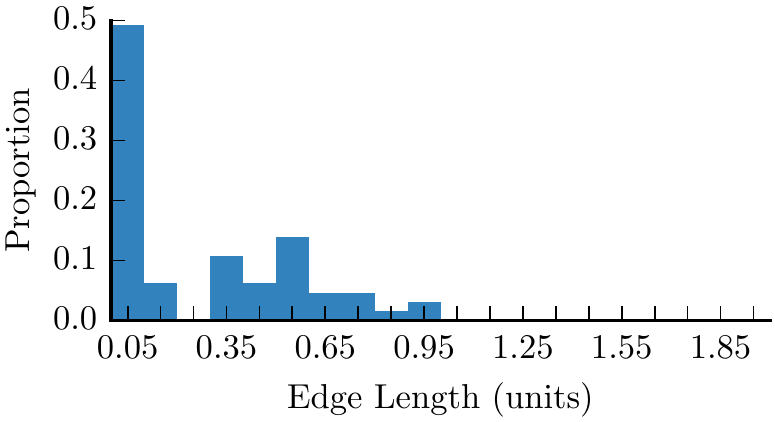}
        \caption{}
        \label{unit-8d-2-ns}
    \end{subfigure}
    \begin{subfigure}[b]{0.66\columnwidth}
        \centering
        \includegraphics[width=\textwidth]{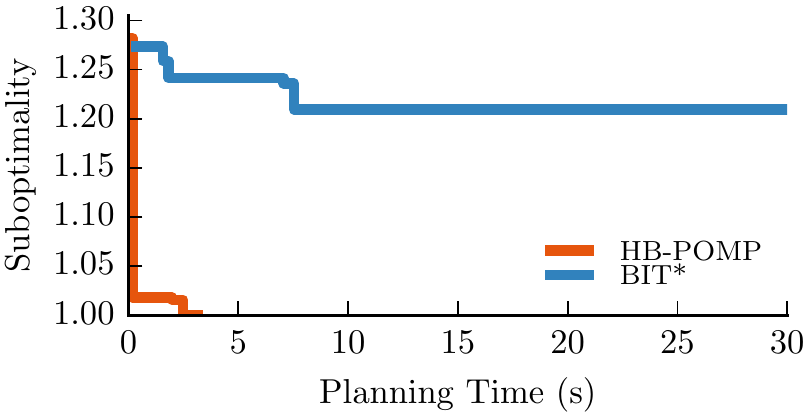}
        \caption{}
        \label{unit-8d-22-ns}
    \end{subfigure}
    \centering
    \begin{subfigure}[b]{0.66\columnwidth}
        \centering
        \includegraphics[width=\textwidth]{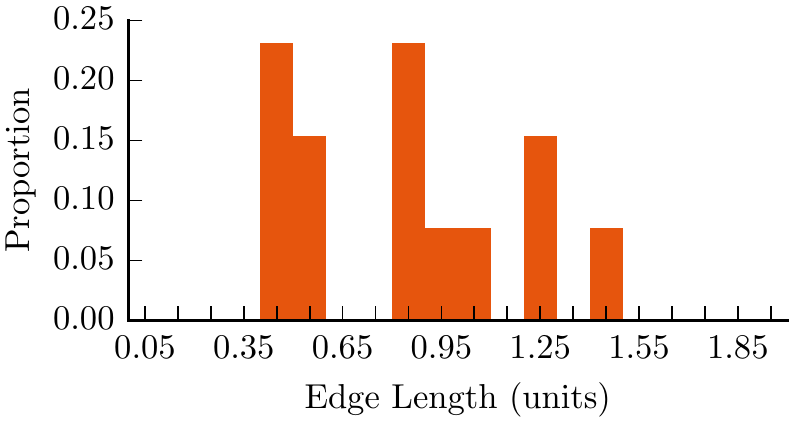}
        \caption{}
        \label{unit-8d-22-hb}
    \end{subfigure}
    \centering
    \begin{subfigure}[b]{0.66\columnwidth}
        \centering
        \includegraphics[width=\textwidth]{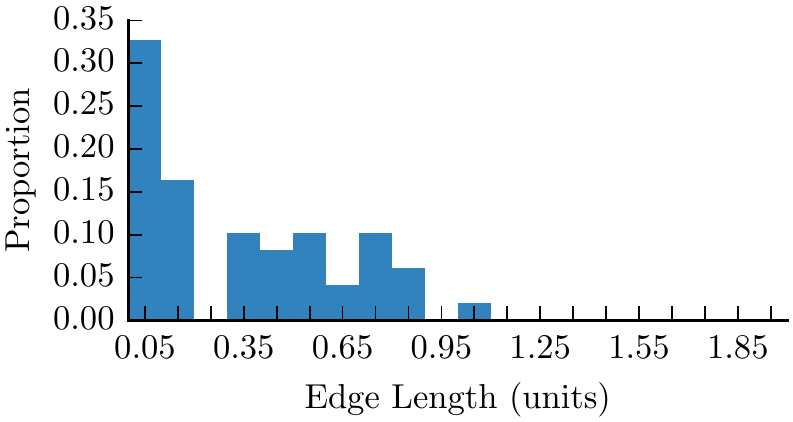}
        \caption{}
        \label{unit-8d-22-ns}
    \end{subfigure}
    \begin{subfigure}[b]{0.66\columnwidth}
        \centering
        \includegraphics[width=\textwidth]{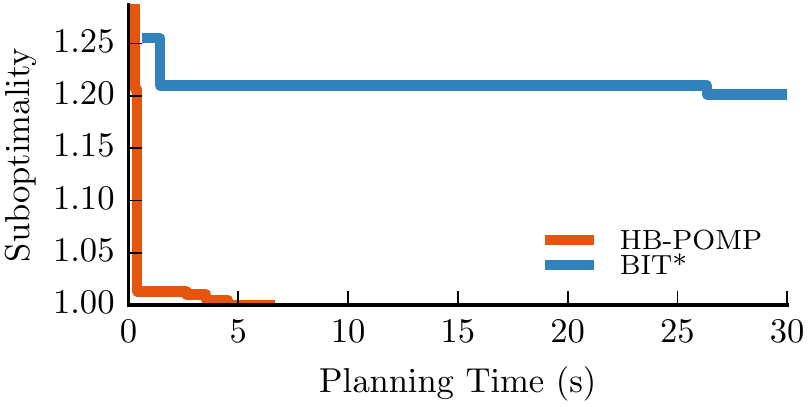}
        \caption{}
        \label{unit-8d-26-ns}
    \end{subfigure}
    \centering
    \begin{subfigure}[b]{0.66\columnwidth}
        \centering
        \includegraphics[width=\textwidth]{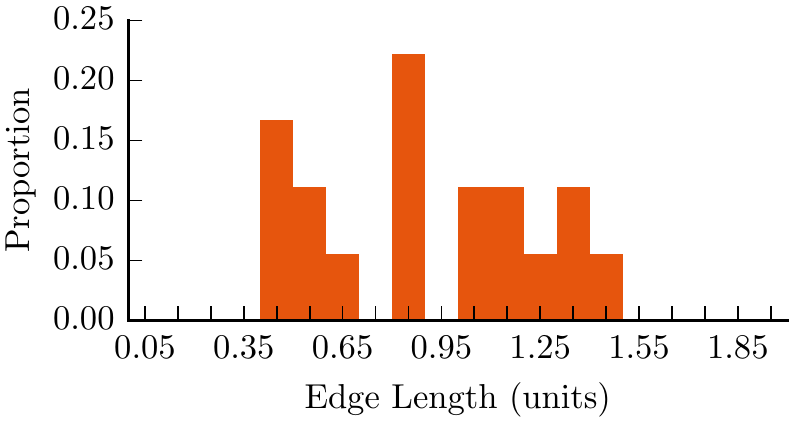}
        \caption{}
        \label{unit-8d-26-hb}
    \end{subfigure}
    \centering
    \begin{subfigure}[b]{0.66\columnwidth}
        \centering
        \includegraphics[width=\textwidth]{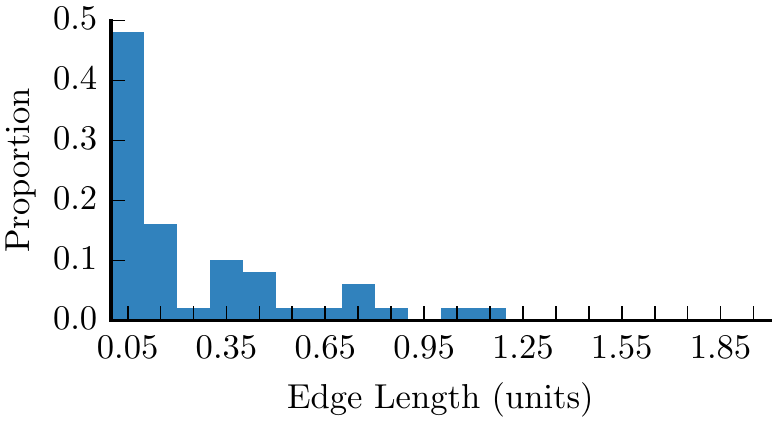}
        \caption{}
        \label{unit-8d-26-ns}
    \end{subfigure}
    \begin{subfigure}[b]{0.66\columnwidth}
        \centering
        \includegraphics[width=\textwidth]{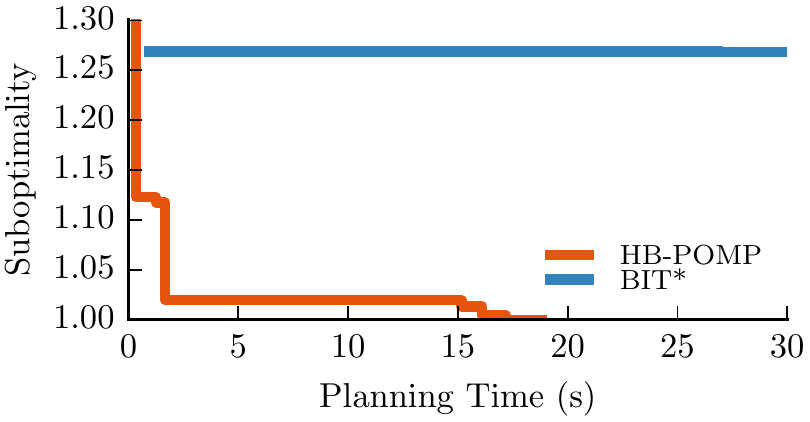}
        \caption{}
        \label{unit-8d-39-ns}
    \end{subfigure}
    \centering
    \begin{subfigure}[b]{0.66\columnwidth}
        \centering
        \includegraphics[width=\textwidth]{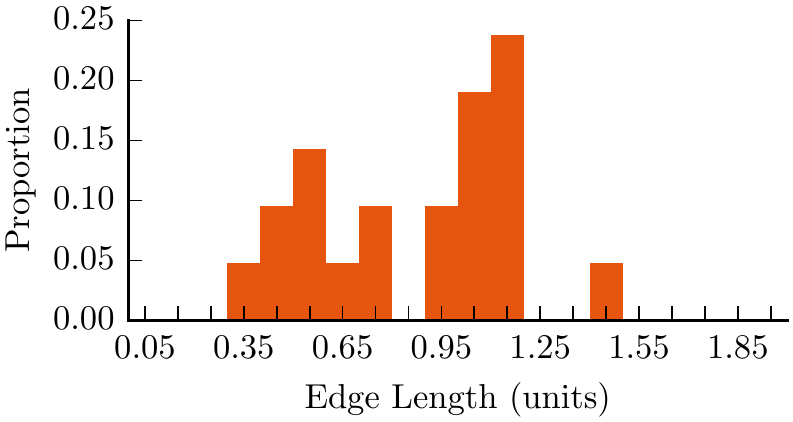}
        \caption{}
        \label{unit-8d-39-hb}
    \end{subfigure}
    \centering
    \begin{subfigure}[b]{0.66\columnwidth}
        \centering
        \includegraphics[width=\textwidth]{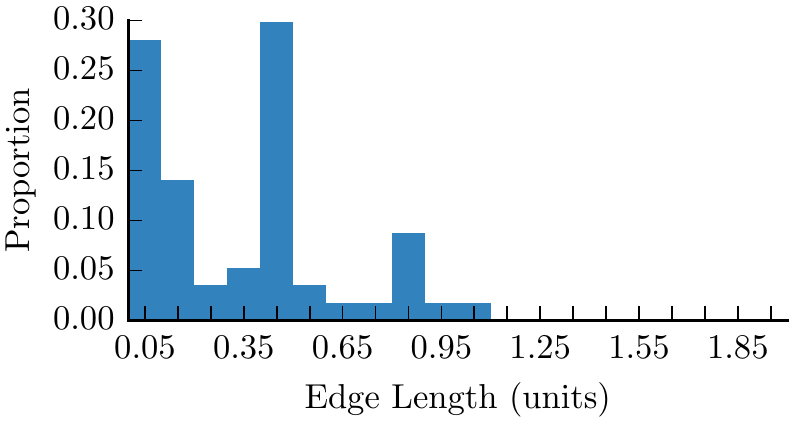}
        \caption{}
        \label{unit-8d-39-ns}
    \end{subfigure}
    \caption{For problems in $\R^8$, the improvement of HB-POMP over $\text{BIT}^{*}$
    in terms of anytime performance is even more pronounced than for $\R^6$ (\figref{fig:unit-6d-hbpomp}).
    Notably, so is the difference in edge length histograms. This supports the hypothesis that the penalty 
    for ignoring longer edges on
    the part of $\text{BIT}^{*}$ increases with increased dimensionality. }
    \label{fig:unit-8d-hbpomp}
\end{figure*}

For each problem, we ran $\text{BIT}^{*}$ for 10 minutes and HB-POMP until termination.
The sub-optimality is with respect to the best solution found by either
algorithm on that problem. In the plots themselves, we show results up to $30$ seconds
to better observe the anytime performance up to a reasonable time frame. 
For the vast majority of the $50$ problems, HB-POMP
had better anytime performance than $\text{BIT}^{*}$. Results of $4$ representative problems for each of $\R^6$ 
and $\R^8$ are shown in \figref{fig:unit-6d-hbpomp} and \figref{fig:unit-8d-hbpomp} respectively.

Despite being asymptotically
optimal, $\text{BIT}^{*}$ failed to obtain a solution shorter than the optimal solution
of HB-POMP even after 10 minutes of computation, on virtually all the problems. Our hypothesis is that since $\text{BIT}^{*}$
keeps shrinking the $r$-disk radius as more batches of samples are added, it is \emph{unable
to utilize long edges} between samples. The asymptotic optimality property states that in the limit,
a long edge can be approximated arbitrarily closely by a series of short edges, but that says nothing
about the approximation in finite time, which may be poor, especially for high-dimensional problems.
To support this hypothesis, for each representative problem we also computed histograms of edge lengths
for edges along the multiple feasible paths obtained by each algorithm on that problem.
We observe that the edge-length histograms for HB-POMP are shifted to the right as compared to those
for $\text{BIT}^{*}$, and this shift becomes more significant in $\R^8$ than in $\R^6$.
These results are in contrast to the common assumption in the sampling-based motion-planning community that one should minimize the length of edges used in roadmaps, i.e. that $r$-disk roadmaps should be used with the minimum possible radius to guarantee connectivity while still achieving asymptotic optimality. They make a strong case for the consideration of long
edges in roadmaps, which we had called out in \sref{sec:intro-motiv}.


\section{Conclusion}
\label{sec:conclusion}

In this paper, we proposed an algorithmic framework for anytime motion planning on
large dense roadmaps with expensive edge evaluations.
We argued for the benefits of using a roadmap that is large and dense, along with
efficient search techniques and lazy evaluation. We cast the problem of anytime
planning on a roadmap as searching for the shortest feasible path over a sequence of 
subgraphs of the roadmap, using some densification strategy. Our analysis obtained
effort-vs-suboptimality bounds over the sequence of subgraphs, in the case where
the roadmap samples are generated using a low-dispersion quasi-random Halton sequence.
We also proposed an algorithm for anytime planning on (reasonably-sized) roadmaps 
called Pareto-Optimal Motion Planner (POMP). It maintains an implicit belief over the configuration 
space and searches for paths which are Pareto-optimal in path length and collision probability.
Our experiments demonstrated the favourable characteristics of the individual ideas
and the good performance of the combined framework.
 
\subsection{Discussion}

While each of our key ideas is important for the anytime motion
planning problem, we advocate particularly for using POMP to organize the search over
the sequence of subgraphs of the roadmap.
POMP benefits from model re-use as more edges are evaluated in each batch,
it is efficient in domains with expensive edge evaluations,
and it creates two-level anytime planning behaviour in the overall
framework, which leads to a better effort-to-quality tradeoff.
When the current batch does not have an improved feasible solution, the
fail-fast nature of POMP makes detecting this very quick.

More generally, we believe that searching over large dense
roadmaps via a densification strategy and an underlying search algorithm that is efficient
with collision checks, achieves the favourable properties required
for anytime planning in domains with expensive edge evaluations.
We obtain an initial
feasible solution quickly with few collision checks, which is pivotal to restricting future work. We
can devise appropriate termination criteria from the explicit effort-suboptimality
tradeoff. By considering long edges we can get good quality solutions in finite time
even for high-dimensional problems.

\subsection{Future Work}

There are a number of interesting questions for future research. A natural extension to the densification
analysis is to provide similar analysis for a sequence of random i.i.d. samples.
When out of the starvation regions we would like to bound the quality obtained similar to the bounds 
provided by~\eref{eq:dispersion_suboptimality}.
A starting point would be to leverage recent results~\citep{DMB15} for Random Geometric Graphs
under expectation, albeit for a \emph{specific} radius $r$.

Another question related to densification is alternative possibilities to traverse the subgraph space of $\calG$.
As depicted in~\figref{fig:ve_batching}, our densification strategies are essentially
ways to traverse this space. 
We discussed three techniques that traverse relevant boundaries of the space. There are, however, innumerable
trajectories that a strategy can follow to reach the complete roadmap at the top right. Our current batching methods
could be compared, both theoretically and practically, to those that
go through the interior of the space.

For our implementation of POMP, the C-space belief model uses a simple but effective 
$k$-nearest neighbour lookup~\citep{pan2013faster}. More sophisticated models, using
Gaussian mixture models~\citep{huh2016learning} or reasoning about the topology of
configuration space from collision checks~\citep{pokorny2016topological}, could
also be applied. The tradeoff between model representation power, efficiency and 
utility as a search heuristic is a relevant question.

The belief model updates and queries for any reasonably sized roadmap are faster than the average
collision check. However, as more and more batches are added, more collision checks
are done and the model becomes more and more informed. Consequently, further updates become more expensive 
even as they potentially become less useful. There are some
interesting information theoretic questions about the utility of model updates as the number
of collision checks grows.

In the roadmap densification regime, at the end of each batch, the samples to be added
for the next batch are already decided beforehand based on the generating sequence. However, 
as the belief model is continuously being updated, it induces a prior
over the samples yet to be added. Whether this prior can be useful while adding
the next batch is another interesting question to explore.


\footnotesize{
\bibliographystyle{abbrv}
\bibliography{main}
}

\end{document}